\newif\ifarXiv         
\newif\ifjournal       
\journalfalse \arXivtrue 

\documentclass[11pt]{article}

\usepackage[small,compact]{titlesec}
\usepackage{amsmath,amssymb,amsfonts,mathabx,setspace,amsbsy}
\usepackage{mathtools}
\usepackage{mathrsfs}
\usepackage{cases}
\usepackage{booktabs,bigstrut}
\usepackage{blkarray}
\usepackage{graphicx,caption,epsfig}
\usepackage{epsfig,wrapfig}
\usepackage[dvipsnames,svgnames]{xcolor}
\usepackage[normalem]{ulem}

\usepackage[
  backend=bibtex,   
  style=numeric,
  doi=false,
  url=false,
  isbn=false,
  eprint=false,
  maxbibnames=99,
  minbibnames=99,
]{biblatex}

\AtEveryBibitem{%
  \clearname{editor}%
  \clearlist{editor}%
  \clearfield{editora}%
  \clearfield{editorb}%
  \clearfield{editorc}%
}

\AtEveryBibitem{%
  \clearlist{publisher}%
  \clearlist{location}%
}

\AtEveryBibitem{%
  \clearfield{langid}%
  \clearfield{language}%
}

\usepackage{authblk}
\usepackage{graphicx,epsfig,wrapfig,caption,epsfig,subcaption,sidecap}
\usepackage{url,color,verbatim}
\usepackage{algorithmicx}
\usepackage[noend]{algpseudocode}
\usepackage{bbm}
\usepackage{enumerate}
\usepackage{enumitem}
\usepackage{booktabs} 

\usepackage[scaled]{helvet}
\usepackage{soul}
\usepackage[T1]{fontenc}
\usepackage{empheq}

\usepackage[ruled,boxed]{algorithm}

\usepackage{tikz}
\usetikzlibrary{positioning}
\usetikzlibrary{arrows,shapes}
\usetikzlibrary{decorations.markings}
\usetikzlibrary{arrows.meta}
\usetikzlibrary{shapes.multipart}
\usetikzlibrary{shapes.geometric}

\usepackage{pgfplots}
\pgfplotsset{compat=1.14}
\usetikzlibrary{calc,shapes.callouts}
\usetikzlibrary{arrows,shapes,trees,positioning}  

\usepackage[customcolors]{hf-tikz}

\usepackage[bookmarks=true, bookmarksnumbered=true, colorlinks=true,   pdfstartview=FitV,
linkcolor=blue, citecolor=blue, urlcolor=blue]{hyperref}

\usepackage{authblk}
\usepackage{mathtools}

\usepackage[english]{babel} 
\usepackage{multirow}

\let\OLDthebibliography\thebibliography
\renewcommand\thebibliography[1]{
  \OLDthebibliography{#1}
  \setlength{\parskip}{0pt}
  \setlength{\itemsep}{3pt}
}

\usepackage[
  textwidth=1in,    
  textsize=tiny,  
  linecolor=blue,      
  bordercolor=blue     
]{todonotes}

\usepackage[
  left=0.8in,
  right=0.8in,  
  top=1in,
  bottom=1in
]{geometry}
\def\ba{\mathbf{a}}
\def\bb{\mathbf{b}}
\def\bc{\mathbf{c}}

\def\be{\mathbf{e}}

\def\bm{\mathbf{m}}

\def\br{\mathbf{r}}

\def\bu{\mathbf{u}}
\def\bv{\mathbf{v}}

\def\bx{\mathbf{x}}
\def\by{\mathbf{y}}

\def\bkappa{\mathbf{\kappa}}

\def\bA{\mathbf{A}}
\def\bB{\mathbf{B}}

\def\bS{\mathbf{S}}

\def\bX{\mathbf{X}}

\def\bZ{\mathbf{Z}}

\def\bb0{\mathbb{0}}
\def\bb1{\mathbb{1}}

\def\calB{\mathcal{B}}

\def\calH{\mathcal{H}}

\def\calK{\mathcal{K}}
\def\calL{\mathcal{L}}

\def\calO{\mathcal{O}}

\def\calQ{\mathcal{Q}}

\def\sA{\mathscr{A}}

\def\E{\mathbb{E}}

\def\R{{\mathbb R}}

\def\bf1{\textbf{1}}

\def\mH{\mathcal{H}}
\def\mL{\mathcal{L}}

\def\mN{\mathcal{N}}

\def\Tr{{\rm Tr}}
\def\Id{{\rm Id}}

\def\Diag{{\rm Diag}}
\def\rank{{\rm rank}}

\newcommand{\IK}{\Phi}

\renewcommand{\top}{{\rm T}}

\def\bkappa{\boldsymbol{\kappa}}

\newcommand{\bPhi}{\boldsymbol{\Phi}}

\newcommand{\bKa}{\boldsymbol{\mathcal{K}}}
\newcommand{\bEta}{\boldsymbol{\eta}}

\newcommand{\norm}[1]{\left\|#1\right\|}        
\newcommand{\abs}[1]{\left|#1\right|}           
\newcommand{\innerp}[1]{\langle{#1}\rangle}     

\newcommand{\argmin}{\operatornamewithlimits{argmin}} 

\newtheorem{theorem}{Theorem}

\newtheorem{assumption}[theorem]{Assumption}

\newtheorem{definition}[theorem]{Definition}
\newtheorem{example}[theorem]{Example}
\newtheorem{lemma}[theorem]{Lemma}
\newtheorem{proposition}[theorem]{Proposition}
\newtheorem{remark}[theorem]{Remark}

\newenvironment{proof}[1][Proof]{\noindent\textbf{#1.} }{\ \rule{0.5em}{0.5em}}

\numberwithin{equation}{section}
\numberwithin{theorem}{section}

\newcommand{\fcommentout}[1]{}

\allowdisplaybreaks[4] 

\title{Learning Multi-type heterogeneous interacting particle systems}
\author[1]{Quanjun Lang\thanks{quanjun.lang@duke.edu; co-first authors}}
\author[2]{Xiong Wang\thanks{xiongwang@ualberta.ca; co-first authors}}
\author[3]{Fei Lu\thanks{feilu@math.jhu.edu}}
\author[3,4]{Mauro Maggioni\thanks{mauromaggionijhu@icloud.com}}
\affil[1]{Department of Mathematics, Duke University, Durham, USA. }
\affil[2]{School of Mathematics, Sun Yat-sen University, Guangzhou, China. }
\affil[3]{Department of Mathematics, Johns Hopkins University, Baltimore, USA. }
\affil[4]{Department of Applied Mathematics and Statistics, Johns Hopkins University, Baltimore, USA. }
\date{}
\begin{document}
\maketitle

\begin{abstract}
We propose a framework for the joint inference of network topology, multi-type interaction kernels, and latent type assignments in heterogeneous interacting particle systems from multi-trajectory data. This learning task is a challenging non-convex mixed-integer optimization problem, which we address through a novel three-stage approach. First, we leverage shared structure across agent interactions to recover a low-rank embedding of the system parameters via matrix sensing. Second, we identify discrete interaction types by clustering within the learned embedding. Third, we recover the network weight matrix and kernel coefficients through matrix factorization and a post-processing refinement. We provide theoretical guarantees with estimation error bounds under a Restricted Isometry Property (RIP) assumption and establish conditions for the exact recovery of interaction types based on cluster separability. Numerical experiments on synthetic datasets, including heterogeneous predator-prey systems, demonstrate that our method yields accurate reconstruction of the underlying dynamics and is robust to noise.
\end{abstract}

\setcounter{tocdepth}{1}
\tableofcontents

\section{Introduction}\label{sec:Intro}
Interacting particle systems (IPS) are widely used to model the collective dynamics of agents interacting through pairwise forces. These models arise in diverse fields such as biology, where they describe cell-cell signaling and collective migration \cite{camleyPhysicalModelsCollective2017, deutschBIOLGCACellularAutomaton2021}; ecology, where they are used to study predator-prey interactions \cite{satulovskyStochasticLatticeGas1994, schinaziPredatorpreyHostparasiteSpatial1997}; physics, where they model granular and molecular systems \cite{baumgartenGeneralConstitutiveModel2019, bellParticlebasedSimulationGranular2005, kamrinAdvancesModelingDense2024}; and the social sciences, where they capture opinion dynamics and crowd behavior \cite{hegselmannOPINIONDYNAMICSBOUNDED}, just to name a few.

In many of these systems, interactions are heterogeneous: different types of agents may respond differently to one another, and the interaction strength may vary across pairs. Such heterogeneity may arise from agent types such as species or roles, from asymmetric relationships such as leaders versus followers, or from context-dependent behavior. To capture these effects, IPS models must be extended to incorporate multiple interaction kernels, selected according to a discrete type matrix that encodes agent-type or pair-type relations.

However, although substantial progress has been made in learning IPSs from trajectory data, prior work has explored only homogeneous systems or special cases of heterogeneous systems. 
Estimation of the interaction kernels for homogeneous IPSs has been studied in~\cite{LZTM19pnas,LMT21,FRT21} for differential equations and in~\cite{messenger2022learning,Messenger2022Lair,LangLu22} for mean-field equations. Inference for heterogeneous systems has been studied in \cite{LMT21_JMLR} for kernel estimation and in \cite{LWLM2024} for joint estimation of the network and interaction kernel with known types. 

In this study, we address the challenging problem of learning heterogeneous systems on networks with multiple interaction types. We consider the general setting where the network topology, the interaction kernels, and the specific type assignments governing pairwise interactions are all unknown. Our goal is to recover these latent structures simultaneously from observed trajectories. This leads to a challenging non-convex mixed-integer optimization problem, which we tackle through a novel three-stage approach involving low-rank matrix sensing, clustering and matrix factorization.


\subsection{Problem statement}
Consider a heterogeneous system of $N$ agents on a network interacting according to the governing equation 
\begin{equation}\label{Eq:ips_K_type}
d {X}^i_t =  
\sum_{j\neq i} a_{ij}\IK_{\kappa_{ij}}(X^j_t-X^i_t) dt +  \sigma d {W}^i_t, \quad i \in [N]\,,
\end{equation}
where $X_t^i\in\R^d$ is the state of the $i$-th agent at time $t$, $[N]:= \{1,\cdots,N\}$, and we write $\sum_{j\neq i}$ for $\sum_{j\in[N]\setminus\{i\}}$. We assume that the \emph{graph matrix} $\ba \in [0, 1]^{N \times N}$ is row-normalized, and determines the network of interactions.
There are $Q$ types of interaction kernels $\{\Phi_1, \dots, \Phi_Q\}$, and the integer-valued entry $\kappa_{ij}\in [Q]$ of the \emph{type matrix} $\bkappa= (\kappa_{ij})_{1\leq i,j\leq N}\in [Q]^{N\times N}$ specifies the type of interaction kernel between agent $i$ and agent $j$. The stochastic force term $\sigma dW_t^i$ with $\{W_t^i\}$ is a standard Brownian motion on $\R^d$ representing the environmental noise; the system is deterministic when $\sigma=0$. In this work we assume that there is a given set of basis functions $\{\psi_k\}$ such that each interaction kernel $\Phi_q\in \calH = \mathrm{span}\{\psi_k\}_{k=1}^K$, so that $\Phi_q$ is a linear combination of the $\psi_k$'s with coefficients $\bc^{(q)} = [c_1^{(q)}, \dots, c_K^{(q)}]^\top \in \R^K$, and therefore the collection of the interaction kernels $(\Phi_1, \dots, \Phi_Q)$ corresponds to a  \emph{coefficient matrix} $\bc$: 
\begin{equation}\label{Def:bc-Mat}
\Phi_q = \sum_{k=1}^{K} c_k^{(q)}\psi_k\,, \quad q\in[Q], \quad  \quad  \bc=\left[\bc^{(1)},\bc^{(2)},\cdots,\bc^{(Q)}\right] \in \R^{K\times Q}\,.
\end{equation} 
In this study, we assume that the basis functions $\{\psi_k\}$ are known, so that the system \eqref{Eq:ips_K_type} depends on $\ba, \bc$ and $\bkappa$.

Our goal is to estimate the unknown parameter triplet $(\mathbf{a}^*, \bc^*, \bkappa^*)$ from observation data consisting of multiple trajectories of the system \eqref{Eq:ips_K_type}. The observed data are in the following form: 
\begin{equation}\label{eq:data}
	\textbf{Data: } \quad \{ \bX_{t_0:t_L}^m, \dot{\bX}_{t_0:t_L}^m\}_{m=1}^M\,,
\end{equation}
where $t_0:t_L$ denotes $(t_0,t_1,t_2,\ldots,t_L)$ with $t_l= l\Delta t$. Here, $\bX_t$ and $\dot{\bX}_{t_l}$ denote the state of the system and the finite-difference approximation of the velocity at time $t_l$:  
\begin{equation}\label{eq:stack_X}
\bX_t := [X^1_t, \cdots, X^N_t]^\top \in \R^{dN}, \quad \dot{\bX}_{t_l}:=(\Delta t)^{-1}(\bX_{t_{l+1}} - \bX_{t_l})\in \R^{dN}. 
\end{equation}
The superscript $m$ indexes independent trajectory samples, with initial conditions $\{\bX^m_{t_0}\}$ sampled independently from $\mu^{\otimes N}$, where $\mu$ is a probability distribution on $\R^d$. 
The regime of interest here is when the number of agents $N$ is moderately large, the total number of interaction kernel types $Q$ remains relatively small compared to $N$, and the hypothesis-space dimension $K$ may be large, potentially scaling with the sample size $M$ of observed trajectories in a nonparametric setting. 

\begin{example}[Heterogeneous interaction kernel systems]\label{Exa:Heter_interaction}
	Consider a one-dimensional system with $N=5$ agents and $Q=3$ types of interactions, namely $\bPhi = \{\Phi_1, \Phi_2, \Phi_3\}$.
	We assume the type matrix is 
	\begin{align}\label{Eg:agentAssign}
	\bkappa^*=\left[\begin{array}{ccccc}
    *   & 2  & 3 & 1 & 1  \\
    1   & *  & 1 & 3 & 2 \\
    2   & 1  & * & 2 & 3\\
    1   & 3  & 3 & * & 3\\
    2   & 1  & 1 & 1 & *
    \end{array}\right]\in\{1, 2, 3\}^{N \times N}\,.
	\end{align}
	where the diagonal elements are irrelevant since we exclude self-interactions. Denote $\br^{i,j}_t:=X^j_t-X^i_t$. The governing equation \eqref{Eq:ips_K_type} can be written explicitly as
	\begin{align*}
		\begin{cases}
			dX_t^1 = \big[a_{12}\Phi_{2}(\br^{1,2}_t)+ a_{13}\Phi_{3}(\br^{1,3}_t)+ a_{14}\Phi_{1}(\br^{1,4}_t)+ a_{15}\Phi_{1}(\br^{1,5}_t)\big] dt + \sigma dW_t^1\,, \\
			dX_t^2 = \big[a_{21}\Phi_{1}(\br^{2,1}_t)+ a_{23}\Phi_{1}(\br^{2,3}_t)+ a_{24}\Phi_{3}(\br^{2,4}_t)+ a_{25}\Phi_{2}(\br^{2,5}_t)\big] dt + \sigma dW_t^2\,, \\
			dX_t^3 = \big[a_{31}\Phi_{2}(\br^{3,1}_t)+ a_{32}\Phi_{1}(\br^{3,2}_t)+ a_{34}\Phi_{2}(\br^{3,4}_t)+ a_{35}\Phi_{3}(\br^{3,5}_t)\big] dt + \sigma dW_t^3\,, \\
			dX_t^4 = \big[a_{41}\Phi_{1}(\br^{4,1}_t)+ a_{42}\Phi_{3}(\br^{4,2}_t)+ a_{43}\Phi_{3}(\br^{4,3}_t)+ a_{45}\Phi_{3}(\br^{4,5}_t)\big] dt + \sigma dW_t^4\,, \\
			dX_t^5 = \big[a_{51}\Phi_{2}(\br^{5,1}_t)+ a_{52}\Phi_{1}(\br^{5,2}_t)+ a_{53}\Phi_{1}(\br^{5,3}_t)+ a_{54}\Phi_{1}(\br^{5,4}_t)\big] dt + \sigma dW_t^5\,,
		\end{cases}
	\end{align*}
	where $\ba$ represents the network structure connecting the agents.

\end{example}

\begin{remark}[Complete networks]\label{rmk:complete_network}
If $\ba = \frac{1}{N-1}\left(\bf1_N \bf1_N^\top - \mathbf{I}_N\right)$, that is, $a_{ij} = \frac{1}{N-1}$ for $i \neq j$ and $a_{ii} = 0$, the system corresponds to an IPS on a \emph{complete network} with $Q$ distinct kernels, where the network interaction magnitude is identical for all particle pairs. 
If $Q = 1$, we recover the IPS on a graph with a single interaction kernel, as studied in \cite{LWLM2024}. 
The graph can be interpreted as encoding interaction magnitudes determined by particle-specific parameters. For example, if particles have different masses, the network matrix can be written as $\ba = \bm \bm^\top$, where $\bm \in \mathbb{R}^N$ contains the masses of the particles. The different types of interactions are natural in practice. 

\end{remark}

\begin{remark}[Kernel heterogeneity]
	Interaction kernels in IPS can be assigned according to different structural rules. 
	In \emph{species-based} heterogeneity, the kernel depends only on the species of the interacting agents. 
	A more general \emph{ordered-pair} structure assigns a distinct kernel to each ordered type pair $(i,j)$ \cite[Sec.~4]{LZTM19pnas}, \cite{LMT21_JMLR}. 
	Alternatively, the kernel may depend solely on the \emph{receiving} particle (\emph{row-wise} dependence) or solely on the \emph{forcing} particle (\emph{column-wise} dependence) \cite[Sec.~4.3]{LWLM2024}. 
	Our work considers the most general setting, with an arbitrary, unknown type matrix $\bkappa$ and unknown $Q$, allowing fully heterogeneous interaction patterns without prior structural constraints.
\end{remark}

\subsection{Overview of the learning framework}\label{sec:overview}
We consider the commonly used estimator obtained by minimizing the loss function of the empirical mean-squared error of the predicted velocities of the particles

\begin{equation} \label{Eq:Estimator_Loss}
\begin{split}
	(\widehat{\ba}, \widehat{\bc}, \widehat{\bkappa}) 
	&= \argmin{} \calL(\ba, \bc,\bkappa), \quad \text{with }
	\calL(\ba, \bc,\bkappa)
	= \frac{1}{N} \sum_{i=1}^N \calL_i(\ba_i, \bc,\bkappa), \quad \text{where }
	\\
 	\mL_i(\ba_{i}, \bc, \bkappa_{i}) 
	&= \frac{1}{LM}\sum_{l=1}^L\sum_{m=1}^M
	\norm{
		\dot X^{i, m}_{t_l}
		- \sum_{j\neq i} a_{ij}
		\sum_{k=1}^{K} c_k^{(\kappa_{ij})} \psi_k(X^{j,m}_{t_l} - X^{i,m}_{t_l})
	}_{\R^{dN}}^2.
\end{split}
\end{equation}

A central challenge is that \eqref{Eq:Estimator_Loss} is a non-convex, mixed-integer optimization problem. The loss is non-convex because the drift term  $\sum_{j\neq i} a_{ij} \sum_{k=1}^{K} c_k^{(\kappa_{ij})}\psi_k (X^j_t-X^i_t) $ depends nonlinearly on the parameters $(\mathbf{a}, \bc, \bkappa)$, while the integer-valued type matrix $\bkappa$ introduces a combinatorial structure that makes direct optimization NP-hard.  As a result, classical linear regression-based methods, either parametric or nonparametric, which are extensively studied in statistics and machine learning (e.g.\ \cite{CuckerSmale02, CZ07book, LZTM19pnas, LMT21_JMLR, Messenger2022Lair}), are typically not applicable here.  Addressing \eqref{Eq:Estimator_Loss} therefore demands specialized non-convex optimization methods that explicitly handle the integer-valued type function.

To tackle the joint estimation problem~\eqref{Eq:Estimator_Loss}, we propose a three-stage approach that separates the estimation of continuous parameters ($\ba, \bc$) from the discrete parameters $\bkappa$. We provide only a high-level overview of the three-stage framework here; the formal definitions of the loss functions, the matrix sensing problem, and the clustering procedure appear in Section~\ref{Sec:Framework}, and algorithmic details are given in Section~\ref{Sec:Alg}. 
The key idea is to reparametrize the interaction structure of each agent $i$ using the matrix variable
\begin{align}
	\bZ_i = \Diag(\ba_i)\bKa_i \bc^\top \in \mathbb{R}^{N \times K}\,, \quad i\in [N], 
	\label{e:bZi}
\end{align}
which encodes how agent $i$ interacts with the rest of the system. Here $\bKa_i:= \bigl( \mathbbm{1}_{\{\kappa_{ij} = q\}} \bigr) \in \{0,1\}^{(N-1) \times Q}$ is the \emph{assignment matrix}; see Example \ref{Example:Type_Mat} and Section \ref{sec:assignmentMat} for further details. With this matrix, the loss function $\calL_i(\ba_i, \bc, \bkappa)$ in \eqref{Eq:Estimator_Loss} can be rewritten as
\begin{align}\label{Def:LossFunc_MatS}
	\calL_i(\bZ_i)&=\frac{1}{LM}\sum_{l,m=1}^{L,M} \norm{\dot{X}_{t_l}^{i, m} -  \innerp{\bB(\bX_{t_l}^m)_i, \bZ_i}_F}^2_{\R^d} \,,
\end{align}
where $\{\bB_i(\bX_{t_l}^m)\}_{m=1}^M$ are sensing matrices related to the trajectory observations and the basis functions, to be defined in \eqref{Def:basis_array}.
The three stages are the following:
\begin{itemize}
\item Stage 1: matrix sensing. We estimate the collection $\{ \bZ_i \}_{i=1}^N$ by solving the matrix sensing problem \eqref{Def:LossFunc_MatS} using the Alternating Least Square (ALS) algorithm, leveraging the fact that each $\bZ_i$ is low-rank and implicitly shares the same global kernel structure through $\bc$. This stage recovers the hidden low-dimensional structure in the dynamics from noisy trajectory data.

\item Stage 2: clustering. We estimate the type matrix $\widehat{\bkappa}$ by applying $K$-means clustering to the normalized nonzero rows of $\{\widehat{\bZ}_i\}_{i=1}^N$, leveraging the geometric separation between the rows of $\bc$, which represent the interactions. If the number of types $Q$ is unknown, it is determined via a separation-based criterion that compares intra- and inter-cluster angles. 

\item Stage 3: matrix decomposition and post-processing. 
We factorize the estimated arrays $\{\widehat{\bZ}_i\}_{i=1}^N$ to obtain the estimated graph matrix $\widehat{\ba}$ and the estimated kernel coefficient matrix $\widehat{\bc}$ by solving a linear system under the row-normalization constraint on $\widehat{\ba}$; this has a negligible computational cost. 
A final, post-processing ALS refinement is performed to enhance robustness to noise. 
\end{itemize}
The resulting estimates $(\widehat{\ba}, \widehat{\bc}, \widehat{\bkappa})$ provide the recovered network topology, interaction kernels, and type assignments governing the dynamics.

We also establish recovery guarantees for the proposed framework. In the first stage, under a Restricted Isometry Property (RIP) assumption on the sensing operator, we prove that the solution to matrix sensing problems \eqref{Def:LossFunc_MatS} achieves near-optimal accuracy in the Frobenius norm, with bounds that scale optimally with the sample size and noise level. In the second stage, we show that when the rows of $\bZ$ corresponding to different interaction types are geometrically well-separated, $K$-means clustering exactly recovers the type assignment $\widehat{\bkappa}$, even when $Q$ is unknown. The details are presented in Section~\ref{Sec:Theory}.

\medskip
\noindent\textbf{Main contributions.}
This work makes the following contributions: 
\begin{enumerate}
    \item \emph{A novel three-stage framework solving the non-convex mixed-integer optimization problem.} We introduce a formulation that reduces the non-convex mixed-integer optimization problem to a low-rank matrix sensing problem followed by geometric clustering, decoupling the continuous and discrete aspects of the mixed-integer optimization, providing a scalable solution. 
    \item \emph{Recovery guarantees.} We prove that, under a Restricted Isometry Property (RIP) assumption on the sensing operator, the ALS estimator of $\{\bZ_i\}_{i = 1}^N$ achieves optimal rates in both Frobenius norms, which in turn ensures accurate graph and kernel recovery. We also establish sufficient conditions, expressed in terms of inter- and intra-cluster angular separation and minimal row norms, under which $K$-means clustering on the estimated kernel-graph embedding $\{\bZ_i\}_{i = 1}^N$ exactly recovers $\bkappa$, even when the number of interaction kernel type $Q$ is unknown.
    \item \emph{Scalable and robust algorithms.} We design algorithms that scale to moderately large $N$ and $K$, include a post-processing ALS refinement to improve noise robustness, and apply equally to sparse or complete networks. 
     We demonstrate these algorithms on synthetic data, including heterogeneous predator-prey systems, and illustrate sharp phase transitions as a function of separation parameters and sample size.
\end{enumerate}

\subsection{Related work}\label{sec:related}

Learning structured interactions in heterogeneous interacting particle systems (IPS) combines inverse problems for dynamical systems, network inference, and unsupervised type (or role) discovery, with applications in biology, ecology, physics, and social sciences.

\paragraph{Interaction kernel learning.}
A substantial body of literature addresses the recovery of interaction kernels from trajectories in \emph{homogeneous} IPS. Strong form nonparametric regression is applied in \cite{LMT21,LZTM19pnas}. Weak-form methods such as WSINDy are robust to noise and irregular sampling and have been extended to multiforce IPS, e.g., attractive--repulsive, alignment, and drag forces in biological systems \cite{Messenger2022Lair}. Other approaches include density estimation followed by sparse regression to recover the interaction law \cite{messenger2022learning,pereira2023robust}.  Mean-field IPS learning has also been addressed in \cite{LangLu21id, LangLu22}.

\paragraph{Graph inference and latent type discovery.}
When interaction kernels depend on specific particle pairs, the resulting ``disparate species'' systems have been studied in \cite{DFJ2022, JLL2021}, with motivation from Stein variational gradient descent models \cite{LLLLL2020, LWNIPS2016}. Methods for recovering the underlying network include adjacency-matrix estimation from time series \cite{gaskin2024inferring}. 
Latent type or role identification has been pursued through model-agnostic, dimension-reduced clustering of trajectories \cite{TanMiles24}, nonparametric tests for heterogeneity in animal groups \cite{SchaerfTM2021Asmf}, and eco-evolutionary game dynamics on community-structured networks \cite{Masuda24Evolutionary}. 
However, these approaches typically address graph recovery, kernel estimation, and type classification \emph{individually}, and rarely provide a unified framework that integrates all three tasks.

\paragraph{Kernel Heterogeneity in IPS}
When heterogeneity in IPS is assumed, different structures may govern how interaction kernels are assigned. 
One common case is \emph{species-based} heterogeneity, where the kernel depends only on the species of the interacting agents; a more general variant is \emph{ordered-pair} heterogeneity, where the kernel depends on the ordered type pair $(i,j)$, leading to $Q = P^2$ possible kernels. In \cite[Section 4]{LZTM19pnas} and \cite{LMT21_JMLR}, the authors consider the inverse problem of learning interaction laws in systems with multiple types of agents \emph{with the ordered-pair structure known in advance}.

Alternatively, the kernel may depend solely on the \emph{receiving} particle (\emph{row-wise} dependence) or solely on the \emph{forcing} particle (\emph{column-wise} dependence), both yielding $Q = P$ kernels; these two cases are discussed in \cite[Sec.~4.3]{LWLM2024}. 
Our work targets the \emph{general} setting with arbitrary, unknown type matrix $\kappa$ and unknown $Q$, allowing fully heterogeneous interaction patterns without prior structural assumptions.

\paragraph{Complete networks and general networks}
As a special case, when the interaction graph is complete (see Remark~\ref{rmk:complete_network}) or otherwise known, substantial prior work has focused on learning the interaction kernel \cite{LMT21, messenger2022learning, Messenger2022Lair}. 
In such simplified settings, it is unsurprising that estimating both the type matrix and kernel coefficients is typically more straightforward. 
We also provide a dedicated algorithm for the complete-graph case (see Appendix~\ref{apdx:complete_networks}), where clustering can be performed directly on the unnormalized rows without first distinguishing between zero and nonzero interactions. In contrast, for general networks, zero entries must be identified prior to clustering. Our framework is designed to accommodate both settings.

\paragraph{Low-rank structure and mixed-integer challenges.}
In our setting, the factorization \eqref{e:bZi} for $\bZ_i$ induces a rank-$Q$ structure in the stacked $\bZ$, enabling a reduction to low-rank \emph{matrix sensing} under RIP conditions \cite{LWLM2024}. 
This connects to low-rank optimization methods \cite{BM2003,BM2005} and theory for matrix sensing with RIP \cite{BahmaniRomberg17a,Chandrasekher2022alternating, GJZ2017, LeeStoger2022, RFP2010}.

\medskip
\noindent\textbf{Summary.}
Prior work has advanced kernel estimation, graph recovery, and heterogeneity detection separately, often under simplifying assumptions (known types, dense graphs). 
We contribute a unified, scalable framework for \emph{joint} recovery of network topology, multi-type interaction kernels, and latent type assignments from noisy trajectory data, with performance guarantees based on RIP and cluster separability. 

\subsection{Notation}
We summarize the notation used throughout the paper here. 

\paragraph{Scalars, vectors, matrices, and norms.}
Scalars are denoted by plain letters (e.g., $a$), vectors and matrices by bold letters (e.g., $\bx$, $\bA$). 
For a vector $\bx \in \R^d$, we use $\|\bx\|$ to denote its Euclidean norm, and occasionally write $\|\bx\|_{\R^d}$ to emphasize the dimension. 
For a scalar $a \in \R$, we write $|a|$ for its absolute value. 
For a matrix $\bA$, we use $\|\bA\|_2$ for the operator $2$-norm, $\|\bA\|_{2,\infty}$ for the $(2,\infty)$-norm, and $\|\bA\|_F$ for the Frobenius norm.

\paragraph{Block structure.}
For matrices with $N(N-1)$ rows, e.g., $\bZ \in \R^{N(N-1)\times K}$, we denote by $\bZ_i \in \R^{(N-1)\times K}$ the $i$-th block of rows, and by $\bZ_{i,j} \in \R^{1\times K}$ the $j$-th row of $\bZ_i$ (bold since it is a vector).
For a square matrix $\ba \in \R^{N\times N}$, we write $\ba_i$ for its $i$-th row, and $a_{ij}$ for its $(i,j)$-th entry (plain since it is a scalar).
Thus, scalars (e.g., $a_{ij}$) are distinguished from block-rows (e.g., $\bZ_{i,j}$) by font and indexing.

\paragraph{Estimates and ground truth.}
True quantities are denoted with a superscript star (e.g., $\bZ^*$, $\ba^*$), while their estimators are denoted with a hat (e.g., $\widehat{\bZ}$, $\widehat{\ba}$).

\begin{table}[]
\begin{center}
\caption{Notations for the indices and matrices}
\begin{tabular}{l|l}
\toprule
\hline
$[N]$, etc.: index set $\{1,\dots,N\}$, etc & $\ba\in \R^{N\times N}$:graph matrix                        \\ 
$i,j\in[N]$: index for agents               & $\bkappa\in [Q]^{N\times N}$: type matrix                    \\ 
$k\in[K]$: index for basis of kernel        & $\bc\in\R^{K\times Q}$: coefficient matrix                  \\ 
$m\in[M]$: index for samples                & $\bKa\in\{0, 1\}^{N(N-1)\times Q}$: assignment mattrix        \\ 
$l\in[L]$: index of time instants           & $\bu \in \R^{N(N-1)\times Q}$:graph-type feature matrix     \\ 
$q\in[Q]$: index for kernel types           & $\bZ\in \R^{N(N-1)\times K}$: kernel-graph embedding matrix \\ 
$\mathbbm{1}_{\{\cdots\}}$: indicate function           & $\bf1_Q\in \R^{Q}$: vector of all ones with length $Q$ \\
\hline 
\bottomrule
\end{tabular}
\end{center}
\end{table}

The paper is organized as follows. We first introduce the learning framework in Section \ref{Sec:Framework}. Then we provide the detailed algorithms in Section \ref{Sec:Alg}, with theoretical guarantee proved in Section \ref{Sec:Theory}. We provide numerical examples in Section \ref{Sec:Numerics} and conclude in Section \ref{Sec:Conclusion}.

\section{Learning Framework}\label{Sec:Framework}
The joint estimation of the parameter triplet $(\ba^*, \bc^*, \bkappa^*)$ yields a non-convex and mixed-integer optimization problem that is NP-hard in the worst case. To address this, we split the problem defined by \eqref{Eq:Estimator_Loss} into a matrix-sensing subproblem followed by a clustering subproblem. This reformulation reveals the structure of the problem and enables tractable relaxations, forming the basis for the algorithms in the next section.

\subsection{Constraints on network, interaction kernels and type assignments}\label{sec:assignmentMat}

We first assume that the row-normalized \emph{graph matrix} $\ba$ is in the admissible set $\mathcal{M}$: 
\begin{equation}\label{Def:aMat_set}
	\mathcal{M} := \bigg\{\ba=[a_{ij}]\in  [0,1]^{N\times N}\!:\,\,\, a_{ii}=0\,\,,\,\, |\ba_{i}|^2=\sum_{j \neq i} a_{ij}^2 = 1 \,,   \,\forall i\in[N]\bigg\}\,.
\end{equation}
This normalization removes the trivial non-uniqueness of $\ba$ caused by rescaling; other row-normalization schemes are also possible (see the discussion in \cite[Section 1.1]{LWLM2024}).

We now present a pivotal reformulation that decouples the kernel coefficients $\bc$ from the type matrix $\bkappa$. Observe that
$c_{k}^{(\kappa_{ij})}
= \sum_{q=1}^Q c_{k}^{(q)}\,\mathbbm{1}_{\{\kappa_{ij}=q\}}$, which suggests the introduction of $N$ binary \emph{assignment matrices} $\{\bKa_i\}_{i=1}^N$:
 \begin{align}\label{Def:Assign_bKa}
	\bKa= [\bKa_1, \cdots, \bKa_N]^\top\,, 
	\quad \text{where }
    \bKa_i  =\bigl(  \bKa_i \bigr)_{jq} = \bigl( \mathbbm{1}_{\{\kappa_{ij} = q\}} \bigr) \in \{0,1\}^{(N-1) \times Q}, 
\end{align}
with the rows ordered to omit the diagonal (i.e., when $i=j$) entries.  
This allows us to express the kernel-type coefficients compactly as $\bigl(c_{k}^{(\kappa_{ij})}\bigr)_{1 \leq k \leq K} = \bKa_i \bc^\top$. This reformulation naturally leads to a matrix sensing problem, which we address in detail in the following section.

Each of the assignment matrices $\{\bKa_i\}$ has several appealing properties: it is sparse, with each row containing a single one and all other entries zero. 
We demonstrate the construction of the assignment matrix in Example \ref{Example:Type_Mat}. 
By introducing the set
\begin{align}\label{Def:Bi_state}
	\calB&=\Big\{B \in   \{0,1\}^{(N-1)\times Q} 
	\text{ and } B \bf1_Q = \bf1_{N-1}  \Big\},
\end{align}
where $\bf1_Q$ and $\bf1_{N-1}$ denote vectors of ones of length $Q$ and $N{-}1$, respectively.  These constraints above mean that $\bKa$ belongs to the admissible set $\calB^N$. 
\definecolor{myRed}{RGB}{217,83,25}
\definecolor{myYellow}{RGB}{237,177,32}

\tikzset{
style green/.style={
set fill color=green!50!lime!60,
set border color=white,
},
style cyan/.style={
set fill color=cyan!90!blue!60,
set border color=black!80!white,
},
style orange/.style={
set fill color=myRed!80!red!60,
set border color=white,
},
style red/.style={
set fill color=myRed!30!red!30!,
set border color=white,
},
style yellow/.style={
set fill color=myYellow!30!,
set border color=white,
},
}

\tikzset{mycalloutstyle/.style={
       rectangle callout, rounded corners, align=center, text width=3.5cm, fill=myRed!30!red!30!,  callout absolute pointer = (18:1.6)
    }
}
\tikzset{mycalloutstyle2/.style={
       rectangle callout, rounded corners, align=center, text width=3.5cm, fill=myYellow!30!,  callout absolute pointer = (-50:-1.2)
    }
}

\begin{example}\label{Example:Type_Mat}
	Consider the same setting in Example \ref{Exa:Heter_interaction} with $N=5$ agents and $Q=3$ types of interactions. Recall the type matrix $\bkappa$ in \eqref{Eg:agentAssign}. Therefore, the compensating assignment matrices $\bKa_1$ and $\bKa_2$ are given as the Figure \ref{Fig:Code-Assign}.
\begin{figure}[htb]
\centering
\begin{tikzpicture}
\draw (0,0)  node (myequation) {$\bkappa = \begin{bmatrix}
    \tikzmarkin[style yellow]{row1} *  & 2  & 3 & 1 & 1 \tikzmarkend{row1} \\
    \tikzmarkin[style red]{row2}1  & *  & 1 & 3 & 2 \tikzmarkend{row2}\\
    2  & 1  & * & 2 & 3\\
    1  & 3  & 3 & * & 3\\
    2  & 1  & 1 & 1 & *
  \end{bmatrix}$};
\draw (myequation.east) ++ (2.5,0) node[mycalloutstyle]{$\bKa_2=\left(\begin{array}{ccc}
    1  & 0  & 0  \\
    1  & 0  & 0  \\
    0  & 0  & 1  \\
    0  & 1  & 0 
  \end{array}\right)$};
\draw (myequation.west) ++ (-2.5,0) node[mycalloutstyle2]{$\bKa_1=\left(\begin{array}{ccc}
    0  & 1  & 0  \\
    0  & 0  & 1  \\
    1  & 0  & 0  \\
    1  & 0  & 0 
  \end{array}\right)$};
\end{tikzpicture}
\caption{From the type matrix $\bkappa$ to assignment matrices $\bKa_1$ and $\bKa_2$. The diagonal element of $\bkappa$ is not important since self-interactions are not allowed.}
\label{Fig:Code-Assign}
\end{figure}
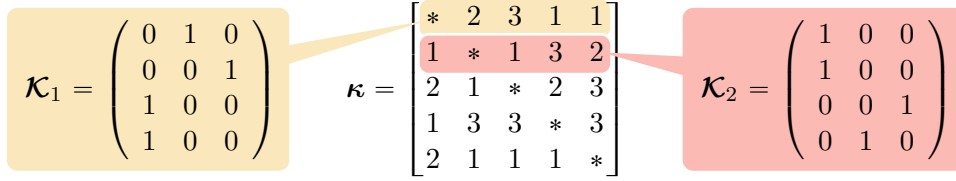
\end{example}

\subsection{Matrix sensing stage}\label{sec:decomposition}
To reformulate the learning problem \eqref{Eq:Estimator_Loss} into a matrix sensing problem, we first transform the drift term into a matrix sensing representation. For the stacked state vector $\bX_t \in \R^{dN}$ in \eqref{eq:stack_X}, we introduce the three dimensional \emph{sensing tensors} $\bB_i(\bX_t)$ built from the basis functions $\{\psi_k\}_{k = 1}^K$: 
\begin{equation}\label{Def:basis_array}
	\begin{aligned}
	\bB_i(\bX_t):= (\bB_i^{jk}(\bX_t) )_{jk} \in \R^{(N-1)\times K\times d}, \quad \bB_i^{jk}(\bX_t)= \psi_k(X_t^j-X_t^i) \in \R^d, 
	\end{aligned}
\end{equation}
where, following the convention, we omit the self-interaction term $j = i$.

We introduce the \emph{graph-type feature matrices} 
\begin{equation}\label{eq:bu_i}
	\bu_i=\Diag(\ba_{i}) \bKa_i\in \R^{(N-1)\times Q}\,, \quad \forall~ i\in[N]
\end{equation}
and the \emph{kernel-graph embedding matrices}
\begin{align}\label{eq:ZUc}
	\bZ_i := \Diag(\ba_{i}) \bKa_i \bc^\top=  \bu_i\bc^\top \in \R^{(N-1)\times K}\,, \quad \forall~ i\in[N]\,,
\end{align}
where $\Diag(\ba_{i}) \in \R^{(N-1)\times (N-1)}$ is the diagonal matrix of the adjacent vector $\ba_{i}$ with the convention removing $j=i$. Using the $i$-th assignment matrix $\bKa_i^{jq} = \mathbbm{1}_{\{\bkappa_{ij}=q\}}$, we can write
\begin{equation*}
	a_{ij} \sum_{q=1}^{Q}  c_{k}^{(q)} \mathbbm{1}_{\{\bkappa_{ij}=q\}}  = a_{ij}\sum_{q = 1}^Q \bKa_i^{jq} \bc^{(q)}_k = \Diag(\ba_{i}) \bKa_i \bc^\top = \bZ_i\,.
\end{equation*}
Together with the above notations, we transform the drift term in the model \eqref{Eq:ips_K_type} as follows, 
\begin{align*}
	 \sum_{j\neq i}  a_{ij}\sum_{k=1}^{K} c_{k}^{(\bkappa_{ij})}\psi_k(X^i_t-X^j_t)  
	&=\sum_{j\neq i} \sum_{k=1}^{K} \left(\sum_{q=1}^{Q} a_{ij}  c_{k}^{(q)} \mathbbm{1}_{\{\bkappa_{ij}=q\}} \right)\psi_k(X^i_t-X^j_t) \\
	&  =\innerp{\bB_i(\bX_t), \bZ_i}_F, \quad \forall i\in [N],
\end{align*} 
where $\innerp{\cdot, \cdot}_F$ represents the Frobenius inner product in $\R^{(N-1)\times K}$ that operates on the first two dimensions of the three-dimensional tensor $\bB_i(\bX_t) \in \R^{(N-1)\times K \times d}$ with the matrix $\bZ_i \in \R^{(N-1)\times K}$, yielding a vector in $\R^d$.

Note that each matrix $\bZ_i$ has low rank: $\mathrm{rank}(\bZ_i)\leq Q(<N)$ since $\bZ_i= \bu_i \bc^\top$ and $\bu_i \in \R^{(N-1)\times Q}$. 
Stacking the matrices in
$	\bZ := [\bZ_1, \cdots, \bZ_N]^\top \in \R^{N(N-1)\times K}
$	and 
$	\bu := [ \bu_1,\cdots, \bu_N	]^\top \in \R^{N(N-1)\times Q} 
$, we notice that 
$$
\bZ = \bu \bc^\top, 
$$
which implies that the matrix $\bZ$ has low rank: $rank(\bZ) \leq Q$.

Therefore, we can split the learning problem in \eqref{Eq:Estimator_Loss} into the matrix sensing subproblem of estimating $\bZ= [\bZ_1, \cdots, \bZ_N]^\top$, followed by the subproblem of factorizing $\bZ_i$ into $(\ba_i, \bKa_i, \bc)$ in \eqref{eq:ZUc}. The matrix sensing subproblem is 
\begin{align}
	&\widehat \bZ = \argmin_{rank(\bZ) \leq  Q\, 
	} \calL(\bZ),\quad \calL(\bZ) = \frac 1 N \sum_{i=1}^N \calL_i(\bZ_i),  \label{eq:matrix_sensing_rank_Q}\\
	&\calL_i(\bZ_i) 
	= \frac{1}{LM}\sum_{l, m = 1}^{L, M}\norm{\dot X_{t_l}^{i, m} - \innerp{\bB_i(\bX_{t_l}^m), \bZ_i}_F}_{\R^d}^2, \label{Def:L_i}
\end{align}
where the \emph{sensing tensors} $\bB_i(\bX_{t_l}^m)$ are defined in \eqref{Def:basis_array}.

We can draw on a rich body of theory and algorithms from the matrix-sensing literature.  On the theoretical side, the Restricted Isometry Property (RIP) conditions ensure that one can recover low-rank matrices via provably convergent optimization routines, even from limited data \cite{BahmaniRomberg17a, Chandrasekher2022alternating, GJZ2017, LeeStoger2022, RFP2010}.  

A variety of solvers may be applied to the matrix-sensing problem \eqref{eq:matrix_sensing_rank_Q}, including nuclear norm minimization, alternating least squares, and singular value thresholding. In particular, knowing the target rank makes the alternating least squares algorithm especially effective in practice.  Alternatively, one can forgo the explicit low-rank constraint and simply apply ordinary least squares, an approach explored in depth in \cite{LWLM2024}.  We defer the full description of these algorithms to Section \ref{Subsec:alg_matrix_sensing}.

\subsection{Clustering stage}\label{sec:clustering}
In the second stage, we recover the kernel types $\bKa$ from the estimated tensor $\widehat{\bZ}$ in \eqref{eq:matrix_sensing_rank_Q} by casting it as a clustering problem.  Recall from \eqref{eq:bu_i} that each $\bu_i$ is a sparse matrix with at most one nonzero entry per row.  Consequently, the rows of $\widehat{\bZ}_i$ can point in at most $Q$ distinct directions (see Example \ref{example-Zi} below). This observation motivates clustering the rows of $\widehat{\bZ}_i$ by their normalized direction vectors.

\begin{example}\label{example-Zi}
	Consider the same setting in Example \ref{Exa:Heter_interaction} with $N=5$ agents and $Q=3$ types of interactions. Recall the type matrix of the first particle, as in Figure \ref{Fig:Code-Assign}; we can multiply it by the adjacency row vector of the first particle $\Diag(\ba_{1})$, so that 
	\begin{equation}\label{Exa2:calK_1}
		\bKa_1=
		\left[
		\begin{array}{ccc}
		    0  & 1  & 0  \\
		    0  & 0  & 1  \\
		    1  & 0  & 0  \\
		    1  & 0  & 0 
		\end{array}
		\right], \quad 
	\bu_1=\Diag(\ba_{1}) \bKa_1 
	= \left[\begin{array}{ccc}
    0  & a_{12}  & 0  \\
    0  & 0  & a_{13}  \\
    a_{14}  & 0  & 0  \\
    a_{15}  & 0  & 0 
 	\end{array}\right]\,,
	\end{equation}
	which is a sparse matrix. 
	Therefore, the true $\bZ_1$ matrix is
	\begin{equation}\label{Exa2:Z_1}
		\bZ_1 =  \bu_1\bc^\top 
		= \begin{bmatrix}
			a_{12} (\bc^{(2)})^\top \\ a_{13} (\bc^{(3)})^\top \\ a_{14} (\bc^{(1)})^\top \\ a_{15} (\bc^{(1)})^\top
		\end{bmatrix}
		=
		\left[
		\begin{array}{cccc}
		    a_{12} c^{(2)}_1 & a_{12} c^{(2)}_2 & \cdots & a_{12} c^{(2)}_K  \\
		    a_{13} c^{(3)}_1 & a_{13} c^{(3)}_2 & \cdots & a_{13} c^{(3)}_K  \\
		    a_{14} c^{(1)}_1 & a_{14} c^{(1)}_2 & \cdots & a_{14} c^{(1)}_K  \\
		    a_{15} c^{(1)}_1 & a_{15} c^{(1)}_2 & \cdots & a_{15} c^{(1)}_K 
		\end{array}
		\right]\,. 
	\end{equation}
	Ideally, the estimate $\widehat{\bZ}_1$ will be close to the true matrix ${\bZ}_1$, which consists of four rows in the directions of $\{\bc^{(i)}\}_{i=1}^3$. Consequently, we expect that the last two rows of $\widehat{\bZ}_1$ can be grouped into a single cluster.
\end{example}

Nevertheless, when two interaction kernels are too similar or the associated network weights are small, they become difficult to distinguish, particularly in the presence of noise. To enable reliable clustering, certain parameters governing the separation, arising from both the kernels and the weight matrix, must be sufficiently large. We first define the \emph{oracle minimal inter-class angle} of the true kernel coefficients $\bc^* = [\bc^{*(1)}, \cdots, \bc^{*(Q)}]$ as
    \begin{equation}\label{eq:theta0}
    	\theta_{min} := \min_{1 \leq p < q \leq Q} \arccos \frac{\innerp{\bc^{*(p)}, \bc^{*(q)}}}{\|\bc^{*(p)}\| \|\bc^{*(q)}\|}.
    \end{equation}
We then define the \emph{oracle minimal row $2$-norm} of the ground truth matrix $\bZ^*$,
	\begin{equation}\label{eq:z0}
		z_{min}  = \min_{1 \leq i, j \leq N, a_{ij} \neq 0}\norm{\bZ^*_{i,j}}\,,
	\end{equation}
	where $\bZ^*_{i,j}$ represents the $j$-th row of the $i$-th block $\bZ^* _i$. Note that $z_{min}$ relates the minimal nonzero entry of the graph matrix and the minimal kernel coefficient norms through
	\begin{equation}\label{eq:z0_a0_c0}
		z_{min} \geq \left(\min_{1 \leq i, j \leq N, a^*_{ij} \neq 0}a^*_{ij} \right)  \cdot \left( \min_{1 \leq q \leq Q} \|\bc^{*(q)}\| \right).
	\end{equation}
See Figure~\ref{fig:Exm_threshold} for the effect of these parameters. The conditions on $\theta_{min}$ and $z_{min}$ to guarantee separability of different kernel types are presented in Section \ref{sec:separation}.

\begin{figure}[htbp]
\center
\includegraphics[width=0.3\textwidth]{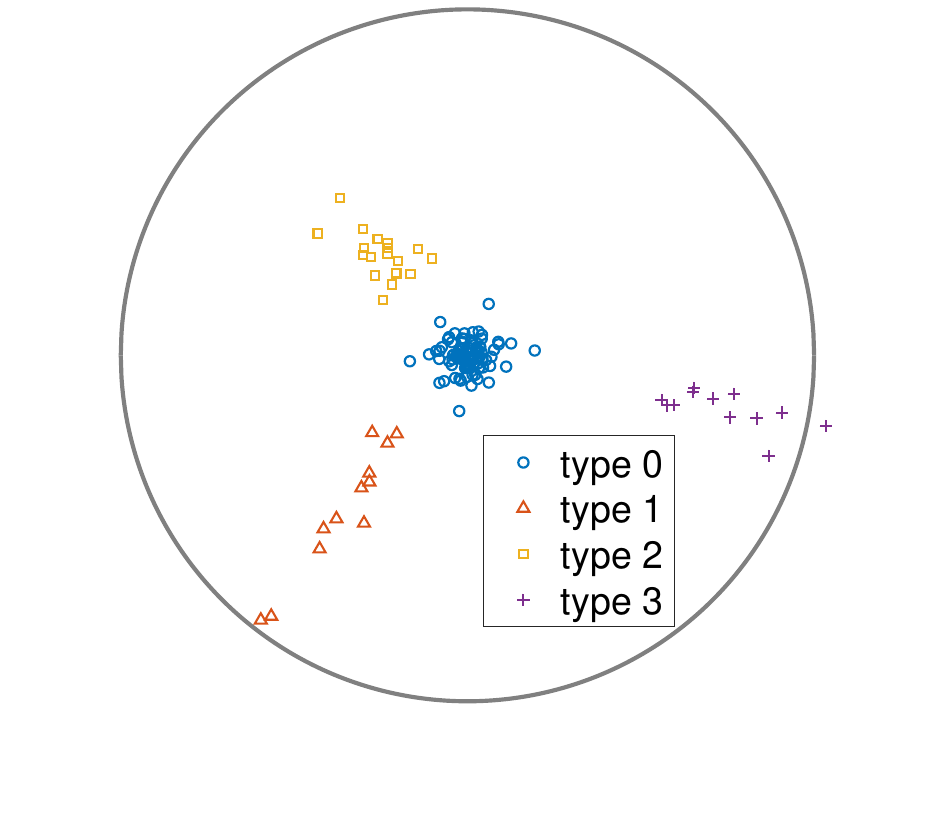}
\includegraphics[width=0.3\textwidth]{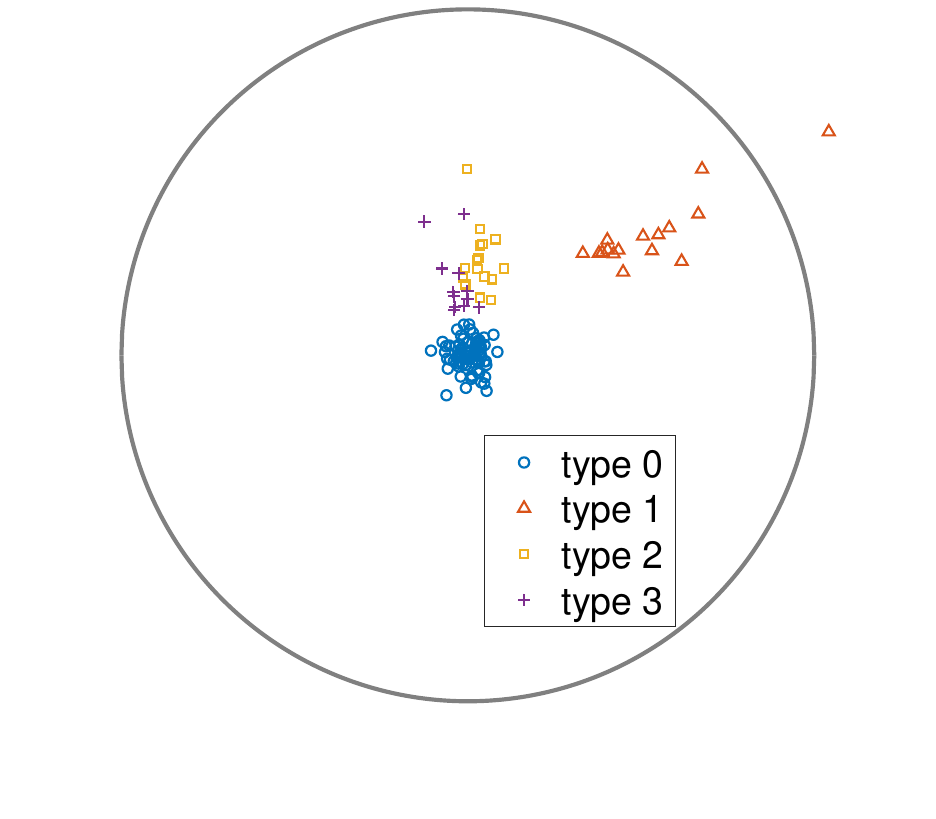}
\includegraphics[width=0.3\textwidth]{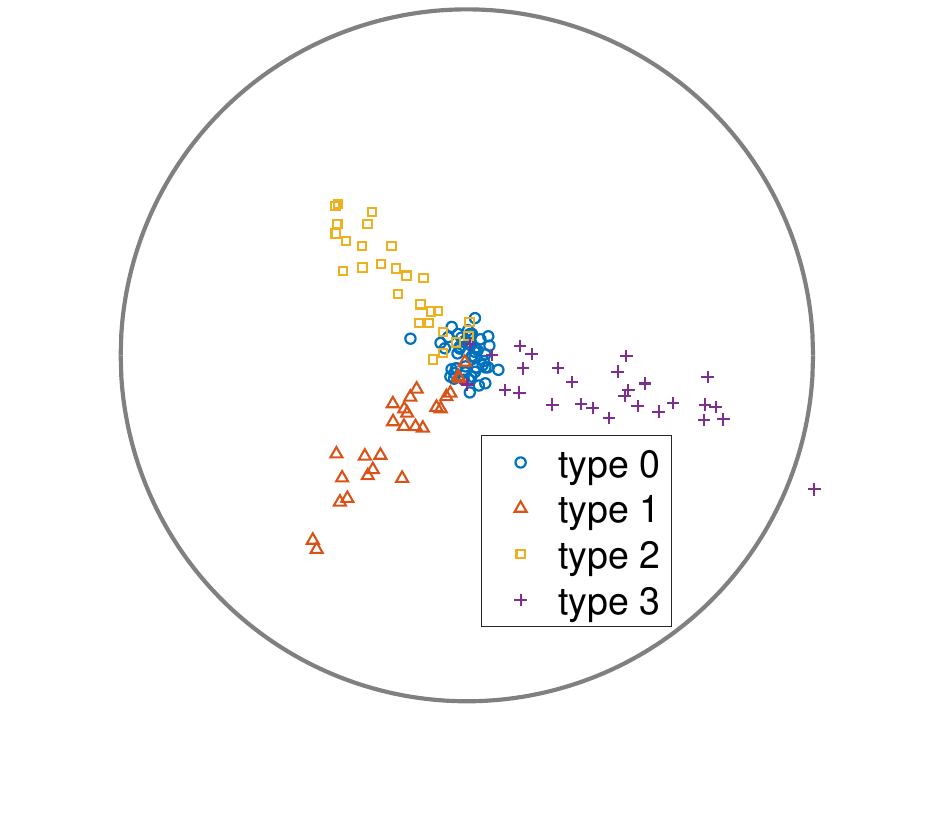}
\caption{Examples of row vectors from the matrix $\widehat{\bZ}$ estimated from noisy data over 2D hypothesis space ($K=2$), with $Q=3$ types of nonzero rows. Type 0 corresponds to a $0$ entry on the graph. \textbf{Left:} All types are clearly distinguishable due to sufficiently large angle $\theta_{min}$ and row norm $z_{min}$. \textbf{Middle:} Nonzero rows of different types become indistinguishable due to $\theta_{\min}$ being small. \textbf{Right:} Zero and nonzero rows become indistinguishable due to $z_{\min}$ being small.}
\label{fig:Exm_threshold}
\end{figure}

We will apply $K$-means clustering for the normalized rows of $\widehat\bZ$ to estimate the assignment matrix $\{\bKa_i\}$, and therefore, equivalently, the type matrix $\bkappa$. The full clustering algorithm is detailed in Section \ref{Subsec:K-mean}. Given prior information on the parameters $\theta_{min}$ and $z_{min}$, the algorithm's performance depends quantitatively on the noise level and the amount of available data, as analyzed in detail in Section~\ref{sec:separation}.

\subsection{Matrix factorization stage}
In the third stage, we recover the weight matrix $\ba$ and the coefficients matrix $\bc$ by factorizing $\widehat{\bZ}$ with the kernel type matrix $\widehat{\bkappa}$ by a linear regression including the constraint \eqref{Def:aMat_set}. 

First, with $\bZ_i$ in \eqref{eq:ZUc}, recall that each row of $\bZ_i = \Diag(\ba_{i}) \bKa_i \bc^\top=  \bu_i\bc^\top$ is a scaled version of a corresponding column of $\bc$. Denoting the norm of the kernel coefficient vector by $v_q = \|\bc^{(q)}\|$, $\bc$ can be reconstructed from the normalized kernel coefficient vectors $\bv_q$ by
\begin{equation}\label{eq:cvv}
	\bc = [v_1 \bv_1^\top, \dots, v_Q\bv_Q^\top] \in \R^{Q \times K}.
\end{equation}
Moreover, the norms $\{v_{q}\}_{q=1}^Q$ are determined from the linear system 
\begin{equation}\label{eq:LS_eta}
	\bA \bEta = \bf1_{N}, 
\end{equation}
where $\bf1_N$ is the $N$-dimensional vectors of ones and 
\begin{equation}\label{eq:A_eta}
\bEta = 
\begin{bmatrix}
	1/v_1^2 \\ 
	\vdots \\
	1/v_Q^2
\end{bmatrix} \in \R^Q, \quad
	\bA = 
	\begin{bmatrix}
		\bA_{1}\\ \vdots \\ \bA_{N}	
	\end{bmatrix} \in \R^{(N-1) \times Q},\quad
	\bA_{i} = [ z_{i, 1}^2, \dots,  z_{i, N}^2] \bKa_i \in \R^{1\times Q}.
\end{equation}
Eq.\eqref{eq:LS_eta} is derived as follows. Note that 
\begin{equation}\label{eq:z-a-v}
z_{i, j} = a_{ij}{v}_{\kappa_{ij}}
\end{equation}
 is the row norm of $\bZ$ when $i\neq j$, and the indices with $i = j$ can be omitted since $a_{ii}=0$. The row-normalization constraint on $\ba$ implies
\begin{equation*}
	1 = \sum_{j = 1, j \neq i}^{N}a_{ij}^2 = \sum_{j = 1, j \neq i}^{N} \frac{z_{i, j}^2}{v_{\kappa_{ij}}^2}, \quad i=  1, \dots, N, 
\end{equation*}
which yields \eqref{eq:LS_eta}.

Therefore, given the estimated assignment matrix $\widehat {\bKa}$ (hence type matrix $\widehat\bkappa$) and cluster centers $\{\widehat\bv_q\}$, we can  first solve $\{v_{q}\}$ from \eqref{eq:LS_eta} via linear regression, and then obtain $\bc$ from \eqref{eq:cvv} and $\ba$ from \eqref{eq:z-a-v}. 
Note that such a factorization of the estimated matrix $\widehat\bZ$ into the graph matrix $\ba$ and the kernel coefficients $\bc$ has negligible computational cost. Finally, a post-processing ALS refinement is performed to enhance robustness to noise. The details of the factorization and the post-processing are presented in Section \ref{sec:post_processing}.

\begin{remark}[Complete network]\label{rmk:complete_network_2}
Much effort has been devoted to studying heterogeneous interacting particle systems (IPS) without considering the underlying interaction network (cf. \cite{LMT21_JMLR, messenger2022learning} and references therein). These works typically assume a complete network, i.e., $a_{ij} \equiv 1/N$. In this setting, multi-type interaction kernels can be estimated without prior knowledge of particle types, using the same framework developed in this manuscript. 
To see this, consider the same type of matrix of $1$-st particle $\bKa_1$ as given in  \eqref{Exa2:calK_1}. Then, the corresponding matrices $\bu_1$ and $\bZ_1$ in \eqref{Exa2:calK_1} and \eqref{Exa2:Z_1} become
\begin{equation*}
	\bu_1 = \frac{1}{N}\left[\begin{array}{ccc}
    0  & 1  & 0  \\
    0  & 0  & 1  \\
    1  & 0  & 0  \\
    1  & 0  & 0 
  \end{array}\right]
  \quad \text{and} \quad 
  \bZ_1=
  \begin{bmatrix}
	(\bc^{(2)})^\top \\ (\bc^{(3)})^\top \\ (\bc^{(1)})^\top \\ (\bc^{(1)})^\top
  \end{bmatrix}
  =
  \left[
		\begin{array}{cccc}
		    c^{(2)}_1 & c^{(2)}_2 & \cdots & c^{(2)}_K  \\
		    c^{(3)}_1 & c^{(3)}_2 & \cdots & c^{(3)}_K  \\
		    c^{(1)}_1 & c^{(1)}_2 & \cdots & c^{(1)}_K  \\
		    c^{(1)}_1 & c^{(1)}_2 & \cdots & c^{(1)}_K 
		\end{array}
 \right]\,.
\end{equation*}
Our three-stage optimization framework yields improved and more robust clustering performance for this class of problems. 
We can apply \emph{positional clustering} to the rows of $\bZ_1$ in the second stage of our framework. 
The key advantage is its flexibility: our framework relaxes the constraint that $\bu_1$ must conform to a fixed matrix structure in the complete network setting. This added flexibility can improve robustness.
\end{remark}


\section{Algorithms}\label{Sec:Alg}
Our main algorithm mirrors the three stages described above: the first two stages relax the mixed-integer optimization problem into a matrix sensing problem and a clustering problem, and the third stage of factorization and post-processing yields the final estimators in a robust fashion.

\begin{enumerate}
\item \textbf{Matrix sensing:} Estimate the kernel-graph embedding matrices matrices $\{\bZ_i\}$ defined in \eqref{eq:ZUc}. We employ the Alternating Least Squares (ALS) algorithm, an approach similar to factorization techniques commonly used in low-rank matrix sensing (e.g., \cite{BM2003, BM2005}). 

\item  \textbf{Clustering:} Estimate the type matrix $\bkappa$ by clustering of the normalized rows of $\{\bZ_i\}$. The number of clusters can either be given or estimated during clustering. 

\item  \textbf{Factorization and postprocessing:} Estimate $\{\ba_{i}\}$ and $\bc$ using the learned type matrix $\widehat\bkappa$. One can directly solve them by the row-normalization constraints on $\{\ba_i\}$. To further improve performance, we take another step by using ALS to estimate $\{\ba_{i}\}$ and $\bc$ based on the $\widehat\bkappa$. 
\end{enumerate}

Moreover, Section \ref{sec:alg_extensions} discusses variant implementations and extensions of our main algorithm, including the case of a complete network and a 3-way ALS algorithm for jointly learning $(\ba,\bkappa,\bc)$.

\subsection{Matrix sensing}\label{Subsec:alg_matrix_sensing}
In the first stage, we treat the rank-$Q$ matrix sensing problem \eqref{eq:matrix_sensing_rank_Q}. If the true value of $Q$ is known, we use it directly; otherwise, we initialize with an upper bound and we will then refine it iteratively.

Recall that for each $i\in[N]$, we have  $\bZ_i = \bu_i \bc^\top$ in \eqref{eq:ZUc}. In this representation, the \emph{graph-type feature} matrix $\bu_i$ encodes the adjacency and type information for the $i$-th particle, while the \emph{coefficient matrix} $\bc$ represents the $Q$ distinct interaction kernels. Also, note that the loss function $\calL_i(\bZ_i)=\calL_i(\bu_i,\bc)$ in \eqref{Def:L_i} is quadratic in $\bu_i$ when $\bc$ is fixed and vice versa. Thus, we can use an Alternating Least Squares (ALS) scheme to estimate $\bZ_i$: starting from an initial estimate $\widehat{\bu}$, we alternatively update $\widehat{\bc}$ and $\widehat{\bu}$ using least squares until convergence.
\begin{enumerate}[leftmargin=5mm]
	\item[$\bullet$] \textbf{Update $\bc$:} With an estimate $\widehat{\bu}$ fixed, we update $\bc$ by solving the following least-squares problem: 
	\begin{equation}\label{ALS:U_to_V}
	\min_{\bc\in \R^{K\times Q}} \calL(\widehat{\bu}, \bc) = \frac{1}{LMN}\sum_{l,m,i=1}^{L,M,N} \norm{\dot{X}_{t_l,m}^{i} -  \innerp{\bB(\bX_{t_l}^m)_i, \widehat{\bu}_i \bc^\top}_F}^2_{\R^d}\,.
	\end{equation}
	\item[$\bullet$] \textbf{Update $\bu$:} With the new estimate $\widehat{\bc}$ fixed, we update each $\bu_i$ for $i\in[N]$ by solving its corresponding least-squares problem:
	\begin{equation}\label{ALS:V_to_U}
	\min_{\bu_i\in \R^{(N-1)\times Q}} \calL_i(\bu_i, \widehat{\bc}) = \frac{1}{LM}\sum_{l,m}^{L,M} \norm{\dot{X}_{t_l,m}^{i} -  \innerp{\bB(\bX_{t_l}^m)_i, \bu_i \widehat{\bc}^\top}_F}^2_{\R^d}. 
	\end{equation}
\end{enumerate}

The above steps are repeated until the relative change in the parameters falls below a prescribed threshold $\varepsilon$ or a maximum number of iterations $n_{maxiter}$ is reached. 
Note that the same estimate $\widehat{\bc}$ is used to update all $\{\bu_i\}_{i=1}^N$ in parallel, and the collection $\{\widehat{\bu}_i \}_{i=1}^N$ is then used to update $\bc$. The complete procedure is summarized in Algorithm \ref{alg:ALS}. 
The well-posedness of the matrix sensing problem and guarantees on the effectiveness of related algorithms such as ALS rely critically on the Restricted Isometry Property (RIP), as stated in Assumption~\ref{conj_RIP}. 
We postpone the analysis of the convergence of ALS and of the critical data size for identifiability and robust estimation to Section \ref{sec:RIP}.

\begin{remark}[Transformation invariance] The matrix sensing stage does not, in general, recover the factors $\bu_i$ (and hence $\ba_i$, $\bKa_i$) and $\bc$ in the factorization $\bZ_i = \bu_i \bc^\top$ from \eqref{eq:ZUc}. Indeed, for any invertible matrix $S \in \R^{Q \times Q}$, we have $\widehat{\bZ}_i = \widehat{\bu}_i \widehat{\bc}^\top = (\widehat{\bu}_i S)(S^{-1}\widehat{\bc})^\top$, demonstrating an invariance under such transformations. This ambiguity is resolved by imposing the integer constraint $\bKa_i \in \calB$,  which we address in the next stage via $K$-means clustering.
\end{remark}

\paragraph*{Computational complexity for ALS}
The computational complexity of ALS is determined by the cost of solving least-squares problems per iteration and by the number of iterations to convergence. In each iteration, it requires $\mathcal{O}(LMN \cdot d^2QK)$ operations to update $\bc$ and $\mathcal{O}(LM \cdot d^2QN)$ operations to update each $\bu_i$, therefore the total complexity is $\mathcal{O}(LMN \cdot d^2Q(K + N))$, assuming $Q$ and $d$ are small compared to $N$ and $K$. The number of iterations depend on the RIP condition of the matrix sensing problem. For $Q = 1$, \cite{LeeStoger2022} shows convergence to accuracy $\varepsilon$ in 
$\mathcal{O} \left( \frac{\log N^2 + \log(1/\varepsilon)}{\log \log N^2} \right)$ iterations under suitable RIP conditions. Little seems to be known about the convergence behavior for general $Q > 1$. We summarize the complexity results in Table \ref{tab:ALS_complexity}.

\begin{table}[h]
\centering
\caption{Computational Complexity of ALS}
\begin{tabular}{c|l|p{6.5cm}}
\hline
\hline
\textbf{Component} & \textbf{Complexity} & \textbf{Remarks} \\
\hline
\hline
Update $\bc$ & $\mathcal{O}(LMN \cdot d^2 Q K)$ & Solving least-squares over all samples \\
\hline
Update each $\bu_i$ & $\mathcal{O}(LM \cdot d^2 Q N)$ & Parallel update of all $\bu_i$ \\
\hline
\textbf{Total per iter.} & $\mathcal{O}(LMN \cdot d^2 Q (K + N))$ & Assumes $Q$ and $d$ are small relative to $N$ and $K$ \\
\hline
\textbf{Number of iter. ($Q = 1$)} & $\mathcal{O} \left( \dfrac{\log N^2 + \log(1/\varepsilon)}{\log \log N^2} \right)$ & From \cite{LeeStoger2022}, under RIP condition \\
\hline
\textbf{Number of iter. ($Q>1$)} & Unknown & Convergence remains open when $Q > 1$ \\
\hline
\hline
\end{tabular}
\label{tab:ALS_complexity}
\end{table}

\begin{algorithm}[htb]
\caption{ALS: alternating least squares}
\label{alg:ALS}
\begin{algorithmic}[1] 
\Procedure{Alternating Least Squares (ALS)}{$\{\bX_{t_l}^m\}_{l = 1, m = 1}^{L, M}$, $\{\psi_k\}_{k = 1}^p$, $\varepsilon$, $n_{maxiter}$}
    \State Compute derivative $\{\Delta \bX_{t_l}^m\}$ and sensing tensors $\{\bB(\bX_{t_l}^m)\}_{l, m}$ from basis functions.  
    \If{True $Q$ is not known}
    	\State Choose $Q = \widehat{Q}$ that is relatively large.
    \EndIf
    \State Randomly pick initial value $\widehat{U}_0\in \R^{N(N-1)\times Q}$.
    \For{$\tau = 1, \dots, n_{maxiter}$}
        \State Solve $\widehat{\bc}_\tau=\argmin_{\bc} \calL(\widehat{\bu}, \bc)$ in \eqref{ALS:U_to_V} using least square with $\widehat{\bu} = \widehat{\bu}_{\tau-1}$. 
        \For{$i = 1, \dots , N$}
        	\State Solve $\widehat{\bu}_{\tau, i}=\argmin_{\bu_{i}} \calL(\bu_{i}, \widehat{\bc})$ in \eqref{ALS:V_to_U} using least square with $\widehat{\bc} = \widehat{\bc}_{\tau}$. 
    	\EndFor
        \State Construct the matrix $\widehat{\bu}_\tau = [\widehat{\bu}_{\tau, 1}^\top, \dots, \widehat{\bu}_{\tau, N}^\top]^\top$.
        \State Exit loop if $\|\widehat{\bu}_\tau - \widehat{\bu}_{\tau-1}\|_F \leq \varepsilon \|\widehat{\bu}_{\tau-1}\|_F$ and $\|\widehat{\bc}_\tau - \widehat{\bc}_{\tau-1}\|_F \leq \varepsilon \|\widehat{\bc}_{\tau-1}\|_F$
    \EndFor
    \State \textbf{return} $\widehat{\bZ} = \widehat{\bu}_\tau \widehat{\bc}_\tau$.
\EndProcedure
\end{algorithmic}
\end{algorithm}

\subsection{Clustering}\label{Subsec:K-mean}
In the second stage, we estimate the type matrix $\bkappa$ (equivalently, the assignment matrix $\bKa$) by applying K-means clustering on $\widehat{\bZ}$. The clustering also provides normalized rows of the coefficient matrix $\bc$. 
We begin by considering the case where the true value of $Q$ is known in order to illustrate the core clustering procedure. Subsequently, we introduce an iterative search algorithm designed for the more general case where $Q$ is unknown.

\subsubsection{K-means clustering with a known number of types}\label{subsec:K-means_known}

As discussed in Section~\ref{sec:clustering}, the matrix $\bZ \in \R^{N(N-1)\times K} $ is expected to have $Q$ distinct rows up to scaling. This structure motivates applying clustering to the row-normalized matrix $\widehat{\bZ}$ in order to recover the assignment matrix.

Recall that $\bZ_i \in \R^{(N-1)\times K}$ is the $i$-th block of $\bZ$. Let $\bZ_{i, j}$ denote the $j$-th row of $\bZ_i$ and define the row norm as 
\begin{equation}\label{eq:z_norm}
	z_{i, j} = \|\bZ_{i, j}\|_{\R^{1\times K}}.
\end{equation}
Also, let
\begin{align}\label{eq:Z_normalize_i}
	\overline{\bZ}_{i, j} = \frac{1}{z_{i,j}} \bZ_{i, j} \in \R^{1\times K }, \quad \overline{\bZ}_i = \begin{bmatrix}
 	\overline{\bZ}_{i, 1} \\ \vdots \\ \overline{\bZ}_{i, N}	
 \end{bmatrix} \in \R^{(N-1)\times K}, \quad 
 	\overline{\bZ} = \begin{bmatrix}
 	\overline{\bZ}_{1} \\ \vdots \\ \overline{\bZ}_{N}	
 \end{bmatrix} \in \R^{N(N-1) \times K}, 
\end{align} 
where $(i, j)$ ranges from all possible pairs with the convention to omit $i = j$. We similarly define $\widehat{z}_{i, j}, \overline{\widehat \bZ}_{i, j}$, $\overline{\widehat{\bZ}}_i$ and $\overline{\widehat{\bZ}}$ for the estimated $\widehat{\bZ}$. After normalization, $\overline{\bZ}$ should ideally contain $Q$ distinct row patterns up to scaling. However, noise and estimation errors may distort this structure. In particular, if $a_{ij} = 0$, then the corresponding row of $\bZ$ should be exactly zeros. However, $\widehat{\bZ}$ may contain small nonzero values in these rows due to noise, which can then be amplified by normalization, leading to significant distortion and unreliable clustering. Therefore, it is essential to separate zero and non-zero rows before clustering. 

We first identify index pairs $(i, j)$ corresponding to graph entries that are effectively zero. These are collected in the set
\begin{equation}\label{eq:I_0}
	I_0 = \{(i, j) : \widehat{z}_{i,j} < z_0\},
\end{equation}
where $z_0$ denotes a prescribed threshold. When the prior information $z_{min}$ defined in \eqref{eq:z0} is available, $z_0$ can be selected optimally based on the noise level and the data size; see Section~\ref{sec:separation} for details. In the absence of such prior information, $z_0$ may instead be treated as a tunable parameter and chosen empirically. 

We then apply K-means clustering~\cite{arthur2006k, lloyd1982least} to the remaining rows of $\overline{\widehat\bZ}$ using the cosine distance $d(\bx, \by) := 1-\frac{\innerp{\bx, \by}}{\|\bx\| \|\by\|}$ for any $\bx,\by\in \R^K$. In this angular clustering setting, a cluster center $\bv$ is determined via the normalized average of the points $\{\bx_i\}_{i = 1}^n$ in the cluster:
\begin{equation}\label{cos_dist_center}
\bv = \bu/\|\bu\|, \quad \bu = \frac{1}{n}\sum_{i = 1}^n \bx_i. 
\end{equation}
This procedure yields $Q$ clusters with corresponding index sets $I_1, \dots, I_Q$,
\[
I_q = \big\{(i, j) : \overline{\widehat\bZ}_{i,j} \text{ belongs to the } q\text{-th cluster}\big \},
\]
and cluster centers $\mathbf{v}_1, \dots, \mathbf{v}_Q \in \mathbb{R}^{1\times K}$ that represent estimated normalized kernel coefficients. If $(i,j) \in I_q$, we identify the  interaction between the $i$-th and $j$-th particles to be of the $q$-th kernel type, and we set
\begin{equation}\label{eq:Kmeans_Kappa}
	\widehat{\bKa}_{i,j} = \be_q, \quad \widehat{\kappa}_{ij} = q, 
\end{equation}
for $q = 0,1, \dots, Q$, where $\widehat{\bKa}_{i,j}$ represents the $j$-th row of $\widehat{\bKa}_i$, $\be_q$ is the $q$-th standard basis row vector in $\mathbb{R}^Q$, and $\be_0$ is the zero vector. The inclusion of $\be_0$ in $\widehat\bKa$ reflects unidentifiability of the type $\kappa_{ij}$ when $a_{ij} = 0$. 

\begin{remark}[Cluster ordering]\label{rmk:cluster_order}
	To address label ambiguity in clustering, we sort clusters by their first occurrence in lexicographic order over the index set $(1,2), (1,3), \dots, (N,N-1)$.
\end{remark}

\begin{remark}[Active and inactive kernels]\label{rmk:act_inact_kernel}
Even when the true number of interaction kernels $Q$ is known, not all kernels are necessarily active in the observed dynamics. This may occur if the graph matrix $ \ba $ contains zero entries, i.e., $ a_{ij} = 0 $ for all pairs $ (i, j) $ such that $ \kappa_{ij} = q $ for the $ q $-th kernel, thus making the kernel inactive. Consequently, it is preferable to determine the number of active kernels adaptively from the data. 
In contrast, under the complete network assumption described in Remark~\ref{rmk:complete_network_2}, all interactions are present, and the weight matrix $\ba$ contains no zero entries. In this case, the number of kernels used in the dynamics coincides with the true value of $Q$, and one may directly apply Algorithm~\ref{alg:Kmeans_known_Q}.
\end{remark}

\begin{algorithm}[t]
\caption{Clustering $\widehat{\bZ}$ with known $Q$}
\label{alg:Kmeans_known_Q}
\begin{algorithmic}[1] 
\Procedure{Spherical K-means, $Q$ known}{estimated $\widehat{\bZ}$, $z_{0}$, $Q$}
    \State Row-normalize $\widehat{\bZ}\in \R^{N(N-1)\times K}$ to get $\overline{\widehat\bZ}$ and row norms $\{ \widehat z_{i,j}\}$.
    \State Construct $I_0 = \{(i, j), \widehat z_{i, j} < z_0\}$.
	\State Use K-means to classify the rows of $\overline{\widehat\bZ}$ with $z_{i,j}\geq \delta z_0$ into $Q$ clusters with cosine distance, get index sets $\{I_q\}_{q = 1}^Q$ and cluster centers $\{\mathbf{v}_q\}_{q = 1}^Q$.  
	\State Construct $\widehat{\bKa}\in \{0,1\}^{N(N-1)\times Q}$ and $\widehat{\bkappa} \in [Q]^{N\times N}$ by $\widehat{\bKa}_{i,j} = \be_q$ and $\widehat{\kappa}_{ij} = q$ if $(i, j) \in I_q$ for $q = 0, 1, \dots, \widehat Q$. 
	\State \textbf{return} $\widehat{\bKa},\widehat{\bkappa}, \{I_q\}_{q = 0}^Q, \{\widehat\bv_q\}_{q = 1}^Q$
    \EndProcedure
\end{algorithmic}
\end{algorithm}

\subsubsection{K-means clustering with an unknown number of types}\label{subsec:K-means_unknown}
When the true number of kernel types is unknown, it must be inferred from the data. Let $\widehat{Q}$ denote the current estimate of the number of clusters, with associated clustering results $\{I_q, \widehat\bv_q\}_{q = 1}^{\widehat{Q}}.$ We introduce the \emph{empirical maximal intra-cluster angle} defined by
\begin{equation}\label{eq:D_max}
\Theta = \max_{1 \leq q \leq \widehat{Q}}\,\,\max_{(i, j), (i', j') \in I_q} \,
\arccos\, \innerp{\overline{\widehat\bZ}_{i, j},\overline{\widehat\bZ}_{i', j'}},
\end{equation}
and compare it against a prescribed threshold $\Theta_0$. If $\Theta > \Theta_0$, this indicates that at least one cluster contains elements with significant angular dispersion, suggesting the need to increase the number of clusters.
In addition, we define the \emph{empirical minimal inter-cluster angle} based on the current cluster centers $\{\widehat\bv_q\}_{q = 1}^{\widehat {Q}}$:
\begin{equation}\label{eq:theta}
\theta := \min_{1 \leq p < q \leq \widehat{Q}} \arccos \frac{\innerp{\widehat\bv_p, \widehat\bv_q}}{\|\widehat\bv_p\| \|\widehat\bv_q\|},
\end{equation}
and compare it with a threshold $\theta_0$. If $\theta <\theta_0$, the cluster centers are insufficiently separated, indicating an overestimation of the number of clusters and the need to reduce $\widehat{Q}$.

\begin{algorithm}[t]
\caption{Clustering $\widehat{\bZ}$ with unknown $\widehat{Q}$}
\label{alg:Kmeans_unknown_Q}
\begin{algorithmic}[1]
\Procedure{Spherical K-means, $Q$ Unknown}{$\widehat{\bZ}$, $z_0$, $\theta_0$, $\Theta_{0}$,  ${Q}_0$, $n_{\max}$ }
    \State Row-normalize $\widehat{\bZ}\in \R^{N(N-1)\times K}$ to get $\overline{\widehat\bZ}$ and row norms $\{ \widehat z_{i,j}\}$.
    \State Construct $I_0 = \{(i, j), \widehat z_{i, j} < \delta z_0\}$ 
    \State Initialize $\widehat{Q} = {Q}_0$, \texttt{success} = \texttt{false}
    
    \For{$\tau = 0, \dots, n_{\max}$}
        \State Run K-means with $\widehat{Q}$ clusters on rows of $\overline{\widehat \bZ}$ with $\widehat z_{i,j}\geq z_0$ using cosine distance, get index sets $\{I_q\}_{q = 1}^{\widehat{Q}}$ and cluster centers $\{\widehat\bv_q\}_{q = 1}^{\widehat{Q}}$.  
        \State Compute min inter-cluster angle $\theta$ and max intra-cluster angle $\Theta$
        
        \If{$\theta > \theta_0$ and $\Theta < \Theta_{\max}$ }
            \State \texttt{success} = \texttt{true}, \textbf{break}
        \ElsIf{$\theta < \theta_0$}
            \State $\widehat{Q} \gets \widehat{Q} - 1$
        \ElsIf{$\Theta > \Theta_{0}$}
            \State $\widehat{Q} \gets \widehat{Q} + 1$
        \EndIf
    \EndFor

    \If{\texttt{success} = \texttt{false}}
        \State \textbf{return} Classification fails
    \Else
        \State Construct $\widehat{\bKa}\in \{0,1\}^{N(N-1)\times \widehat{Q}}$ and $\widehat{\bkappa}\in [\widehat{Q}]^{N\times N}$ using the clustering result by setting $\widehat{\bKa}_{i,j} = \be_q$ and $\widehat{\bkappa}_{ij} = q$ if $(i, j) \in I_q$ for $q = 0, 1, \dots, \widehat{Q}$. 
        \State \textbf{return} $\widehat{\bKa},\widehat{\bkappa}, \widehat{Q}, \{I_q\}_{q=1}^{\widehat{Q}}, \{\widehat\bv_q\}_{q=1}^{\widehat{Q}}$
    \EndIf
\EndProcedure
\end{algorithmic}
\end{algorithm}

This procedure enables adaptive adjustment of the number of clusters during the iterative process. The full method is summarized in Algorithm~\ref{alg:Kmeans_unknown_Q}. Compared to Algorithm~\ref{alg:Kmeans_known_Q}, this adaptive approach is generally more robust, particularly when the ideal thresholds $\theta_0$ and $\Theta_{0}$ are available. The prior information $\theta_{min}$, as defined in \eqref{eq:theta0}, can be used to guide the selection of recommended thresholds $\theta_0$ and $\Theta_{0}$. The appropriate values of these thresholds also depend on the noise level and the amount of available data; a detailed analysis is provided in Section~\ref{sec:separation}. In the absence of prior knowledge, both $\theta_0$ and $\Theta_{0}$ may be treated as tunable parameters and chosen empirically.

\begin{remark}\label{rmk:Kmeans_iterate}
K-means clustering partitions data into $K$ clusters by minimizing within-cluster variance, typically using Lloyd's algorithm~{\rm\cite{lloyd1982least}}, which alternates between assigning points to the nearest cluster center and updating centers as the mean of their assigned points. While effective, Lloyd's algorithm is sensitive to initialization and can converge to suboptimal solutions.

To address this, K-means$^{++}$~{\rm\cite{arthur2006k}} introduces a probabilistic seeding strategy that selects initial centers that are well-separated, thereby significantly enhancing clustering performance. Inspired by this idea, our iterative procedure for determining the number of clusters $\widehat{Q}$ in Algorithm~\ref{alg:Kmeans_unknown_Q} plays a similar role: it uses prior information to adaptively increase the number of clusters, thereby improving separation and robustness. 

The complexity of K-means depends on the clustering at each step and the number of iterations. During each iteration, it requires $\calO(N^2KQ)$ steps to compare the $N(N-1)$rows to the $Q$ cluster centers. The number of K-means iterations can be exponentially large in the worst case. However, using the K-means$^{++}$ initialization can improve the convergence and reduce the number of iterations. In practice, the number of iterations is often small. 
\end{remark}

\subsection{Factorization and post-processing}\label{sec:post_processing}
\subsubsection{Factorizating $\widehat {\bZ}$ into $\widehat{\ba}$ and $\widehat{\bc}$.}
Given the estimated assignment matrix $\widehat {\bKa}$ (hence type matrix $\widehat\bkappa$) and cluster centers $\{\widehat\bv_q\}$, we use a simple linear regression \eqref{eq:LS_eta} to factorize the estimated matrix $\widehat\bZ$ into the graph matrix $\ba$ and the kernel coefficients $\bc$ with negligible computational cost. 

It should be emphasized that there is no guarantee that $\bA$ defined by \eqref{eq:A_eta} is invertible, and regularization is necessary. In fact, the rank and spectrum of $\bA$ carry essential structural information about the underlying system, in particular, which kernels are active or not, as discussed in Remark \ref{rmk:act_inact_kernel}. See Section \ref{sec:theory_factorization} for a detailed discussion on the invertibility of $\bA$ and the associated error analysis. 

By replacing $\bZ$ and $\bKa$ with their estimated $\widehat{\bZ}$ and $\bKa$, we solve \eqref{eq:LS_eta} for $\{\widehat v_q\}$ and recover $\widehat \bc$ via \eqref{eq:cvv}. The graph matrix is then obtained entrywise as
\begin{equation}\label{eq:azv}
	\widehat{a}_{ij} = \frac{\widehat z_{i,j}}{\widehat v_{\kappa_{ij}}}.
\end{equation}
The entire recovery procedure is summarized in Algorithm~\ref{alg:Z_kappa_z_c}. The computational cost of solving the small linear system is negligible compared to the costs of the previous two stages.

\begin{algorithm}[t]
\caption{Factorizing $\widehat\bZ$ into  $\widehat\ba$ and $\widehat\bc$ by using $\widehat\bKa$ and cluster centers.}
\label{alg:Z_kappa_z_c}
\begin{algorithmic}[1] 
\Procedure{Post-processing Decomposition}{$\widehat{\bZ}$, $\widehat\bKa$, $\widehat\kappa$, $\{\widehat\bv_q\}$}
	\State Solve $\widehat\bEta$ from $\widehat\bA \widehat\bEta =\bf1_{N} $ with $\widehat\bA_{i} = [\widehat z_{i, 1}^2, \dots, \widehat z_{i, N}^2]\widehat \bKa_i \in \R^{1\times Q}$ and $\widehat\bA= [\widehat\bA_{i}]_{1\leq i\leq N}\in \R^{N\times Q}$. 
	\State Let $\widehat v_q = \sqrt{1/\widehat\eta_q}$ and let $\widehat{\bc} = 
\begin{bmatrix}
\widehat v_1 \widehat\bv_1^\top, \cdots, \widehat v_Q \widehat\bv_Q^\top
\end{bmatrix}$. 
	\State Construct $\widehat{\ba}=[\widehat{a}_{ij}]$ via $\widehat{a}_{ij}= \widehat z_{i,j}/\widehat v_{\kappa_{ij}}$.
	\State Construct $\widehat{\Phi}_q(x) = \sum_{k = 1}^K \widehat \bc^{(q)}_k \psi_k(x)$ for $q = 1, \dots, \widehat{Q}$. 
	\State \textbf{return} $\widehat{\ba}, \widehat{\bc}, \{\widehat{\Phi}_q\}$.
    \EndProcedure
\end{algorithmic}
\end{algorithm}

\subsubsection{Post-processing via ALS}
To further improve estimation accuracy, we fix the estimated assignment matrix $\widehat{\bKa}$ and perform an additional ALS step to refine the estimates of $\ba$ and $\bc$. Specifically, given fixed $\widehat{\ba}$, we estimate $\bc$ by 
\begin{align}\label{ALS3:aK_to_c}
	\min_{\bc \in \R^{K\times Q}}\calL(\widehat{\ba}, \widehat{\bKa}, \bc) = 
	\frac{1}{LMN}\sum_{l,m,i=1}^{L,M,N}\norm{\dot{X}_{t_l,m}^i -  \innerp{\bB(\bX_{t_l}^m)_i, 
	\text{Diag}(\widehat{\ba}) \widehat{\bKa}_{i} \bc^\top}_F}^2_{\R^d}\,.
\end{align}
Then for given $\widehat{\bKa}, \widehat{\bc}$, we estimate each row of the graph $\ba_{i}$ by
\begin{align}\label{ALS3:Kc_to_a}
	\min_{\ba_i \in [0, 1]^{1 \times(N-1)}}\calL(\ba_{i}, \widehat{\calK}, \widehat{\bc}) = 
	\frac{1}{LM}\sum_{l,m=1}^{L,M}\norm{\dot{X}_{t_l,m}^i -  \innerp{\bB(\bX_{t_l}^m)_i, 
	\text{Diag}(\ba_{i}) \widehat{\bKa}_{i} \widehat{\bc}^\top}_F}^2_{\R^d}\,.
\end{align}
These updates are performed iteratively until convergence, with $\ba$ constrained to satisfy the row normalization condition.
This post-processing step improves robustness to environmental noise or model perturbations; its performance also depends on the RIP property of the sensing operator, though the details are omitted here.
The full procedure is summarized in Algorithm~\ref{alg:ALS-3}.
 
\begin{algorithm}[t]
\caption{Post-processing via ALS}
\label{alg:ALS-3}
\begin{algorithmic}[1] 
\Procedure{Post-Processing ALS}{$\{\bX_{t_l}^m\}_{l = 1, m = 1}^{L, M}$, $\{\psi_k\}_{k = 1}^K$, $\widehat{\bKa}$,  $\varepsilon$, $n_{maxiter}$}
    \State Compute derivative $\{\Delta \bX_{t_l}^m\}$ and sensing tensors $\{\bB(\bX_{t_l}^m)\}_{l, m}$ from basis functions.  
    \State Randomly pick initial value $\widehat{\ba}_0 \in \R^{N\times (N-1)}$, or take $\widehat{\ba}_0$ as the result of Algorithm \ref{alg:Z_kappa_z_c}.
    \For{$\tau = 1, \dots, n_{maxiter}$} 
        \State Solve $\widehat{\bc}_\tau=\argmin_{\bc} \calL(\widehat{\ba}, \widehat{\bKa}, \bc)$ in \eqref{ALS3:aK_to_c} using least square with $\widehat{\ba} = \widehat{\ba}_{\tau - 1}$. 
        \For{$i = 1, \dots , N$} 
        	\State Solve $\widehat{\ba}_{\tau, i}=\argmin_{\ba_i} \calL(\ba_i, \widehat{\bKa}, \widehat{\bc})$ in \eqref{ALS3:Kc_to_a} using least square with $\widehat{\bc} = \widehat{\bc}_{\tau}$. 
        	\State Row-normalize $\ba_\tau$. 
        \EndFor
        \State Construct the matrix $\widehat{\ba}_\tau = [\widehat{\ba}_{\tau, 1}^\top, \dots, \widehat{\ba}_{\tau, N}^\top]^\top$.
        \State Exit loop if $\|\widehat{\ba}_\tau - \widehat{\ba}_{\tau-1}\|_F \leq \varepsilon \|\widehat{\ba}_{\tau-1}\|_F$ and $\|\widehat{\bc}_\tau - \widehat{\bc}_{\tau-1}\|_F \leq \varepsilon \|\widehat{\bc}_{\tau-1}\|_F$.
    \EndFor
	\State Construct $\widehat{\Phi}_q(x) = \sum_{k = 1}^K \widehat \bc^{(q)}_k \psi_k(x)$ for $q = 1, \dots, \widehat{Q}$. 
    \State \textbf{return} $\widehat{\ba}, \widehat {\bc}, \{\widehat{\Phi}_q\}$. 
\EndProcedure
\end{algorithmic}
\end{algorithm}

\subsection{Connections and extensions}\label{sec:alg_extensions}

\subsubsection{Complete network}\label{sec:complete_network}

In the case of a complete network, similar procedures can be employed to recover the assignment matrix $\bKa$ and the kernel coefficients $\bc$. We first apply Algorithm~\ref{alg:ALS} to obtain an estimate $\widehat{\bZ}$. In the second stage, if the number of interaction types $Q$ is known, we directly apply $K$-means clustering to the unnormalized rows of $\widehat{\bZ}$, where the resulting cluster centers correspond to the estimated kernel coefficients. If $Q$ is unknown, we update $\widehat{Q}$ iteratively by comparing the separation metrics $\theta$ and $\Theta$ to their respective thresholds. The assignment matrix $\widehat{\bKa}$ is then constructed via \eqref{eq:Kmeans_Kappa}.

Unlike the general graph setting, where the rows of $\widehat{\bZ}$ lie along $Q$ distinct directions, in the complete network setting, they have exactly $Q$ distinct positions. This allows direct clustering based on Euclidean distance without separating zero and nonzero rows. In practice, this structure leads to improved robustness and accuracy in the clustering outcome. The algorithm is concluded in the Appendix \ref{apdx:complete_networks}. 

More broadly, our framework accommodates not only the fully unknown setting but also cases with partial information. Given the interaction graph, the above complete-network procedure illustrates how one may bypass graph recovery and focus on clustering and kernel estimation. If the type assignments are known, one may directly apply the post-processing ALS refinement to recover the graph and kernel coefficients. Finally, when the kernels are known, the problem reduces to a relatively easier recovery of the graph and type assignments. These special cases highlight the flexibility of our approach and its applicability to a wide range of learning scenarios.

\subsubsection{Integer constraint}
A natural extension of the Algorithm \ref{alg:ALS-3} is to consider a three-way ALS approach. When $\widehat{\bKa}$ is unavailable, we can introduce an additional step to estimate $\bKa$ from given $\widehat{\ba}$ and $\widehat{\bc}$, subject to the constraint $\bKa \in \calB^N$ as defined in \eqref{Def:Bi_state}. However, this is a non-convex integer constraint, which makes direct optimization difficult. To enable efficient least-squares estimation, a convex relaxation of \eqref{Def:Bi_state} can be used. In practice, though, this often leads to poor local minima and suboptimal solutions that do not exhibit the desired structure. Due to this challenge, we omit the constraint on $\bKa$ and instead split the problem into two stages, followed by a post-processing step.

\bigskip

\section{Theoretical Guarantees}\label{Sec:Theory}

\subsection{Exploration measure and space for interaction kernels}

To lay the groundwork for the function space, we first introduce an exploration measure and hypothesis space as in \cite{LZTM19pnas, LMT21_JMLR, LWLM2024}.

We define a function space $L^2(\rho_L)$ for learning the interaction kernel $\bPhi$, where $\rho_L$ is a probability measure that quantifies data exploration to the interaction kernel. 
Denote the pairwise differences 
\begin{equation}\label{Def:r_ij}
	 \br_{ij}(t_l):=X^{j}_{t_l}-X^{i}_{t_l}\quad \text{and}\quad
	 \br_{ij}^m(t_l)
	 :=X^{j,m}_{t_l}-X^{i,m}_{t_l}\,, 
\end{equation}
which are the variables of the interaction kernel. Thus, we define $\rho_L$ as follows.  
\begin{definition}[Exploration measure for interaction kernels]
Given observations of $M$ trajectories at the discrete times $\{t_l\}_{l=0}^{L-1}$, we define an empirical measure and its large-sample limit on $\mathbb{R}^d$ as follows:
\begin{align}
	\rho_{L,M}(d\br) &:= \frac{1}{LM N(N-1)}\sum_{l=0}^{L-1}\sum_{m=1}^{M} \sum_{1\leq i\neq j\leq N} \delta_{\br_{ij}^{m}(t_l)}(d\br), \label{Def:empirical} \\
	 \rho_L(d\br) &:=\frac{1}{L N(N-1)}\sum_{l=0}^{L-1} \sum_{1\leq i\neq j\leq{N}} \E[ \delta_{\br_{ij}(t_l)}(d\br)] 
	 	 \label{Def:exploration}\,,
\end{align} 
where $\delta_{\br}$ denotes the Dirac measure at $\br$, and the notation $\sum_{1\leq i\neq j\leq{N}}$ stands for $\sum_{i=1}^N \sum_{j=1,j\neq i}^{N}$. 
\end{definition}
The empirical measure depends on the sample trajectories, but $\rho_L$ is the large-sample limit, uniquely determined by the distribution of the stochastic process $\bX_{t_0:t_{L-1}}$, and hence data-independent. 

We consider a hypothesis space with basis functions satisfying the following conditions. 
\begin{assumption}[Uniformly bounded basis functions]\label{assum:basis}
The basis functions of the hypothesis space $\calH:=\mathrm{span} \{ \psi_k: \R^d\to \R^d \}_{k=1}^K$ are orthonormal in $L^2(\rho_L)$. Moreover, they are uniformly bounded, i.e., $\sup_{k\in[K]} \|\psi_k\|_\infty \leq L_{\mH}$.
\end{assumption}

The size of the hypothesis space $K$ grows exponentially with the dimension $d$ in nonparametric regression because of the curse of dimensionality. To avoid it, a common practice is to consider radial interaction kernels in the hypothesis space $\calH:=\mathrm{span} \{ \psi_k: \R^+\to \R \}_{k=1}^K$; see, e.g., \cite{LMT21,LZTM19pnas}. We also consider it natural for the interactions to depend only on a small number of variables, possibly unknown (see, e.g., \cite{LearningInteractionVariables_1}).

\subsection{Error bounds for the matrix sensing stage}\label{sec:RIP}
In this section, we provide theoretical guarantees for the first stage based on matrix sensing theory~\cite{GJZ2017, RFP2010, ZSL2017}. For clarity of exposition, we focus on the simplified setting with $d = 1$ and $L = 1$. We analyze the estimation of a single vector $\bZ_i$ for a fixed $i \in [N]$, noting that the overall matrix sensing problem in~\eqref{Def:L_i} decouples into $N$ separate subproblems, one for each $\bZ_i$, $i \in [N]$. 

In the case $d = 1$ and $L = 1$, the sensing operator $\bB_i$ in~\eqref{Def:basis_array} reduces to a sensing matrix $\bB_i(\bX^m_0) \in \R^{(N-1)\times K}$, 
which encodes the relative interaction structure centered on the $N$-th particle. The observed quantity is the velocity of the $i$th particle at time $t = 0$, corrupted by additive Gaussian noise, i.e., 
\begin{equation}\label{eq:bm}
b_m = \dot{X}^{i,m}_0 + \epsilon^{i,m} = \langle \bB_i(\bX^m_0), \bZ_i^\star \rangle_F, \quad \epsilon^{i,m} \sim \mathcal{N}(0, \sigma_\epsilon^2).
\end{equation}
This observation model arises naturally when the system is driven by stochastic forcing and finite-difference approximations are used to estimate derivatives. 
In this case, $\sigma_\epsilon = \sigma \sqrt{\Delta t}$ with $\sigma$ being the diffusion coefficient of \eqref{Eq:ips_K_type} and $\sqrt{\Delta t}$ from the Euler-Maruyama discretization. 
In particular, the loss function $\mathcal{L}_i(\bZ_i)$ in \eqref{Def:L_i} reduces to 
\begin{equation}\label{eq:LossZN}
	\mathcal{L}_i(\bZ_i) = \frac{1}{M}\sum_{m=1}^M\|b_m- \langle \bB_i(\bX^m_0),\bZ_i\rangle_{F}\|^2. 
\end{equation}

We assume the following RIP (restricted isometry property) condition on the sensing matrix. 
\begin{assumption}[RIP condition]\label{conj_RIP}
Fix $0 < \delta < 1$. For any $1 \leq Q \leq ((N-1)\wedge K)$, there exist positive constants $c_0$, $c_1$ depending only on $\delta$ where $c_0 = O(1/\delta^2)$, such that, with probability at least $1 - exp(-c_1 M)$,  whenever $M \geq c_0 Q (N+K)\log(NK)$, the smallest constant $\delta_Q$ satisfying
	\begin{equation}\label{eq:RIP-IPS}
	(1-\delta_{Q}) \norm{Z}_F^2 \leq \frac{1}{CM}\sum_{m=1}^M \langle \bB_i(\bX^m_0),Z\rangle_{F}^2 \leq (1+\delta_{Q}) \norm{Z}_F^2
\end{equation}
for all rank-$Q$ matrix $Z\in \R^{(N-1)\times K}$ and $i\in[N]$, 
is less than $\delta$. 
\end{assumption}

The above RIP condition holds true for nearly isometric random matrices, which include matrices with i.i.d.\ Gaussian entries $\mathcal{N}(0,\frac{1}{M})$, as shown in \cite[Theorem 4.2]{RFP2010}. For general interacting particle systems, the entries of the sensing matrices exhibit nontrivial correlations due to their shared dependence on a common reference particle and the basis functions. While certain techniques from \cite{Liu_NIPS2011} may offer partial insights, verifying RIP-type conditions under such structural dependencies remains a significant challenge. 
Moreover, extending the analysis to include longer time steps and higher dimensions would further complicate the problem. As such, we present the above condition as a plausible foundation for understanding the sample complexity of this structured sensing setup, while acknowledging that its rigorous verification remains an open question. 

This RIP condition yields recovery guarantees for both noiseless and noisy data. The next proposition shows the identifiability of $\bZ_i$ from noiseless data and estimation error bounds from noisy data.  

\begin{proposition}
\label{prop:consequence_RIP}
Let $d=1$, $L=1$, and suppose Assumption~\ref{conj_RIP} holds. Then, for any $1 \leq Q \leq (N - 1) \wedge K$, there exists a constant $C > 0$ such that the following holds with high probability for each $i\in [N]$.  
\begin{itemize}
    \item \textbf{(Noiseless case)}: If $\sigma_\epsilon = 0$ and $M \geq C Q (N + K) \log(NK)$, then $\bZ^*_i$ is the unique minimizer of the objective in \eqref{eq:LossZN}.
    \item \textbf{(Noisy case)}: If $\sigma_\epsilon > 0$ and $M \geq C Q (N + K) \log(NK)$, then any local minimizer $\widehat{\bZ}_i=\widehat{\bu}_i \widehat{\bc}^\top $ of $f(\bu_i,\bc):=\frac{1}{M}\sum_{m = 1}^{M}\norm{b_m - \innerp{\bB_i(\bX_{0}^m), \bu_i\bc^\top}_F}_{\R^d}^2 + \|\bu_i^\top \bu_i-\bc \bc^\top \|^2$ with $b_m$ in \eqref{eq:bm}, 
    satisfies
    \begin{equation}\label{eq:ZN_error}
	    \|\bZ^*_i - \widehat{\bZ}_i \|_F  \lesssim  \sigma_\epsilon \sqrt{\frac{(N + K) Q \log M}{M}} 
    \end{equation}
    for each $i\in [N]$. Here, $A\lesssim B$ means $A\leq C_0 B$ for some generic constant $C_0>0$. 
\end{itemize}
\end{proposition}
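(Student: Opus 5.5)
The plan is to reduce both parts of Proposition~\ref{prop:consequence_RIP} to standard consequences of the RIP in Assumption~\ref{conj_RIP}. We apply the assumption with rank $2Q$ (and, for the noisy case, with rank up to $4Q$). We take $C$ to be a multiple of $c_0$ large enough that $M \geq C Q(N+K)\log(NK)$ gives the RIP for these ranks with a fixed small $\delta$, say $\delta_{4Q}<1/10$, with probability at least $1-e^{-c_1 M}$. We work on this event for a fixed $i$; a union bound over $i\in[N]$ only changes constants, since $e^{-c_1M}N$ stays small when $M\gtrsim \log N$. Throughout, define the sensing operator $\mathcal{A}_i(Z)=(\langle \bB_i(\bX_0^m),Z\rangle_F)_{m=1}^M$. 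The RIP then reads $\frac{1}{CM}\|\mathcal{A}_i(Z)\|^2\in[(1\pm\delta)\|Z\|_F^2]$ for matrices $Z$ of low rank; the normalization $C$ is harmless and can be absorbed.

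\textbf{Noiseless case.} Suppose $\widetilde\bZ_i$ has rank at most $Q$ and $\mathcal{L}_i(\widetilde\bZ_i)=0=\mathcal{L}_i(\bZ_i^*)$, so $\mathcal{A}_i(\widetilde\bZ_i-\bZ_i^*)=0$. The difference has rank at most $2Q$. The lower RIP bound then gives $(1-\delta_{2Q})\|\widetilde\bZ_i-\bZ^*_i\|_F^2\le 0$, hence $\widetilde\bZ_i=\bZ_i^*$. This is exactly the argument of \cite[Theorem 3.2]{RFP2010}. Since $\mathcal{L}_i\ge0$, $\bZ_i^*$ is the unique minimizer over rank-$Q$ matrices.

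\textbf{Noisy case.} Here I would invoke the landscape analysis for factorized matrix sensing with the balancing regularizer $\|\bu_i^\top\bu_i-\bc\bc^\top\|^2$. These are the results of \cite{GJZ2017} (and \cite{ZSL2017}), for asymmetric matrices with rank-$4Q$ RIP with $\delta_{4Q}$ small. The key steps, in order, are as follows.

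\begin{enumerate}
\item Write $b=\mathcal{A}_i(\bZ_i^*)+\epsilon$ and expand $f$. This gives $f(\bu_i,\bc)=\frac1M\|\mathcal{A}_i(\bu_i\bc^\top-\bZ_i^*)\|^2-\frac2M\langle \mathcal{A}_i^*(\epsilon),\bu_i\bc^\top-\bZ_i^*\rangle+\text{const}+\text{reg}$.
\item Use the first- and second-order optimality conditions at a local minimizer. Test the Hessian along the direction pointing toward the optimal factorization (after the balancing alignment), as in \cite{GJZ2017}. This shows that, under RIP, any local minimizer satisfies \[\|\widehat\bZ_i-\bZ_i^*\|_F\lesssim \sup_{\operatorname{rank}(W)\le 2Q,\ \|W\|_F=1}\Bigl|\tfrac1M\langle \mathcal{A}_i^*(\epsilon),W\rangle\Bigr| \lesssim \sqrt{Q}\,\Bigl\|\tfrac1M\mathcal{A}_i^*(\epsilon)\Bigr\|_2 .\] The noise thus enters only through the spectral norm of $\frac1M\sum_m\epsilon^{i,m}\bB_i(\bX_0^m)$.
\item Bound this operator norm. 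Conditionally on the sensing matrices, it is a Gaussian matrix series with variance proxy $\frac{\sigma_\epsilon^2}{M^2}\max\bigl(\|\sum_m \bB_i\bB_i^\top\|,\|\sum_m\bB_i^\top\bB_i\|\bigr)$. Using uniform boundedness of the basis functions (Assumption~\ref{assum:basis}) and the RIP upper bound to control these sums by $M\max(N,K)$, matrix Bernstein/Gaussian series gives $\|\frac1M\mathcal{A}_i^*(\epsilon)\|_2\lesssim\sigma_\epsilon\sqrt{(N+K)\log M/M}$ with high probability. Combining this with step 2 yields \eqref{eq:ZN_error}.
\end{enumerate}

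The main obstacle is step 2: showing that every local minimizer, not just the global one, is near $\bZ_i^*$ in the presence of noise. I would not redo it but cite the "no spurious local minima" theorem of \cite{GJZ2017} for the regularized asymmetric objective, checking that its RIP requirement (rank $4Q$, small constant) is what Assumption~\ref{conj_RIP} delivers for our choice of $C$. Some care is also needed in step 3 because the $\bB_i(\bX_0^m)$ are not independent across entries. The conditional Gaussian argument sidesteps this, since the noise $\epsilon^{i,m}$ is independent of the sensing matrices.
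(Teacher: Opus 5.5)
Your proposal is correct and follows the paper's route. The noiseless case is exactly the rank-$2Q$ RIP argument of \cite[Theorem 3.2]{RFP2010} (Theorem~\ref{Thm:RFP2010}), and the noisy case invokes \cite[Theorem 32]{GJZ2017} (Theorem~\ref{thm:noisy_matrix_sensing}) just as the paper does; your step 3 usefully spells out how the noise term is controlled by the RIP upper bound on rank-one test matrices. One minor fix: the cited result, as stated there, needs $(2Q,1/20)$-RIP, not $\delta_{4Q}<1/10$, and Assumption~\ref{conj_RIP} gives this simply by taking $\delta=1/20$ at rank $2Q$ and enlarging $C$ accordingly.
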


The above result follows directly from \cite[Theorem 3.2]{RFP2010} and \cite[Theorem 32]{GJZ2017}, which we present in Section \ref{sec:preRIP}. We note that the observed time steps $L$ and dimension of the particles $d$ do not appear in the stated bound, as the analysis here pertains to the simplified case $d = L = 1$. 

Recall from the formulation in \eqref{Def:L_i} that the overall matrix sensing problem can be decomposed into $N$ subproblems, each corresponding to the estimation of $\bZ_i$ for $i = 1, \dots, N$. Consequently, the same sample complexity and error scaling apply to each $\bZ_i$ individually. Summing over all $i$, we obtain the following global error bound:
\begin{equation}\label{eq:Z_error}
\|\bZ^* - \widehat{\bZ}\|_F  \lesssim \sigma_\epsilon \sqrt{\frac{N(N + K) Q \log M}{M}}.
\end{equation}

\begin{remark}
Note that our algorithm does not estimate the matrices $\bZ_i$ independently: they share a common coefficient matrix $\bc$. Yet, the numerical results in Figure \ref{fig:conv_M} and Figure \ref{fig:conv_N} show that the global errors of the estimator are in the orders $\mathcal{O}(M^{-1/2})$ and $\mathcal{O}(N)$,  agreeing with the above bound. 
\end{remark}

Sample complexity offers another perspective on the RIP condition and the preceding proposition. Numerical experiments in Section \ref{sec:sample-size} show that the required sample size to recover the type matrix $\kappa$ scales approximately linearly in $N+K$. Heuristically, the scaling
$M = O\bigl(Q(N+K)\log(NK)\bigr)$ is natural: a rank-$Q$ matrix has $Q(N+K-Q)$ degrees of freedom, so the $Q(N+K)$ factor, up to logarithmic terms, matches this dimensional count.


\subsection{Separability}\label{sec:separation}
We have shown that, under the RIP condition, the estimator $\widehat{\bZ}$ achieves a small error in the Frobenius norm. We now investigate how this estimation error affects the clustering step, which is sensitive to errors measured in the $(2, \infty)$-norm. For any matrix $A \in \mathbb{R}^{m \times n}$, we define its $(2, \infty)$-norm as the maximal row norm:
\begin{equation}
\|A\|_{2, \infty} := \max_{1 \leq i \leq m} \|A_i\|_{\R^n} = \max_{1 \leq i \leq m} \left( \sum_{j = 1}^n A_{ij}^2 \right)^{1/2},
\end{equation}
where $A_i$ denotes the $i$-th row of $A$. Clearly, $(2, \infty)$-norm is equivalent to the Frobenius norm: 
\begin{equation}\label{lem:2_infty_F}
\|A\|_{2, \infty} \leq \|A\|_F \leq \sqrt{m} \, \|A\|_{2, \infty}.
\end{equation}

Recall that \eqref{eq:ZN_error} holds for each $i\in [N]$. Then, applying \eqref{lem:2_infty_F} to $\|\bZ_i^* - \widehat{\bZ}_i\|_{2,\infty}$, we obtain
\begin{equation}\label{eq:Z_error_2_infty}
\|\bZ^* - \widehat{\bZ}\|_{2,\infty}
= \max_{i=1,\dots,N} \|\bZ_i^* - \widehat{\bZ}_i\|_{2,\infty}
 \lesssim  \sigma_\epsilon \sqrt{\frac{(N+K)Q \log M}{M}},
\end{equation}
which increases at the order of $\sqrt{N}$, favorable over the linear scaling in $N$ of the Frobenius norm $\|\bZ^* - \widehat{\bZ}\|_{F}$ in \eqref{eq:Z_error}. Our numerical experiments in Figure \ref{fig:conv_N} suggest that when the graph weight matrix is sparse, the order $\|\bZ^* - \widehat{\bZ}\|_{2,\infty}$ can be much lower than $\sqrt{N}$.

We show first that if $\|\bZ^* - \widehat{\bZ}\|_{2,\infty}$ is sufficiently small, then the zero and nonzero rows of $\widehat{\bZ}$ can be correctly identified by thresholding in \eqref{eq:I_0}.  
\begin{proposition}[Support recovery]\label{prop:support}
	 	 Recall that $z_{min} = \min_{1 \leq i, j \leq N, a_{ij} \neq 0}\norm{\bZ^*_{i,j}}$ in \eqref{eq:z0}. Suppose
 		\begin{equation}\label{eq:Zest_bd_eps}
 			\|\bZ^* - \widehat{\bZ}\|_{2, \infty}< \varepsilon < \frac{z_{min}}{2}.
 		\end{equation}
 		Then, for each $z_0\in (\varepsilon, z_{min} - \varepsilon)$, we have  
 		$	\|\widehat{\bZ}_{i_1, j_1}\| < z_0 < \|\widehat{\bZ}_{i_2, j_2}\|$ 
 		for any index pairs $(i_1, j_1), (i_2, j_2)$ such that $a^*_{i_1 j_1} = 0, a^*_{i_2 j_2} \neq 0$. 
\end{proposition}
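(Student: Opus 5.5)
The argument is a direct triangle-inequality argument applied row by row; the only input is the $(2,\infty)$ bound \eqref{eq:Zest_bd_eps}. The idea is that it controls every single row error uniformly: for every admissible pair $(i,j)$ with $i\neq j$,
\[
\bigl\|\widehat{\bZ}_{i,j}-\bZ^*_{i,j}\bigr\| \le \|\bZ^*-\widehat{\bZ}\|_{2,\infty} < \varepsilon .
\]
First, I will check that the interval $(\varepsilon, z_{min}-\varepsilon)$ is nonempty. This is exactly the assumption $\varepsilon < z_{min}/2$, so a threshold $z_0$ can indeed be chosen.

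Next, I treat the two kinds of rows separately.

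\emph{Zero rows.} If $a^*_{i_1j_1}=0$, then by \eqref{eq:ZUc} the row $\bZ^*_{i_1,j_1}=a^*_{i_1j_1}\,(\bc^{*(\kappa_{i_1j_1})})^\top$ vanishes. Hence
\[
\|\widehat{\bZ}_{i_1,j_1}\| = \|\widehat{\bZ}_{i_1,j_1}-\bZ^*_{i_1,j_1}\| < \varepsilon < z_0 .
\]

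\emph{Nonzero rows.} If $a^*_{i_2j_2}\neq 0$, then by the definition \eqref{eq:z0} we have $\|\bZ^*_{i_2,j_2}\|\ge z_{min}$. The reverse triangle inequality then gives
\[
\|\widehat{\bZ}_{i_2,j_2}\| \ge \|\bZ^*_{i_2,j_2}\| - \|\widehat{\bZ}_{i_2,j_2}-\bZ^*_{i_2,j_2}\| > z_{min}-\varepsilon > z_0 .
\]
Combining the two cases gives the claimed strict separation $\|\widehat{\bZ}_{i_1,j_1}\|<z_0<\|\widehat{\bZ}_{i_2,j_2}\|$.

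There is no substantive obstacle here. The one point to handle carefully is the bookkeeping in the first step: the $(2,\infty)$-norm is taken over the stacked matrix $\bZ\in\R^{N(N-1)\times K}$ with the diagonal pairs omitted. So each pair $(i,j)$ with $i\neq j$ corresponds to exactly one row of $\bZ$, and the maximal-row bound applies to it. The strict inequalities follow from the strict inequality $\|\bZ^*-\widehat{\bZ}\|_{2,\infty}<\varepsilon$ and from $z_0$ lying in an open interval.
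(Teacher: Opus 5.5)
Your proof is correct and follows the paper's argument essentially verbatim. The triangle inequality bounds the norms of the zero rows by $\varepsilon$, the reverse triangle inequality bounds the norms of the nonzero rows below by $z_{min}-\varepsilon$, and the assumption $\varepsilon<z_{min}/2$ guarantees that the threshold interval $(\varepsilon, z_{min}-\varepsilon)$ is nonempty.
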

\begin{proof}
	Since $a^*_{i_1, j_1} = 0$, the true matrix $\bZ^*$ satisfies $\|\bZ^*_{i_1, j_1}\| = 0$, so that 
	$$\|\widehat{\bZ}_{i_1, j_1}\| \leq \|{\bZ}^*_{i_1, j_1}\| + \|{\bZ}^*_{i_1, j_1} - \widehat{\bZ}_{i_1, j_1}\| < \varepsilon.$$
	where the second inequality follows from \eqref{eq:Zest_bd_eps} and the definition of $(2, \infty)$-norm. Moreover,  since $a^*_{i_2, j_2}>0$, we have $\|\bZ^*_{i_2, j_2}\| \geq z_{min}$ and 
	$$\|\widehat{\bZ}_{i_2, j_2}\| \geq \|{\bZ}^*_{i_2, j_2}\| - \|{\bZ}^*_{i_2, j_2} - \widehat{\bZ}_{i_2, j_2}\| > z_{min} - \varepsilon.$$
	Therefore,  with $\varepsilon < \frac{z_{min}}{2}$, for any $z_0 \in (\varepsilon, z_{min} - \varepsilon)$, we have $ \|\widehat{\bZ}_{i_1, j_1}\| < z_0 < \|\widehat{\bZ}_{i_2, j_2}\|$. 
\end{proof}

In practice, when $z_{min}$ is not available, we tune the parameter $z_0$ to be in range $(\varepsilon, z_{min} - \varepsilon)$ with $\varepsilon= O( \sigma_\epsilon \sqrt{\frac{(N + K) Q \log M}{M}} )$ according to Proposition \ref{prop:consequence_RIP}.

Next, we show that if $\varepsilon$ is small enough, the result of a successful Algorithm \ref{alg:Kmeans_unknown_Q} is guaranteed to provide the correct classification of the rows of $\widehat{\bZ}$. Correct classification depends on two criteria: (i) elements from different true clusters must be assigned to different estimated clusters, and (ii) elements belonging to the same true cluster must be assigned to the same estimated cluster. In particular, this implies that the estimated number of clusters, $\widehat{Q}$, matches the true number.

Recall that if $\{I_q, \bv_q\}_{q = 1}^{\widehat{Q}}$ is the successful output of Algorithm \ref{alg:Kmeans_unknown_Q}, then it must satisfy
\begin{equation}\label{eq:theta_Theta_proof}
	\Theta \leq \Theta_0 , \quad  \theta \geq \theta_0,
\end{equation} 
where $\theta := \min_{1 \leq p < q \leq \widehat{Q}} \arccos \frac{\innerp{\widehat\bv_p, \widehat\bv_q}}{\|\widehat\bv_p\| \|\widehat\bv_q\|}$
and $\Theta = \max_{1 \leq q \leq \widehat{Q}}\,\,\max_{(i, j), (i', j') \in I_q} \,\arccos \innerp{\overline{\widehat\bZ}_{i, j},\overline{\widehat\bZ}_{i', j'}}$ 
are defined in \eqref{eq:D_max} and \eqref{eq:theta}, respectively. Here, $\theta_0$ and $\Theta_0$ are the prescribed separation thresholds. Intuitively, $\Theta\le\Theta_0$ controls the within-cluster dispersion while $\theta\ge\theta_0$ guarantees sufficient between-cluster separation. 
\begin{proposition}[Cluster separability]\label{prop:separate}
	Let $\{I_q, \bv_q\}_{q = 1}^{\widehat{Q}}$ be the clustering results of Algorithm \ref{alg:Kmeans_unknown_Q} satisfying \eqref{eq:theta_Theta_proof}. 
	 Suppose $\|\bZ^* - \widehat{\bZ}\|_{2, \infty} < \varepsilon$ with $\varepsilon$ satisfying
	\begin{equation}\label{eq:eps_theta_0}
		\frac{\pi\varepsilon}{(z_{min} - \varepsilon)} \leq \frac{\theta_{min}}{2},
	\end{equation}
	where $\theta_{min}$ and $z_{min}$ are defined in \eqref{eq:theta0} and \eqref{eq:z0}. 
	Then for any
	\begin{equation}\label{eq:bd_Theta0_theta0} 
		\Theta_0 \in \left(\frac{\pi\varepsilon}{(z_{min} - \varepsilon)}, \theta_{min} - \frac{\pi\varepsilon}{(z_{min} - \varepsilon)}\right), \quad \theta_0 \in \left(\Theta_0+\frac{\pi\varepsilon}{(z_{min} - \varepsilon)}, \theta_{min}\right), 
	\end{equation}
	and any index pairs $(i_2, j_2), (i_3, j_3)$ with $a^*_{i_2, j_2}, a^*_{i_3, j_3} \neq 0$, the following holds.
	\begin{itemize}
		\item[(i)] Inter-cluster consistency: 
		\begin{equation}\label{eq:inter_cluster_consistency}
			\kappa^*_{i_2, j_2} \neq \kappa^*_{i_3, j_3} \quad \implies  \quad(i_2, j_2)\in I_{q_2},  \ (i_3, j_3) \in I_{q_3}, \ \text{ for }\  q_2 \neq q_3.
		\end{equation}
		\item[(ii)] Intra-cluster consistency:
		\begin{equation}\label{eq:intra_cluster_consistency}
			\kappa^*_{i_2, j_2} = \kappa^*_{i_3, j_3} \quad \implies  \quad(i_2, j_2), (i_3, j_3) \in I_q, \text{ for some } q = 1, \dots, \widehat{Q}.
		\end{equation}
	\end{itemize}
\end{proposition}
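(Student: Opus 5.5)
The proof reduces to one elementary angular perturbation estimate, followed by two short arguments by contradiction, one for each consistency claim. Throughout, write $\eta := \frac{\pi\varepsilon}{z_{min}-\varepsilon}$. For a nonzero row $(i,j)$ (i.e.\ $a^*_{ij}\neq 0$), the true row is $\bZ^*_{i,j} = a^*_{ij}(\bc^{*(\kappa^*_{ij})})^\top$ with $a^*_{ij}>0$. Hence its normalized direction is exactly $\bar\bc_q := \bc^{*(q)}/\|\bc^{*(q)}\|$ with $q=\kappa^*_{ij}$, and its norm is at least $z_{min}$.

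\textbf{Step 1 (angular perturbation).} I will show that for every such $(i,j)$ the angle between $\widehat\bZ_{i,j}$ and $\bZ^*_{i,j}$ is at most $\eta/2$. Since $\|\widehat\bZ_{i,j}-\bZ^*_{i,j}\|<\varepsilon<\|\bZ^*_{i,j}\|$, the estimated row is nonzero and its angle $\phi$ with $\bZ^*_{i,j}$ satisfies $\sin\phi \le \varepsilon/\|\bZ^*_{i,j}\| \le \varepsilon/z_{min}$, with $\phi<\pi/2$. Using $\phi \le \frac{\pi}{2}\sin\phi$ on $[0,\pi/2]$ gives
\[
\phi \le \frac{\pi\varepsilon}{2 z_{min}} \le \frac{\eta}{2}.
\]
Alternatively, I can bound the chord $\|\overline{\widehat\bZ}_{i,j}-\bar\bc_q\| \le 2\varepsilon/\|\widehat\bZ_{i,j}\| < 2\varepsilon/(z_{min}-\varepsilon)$ and use angle $\le \frac{\pi}{2}\cdot$chord; this also yields $\phi\le\eta$, matching the denominator in the statement. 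To stay safe, I will aim only for $\phi\le \eta/2$ from the first argument. Proposition~\ref{prop:support} guarantees that exactly these rows are the ones passed to K-means, so the clustering output concerns only nonzero true rows.

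\textbf{Step 2 (consequences via the triangle inequality for angles).} Angles on the sphere satisfy the triangle inequality. Hence two estimated rows with the same true type have angle at most $\eta$ between them. Two estimated rows with different true types $p\neq q$ have angle at least $\angle(\bar\bc_p,\bar\bc_q) - \eta \ge \theta_{min}-\eta$. By \eqref{eq:eps_theta_0}, $\eta\le\theta_{min}/2$, so the ranges in \eqref{eq:bd_Theta0_theta0} are nonempty.

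\textbf{Step 3 (inter-cluster consistency).} Suppose $\kappa^*_{i_2j_2}\neq\kappa^*_{i_3j_3}$ but both pairs lie in the same $I_q$. Step 2 gives that their angle is at least $\theta_{min}-\eta > \Theta_0$, by the upper bound on $\Theta_0$. This contradicts $\Theta\le\Theta_0$.

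\textbf{Step 4 (intra-cluster consistency).} Suppose $\kappa^*_{i_2j_2}=\kappa^*_{i_3j_3}$ but the pairs lie in $I_{q_2}\neq I_{q_3}$. I need to bound the angle between the centers $\widehat\bv_{q_2}$ and $\widehat\bv_{q_3}$. Each center is the normalized average of points in its cluster, and every point of a cluster lies within angle $\Theta\le\Theta_0$ of any other point of that cluster. Since the average of unit vectors lies in their convex cone, the angle from a center to any member point is at most $\Theta_0$; making this rigorous is the main obstacle. The crude chain
\[
\angle(\widehat\bv_{q_2},\widehat\bv_{q_3}) \le \Theta_0 + \eta + \Theta_0
\]
is too weak, since it must beat $\theta_0>\Theta_0+\eta$. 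I would sharpen it using Step 3: every cluster consists of rows of a single true type. So $I_{q_2}$ and $I_{q_3}$ both consist entirely of rows of the common type $q$, and all their points are within $\eta/2$ of $\bar\bc_q$. Since the spherical cap of radius $\eta/2<\pi/2$ is geodesically convex, both normalized averages also lie within angle $\eta/2$ of $\bar\bc_q$. Therefore
\[
\theta \le \angle(\widehat\bv_{q_2},\widehat\bv_{q_3}) \le \eta < \Theta_0+\eta < \theta_0,
\]
which contradicts $\theta\ge\theta_0$ and completes the proof.
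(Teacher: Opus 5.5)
Your proof is correct. Step~1 and part~(i) are essentially the paper's argument; the paper gets the same $\eta/2$ bound, with your $\eta=\pi\varepsilon/(z_{min}-\varepsilon)$, via Lemma~\ref{lem:geom} and the chord--angle identity rather than your $\sin\phi$ estimate. Part~(ii), however, takes a genuinely different route.

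The paper proves (ii) directly, without using (i). It asserts that every member of a cluster lies within angle $\Theta_0/2$ of its center, and then chains $\Theta_0/2+\eta+\Theta_0/2=\Theta_0+\eta<\theta_0$. That $\Theta_0/2$ step does not follow from the diameter bound $\Theta\le\Theta_0$ alone. Take three unit vectors at pairwise angle $\Theta_0$ in equilateral position: their normalized mean sits at angle $R$ from each, where $\sin R=\tfrac{2}{\sqrt3}\sin(\Theta_0/2)$, so $R>\Theta_0/2$. The bound one can actually justify, $\Theta_0$, only gives $2\Theta_0+\eta$, which is exactly the weakness you spotted.

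You instead use (i) to show that every cluster is type-pure. You then apply cap-containment of normalized means to place both centers within $\eta/2$ of $\bc^{*(q)}/\|\bc^{*(q)}\|$. This is precisely Lemma~\ref{lem:geom_avg}, and the clean justification is the half-space argument $\langle \bu,\bar\bc_q\rangle\ge\cos(\eta/2)$ with $\|\bu\|\le 1$, rather than geodesic convexity. Your route is rigorous, and it shows that $\theta_0>\eta$ alone would suffice.

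What your route costs is that (ii) now depends on (i), and on the clustered rows being exactly the active ones. A single inactive row admitted into a cluster could pull its center out of the $\eta/2$-cap. You rightly invoke Proposition~\ref{prop:support} for this. Note that \eqref{eq:eps_theta_0} already forces $\varepsilon\le z_{min}/3$, so the only extra hypothesis is a threshold $z_0\in(\varepsilon,z_{min}-\varepsilon)$. The paper also uses this implicitly, e.g.\ when it writes $\bigsqcup_r I_r=E^*_{\rm act}$ in the subsequent proposition.
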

\begin{proof}
We first prove the inter-cluster consistency by contradiction. Suppose $\kappa^*_{i_2, j_2} \neq \kappa^*_{i_3, j_3} $ but $(i_2, j_2), (i_3, j_3)$ are classified in the same cluster with center $\bv_q$. Then, the intra-cluster angle satisfies
\[
	\arccos\innerp{\overline{\widehat{\bZ}}_{i_2, j_2}, \overline{\widehat{\bZ}}_{i_3, j_3}} \leq \Theta_0.
\]
From Lemma \ref{lem:geom}, it holds that
	\begin{equation}\label{eq:Z_i2j2_bar}
		\|\overline{{\bZ}}^*_{i_2, j_2} - \overline{\widehat{\bZ}}_{i_2, j_2}\| \leq \frac{1}{z_{min} - \varepsilon} \| {{\bZ}}^*_{i_2, j_2} - {\widehat{\bZ}}_{i_2, j_2}\| < \frac{\varepsilon}{z_{min} - \varepsilon},
	\end{equation}
	since $\| \bZ^*_{i_2, j_2}\| \geq z_{min}$ and $\|\widehat{\bZ}_{i_2, j_2}\| \geq \| \bZ^*_{i_2, j_2}\| - \| \bZ^*_{i_2, j_2} - \widehat{\bZ}_{i_2, j_2}\| > z_{min} - \varepsilon$. Together with \eqref{eq:trig_id}, \eqref{eq:Z_i2j2_bar} implies that
\begin{equation}
	\arccos\innerp{\overline{{\bZ}}^*_{i_2, j_2}, \overline{\widehat{\bZ}}_{i_2, j_2}} = 2 \arcsin \frac{\|\overline{{\bZ}}^*_{i_2, j_2} - \overline{\widehat{\bZ}}_{i_2, j_2}\| }{2}
	\leq
	\frac{\pi\varepsilon}{2(z_{min} - \varepsilon)}. 
\end{equation}
Note that this estimate also works for the index pair $(i_3, j_3)$. Then, 
\begin{equation*}
	\arccos\innerp{\overline{{\bZ}}^*_{i_2, j_2}, \overline{{\bZ}}^*_{i_3, j_3}} \leq 
	\arccos\innerp{\overline{{\bZ}}^*_{i_2, j_2}, \overline{\widehat{\bZ}}_{i_2, j_2}} + 
	\arccos\innerp{\overline{\widehat{\bZ}}_{i_2, j_2}, \overline{\widehat{\bZ}}_{i_3, j_3}}+
	\arccos\innerp{\overline{{\bZ}}^*_{i_3, j_3}, \overline{\widehat{\bZ}}_{i_3, j_3}}
	\leq  
	\Theta_0 + \frac{\pi\varepsilon}{(z_{min} - \varepsilon)}.
\end{equation*}
However, by the upper bound in our selection of $\Theta_0$ in \eqref{eq:bd_Theta0_theta0} and the condition  \eqref{eq:eps_theta_0}, we have 
\begin{equation}
	\arccos\innerp{\overline{{\bZ}}^*_{i_2, j_2}, \overline{{\bZ}}^*_{i_3, j_3}} < \theta_{min},
\end{equation}
which contradict to the definition of $\theta_{min}$ in \eqref{eq:theta0}. Thus, $(i_2, j_2), (i_3, j_3)$ must be classified in different clusters.

	Next, we prove the intra-cluster consistency through contradiction arguments. Suppose $\kappa^*_{i_2, j_2} = \kappa^*_{i_3, j_3}$ but $\widehat{\bZ}_{i_2, j_2}$ and $\widehat{\bZ}_{i_3, j_3}$ are classified to cluster centers $\bv_{q_2}$ and $\bv_{q_3}$ respectively. From the intra-cluster angle maximum $\Theta_0$, we have 
\begin{equation}\label{eq:angleZ2Z3}
	\arccos\innerp{\overline{\widehat{\bZ}}_{i_2, j_2}, \overline\bv_{q_2}} \leq \Theta_0/2, 	\quad \arccos\innerp{\overline{\widehat{\bZ}}_{i_3, j_3}, \overline\bv_{q_2}} \leq \Theta_0/2,
\end{equation}
since $\bv_{q_2}$ and $\bv_{q_3}$ are the cluster centers. 
Note that \eqref{eq:Z_i2j2_bar} holds for $(i_2, j_2)$ and $(i_3, j_3)$. Moreover, $\overline{\bZ}^*_{i_2, j_2} = \overline{\bZ}^*_{i_3, j_3}$  since $\kappa^*_{i_2, j_2} = \kappa^*_{i_3, j_3}$. Hence, 
\begin{equation}
	\| \overline{\widehat{\bZ}}_{i_2, j_2} - \overline{\widehat{\bZ}}_{i_3, j_3}\| \leq \| \overline{\widehat{\bZ}}_{i_2, j_2} -\overline{\bZ}^*_{i_2, j_2}\| + \| \overline{\bZ}^*_{i_3, j_3}-  \overline{\widehat{\bZ}}_{i_3, j_3}\|  < \frac{2\varepsilon}{z_{min} - \varepsilon}.
\end{equation}
Then, applying \eqref{eq:trig_id} again, we obtain 
\[
	\arccos\innerp{\overline{\widehat{\bZ}}_{i_2, j_2}, \overline{\widehat{\bZ}}_{i_3, j_3}} = 2 \arcsin \frac{\| \overline{\widehat{\bZ}}_{i_2, j_2} - \overline{\widehat{\bZ}}_{i_3, j_3}\| }{2} \leq \frac{\pi\varepsilon}{z_{min} - \varepsilon}.
\]
Together with \eqref{eq:angleZ2Z3}, it holds
\begin{equation}
	\arccos\innerp{\overline\bv_{q_2}, \overline\bv_{q_3}} 
	\leq \arccos\innerp{\overline{\widehat{\bZ}}_{i_2, j_2}, \overline\bv_{q_2}} + \arccos\innerp{\overline{\widehat{\bZ}}_{i_3, j_3}, \overline\bv_{q_2}} +  \arccos\innerp{\overline{\widehat{\bZ}}_{i_2, j_2}, \overline{\widehat{\bZ}}_{i_3, j_3}}
	\leq \Theta_0 + \frac{\pi \varepsilon	}{z_{min} - \varepsilon}.
\end{equation}
However, since we require that the inter-cluster angle $\theta$ to be larger than $\theta_0$, so that 
\begin{equation}
	\arccos \innerp{\overline\bv_{q_2}, \overline\bv_{q_3}} \geq \theta_0.
\end{equation}
We reach a contradiction from the lower bound of $\theta_0$ in \eqref{eq:bd_Theta0_theta0}. Thus, $\widehat{\bZ}_{i_2, j_2}$ and $\widehat{\bZ}_{i_3, j_3}$ must be classified to a same center.
\end{proof}

Our choice of the lower bound of $\Theta_0$ in \eqref{eq:bd_Theta0_theta0}  guarantees the condition \eqref{eq:theta_Theta_proof}, since the intra-cluster angles have an upper bound $ \frac{\pi \varepsilon	}{z_{min} - \varepsilon}$ when $\|\bZ^* - \widehat{\bZ}\|_{2, \infty} < \varepsilon$. We set $\theta_0$ to be less than $\theta_{min}$ so that the algorithm can separate the clusters with centers achieving the minimum angle $\theta_{min}$. Also note that the range of $\Theta_0$ enables the selection of $\theta_0$. 
	
\begin{remark}[Selection of $\Theta_0$ and $\theta_0$]
	The admissible ranges of $\Theta_0$ and $\theta_0$ depend on the $\bZ$ estimation error $\varepsilon$ and oracle parameters $z_{min}$ and $\theta_{min}$. In particular, smaller $\varepsilon$ and larger $z_{min}, \theta_{min}$ allow for more favorable choices of these thresholds
\end{remark}

Condition \eqref{eq:eps_theta_0} holds when $\varepsilon \leq \frac{z_{min}\theta_{min}}{2\pi}$, and Condition \eqref{eq:Zest_bd_eps} requires $\varepsilon \leq \frac{z_{min}}{2}$. 
Therefore, the conditions in Proposition \ref{prop:support} and Proposition \ref{prop:separate} are satisfied only if 
\begin{equation}\label{eq:eps_0}
	\|\bZ^* - \widehat{\bZ}\|_{2, \infty} < \min\left(\frac{z_{min}\theta_{min}}{2\pi}, \frac{z_{min}}{2}\right) := \varepsilon_0.
\end{equation}
According to \eqref{eq:Z_error_2_infty} and the conditions in Proposition \ref{prop:consequence_RIP}, we can achieve \eqref{eq:eps_0} by increasing the sample size to the order $ M = \mathcal{O}\left(\frac{(N+K)Q \log(NK)}{(\varepsilon_0 / \sigma_\epsilon)^2}\right)$.

\begin{remark}
The condition $\|\bZ^* - \widehat{\bZ}\|_{2,\infty} < \varepsilon_0$ does not guarantee successful clustering via K-means, particularly when some clusters are small. Nevertheless, it guarantees the intra- and inter-cluster consistency. 
Algorithm~\ref{alg:Kmeans_unknown_Q} incorporates iterative refinement using K-means, which increases the probability of convergence as discussed in Remark~\ref{rmk:Kmeans_iterate}.

Although various analyses of the success probability of K-means exist (e.g.,~\cite{ChenYang2021,LuZhou2016,ZCYZ2023}), they often depend on specific distributional assumptions. In contrast, we assume no prior distribution on the graph or kernel coefficients, and the distribution of the matrix $\widehat{\bZ}$ seems intractable. We therefore adopt a deterministic condition which, though conservative, is practical and appears justified by our numerical experiments.
\end{remark}

Note that Proposition \ref{prop:separate} does not necessarily imply that $\widehat \bkappa = \bkappa^*$, because of the unidentifiability of $\kappa_{ij}$ when $a_{ij} = 0$ (recall Remark \ref{rmk:act_inact_kernel}). We show next that we can identify those $\kappa_{ij}$ with $ a_{ij}>0$, which we call active kernels.  
\begin{definition}[Active kernel indices]
	For a given graph matrix $\ba^*$ and type matrix $\bkappa^*$, we define the set of active kernel indices as
	\begin{equation}\label{eq:Q_act}
		\mathcal{Q}^*_{\rm act}:=\bigl\{q = 1, \dots, Q:\exists\,(i,j),\, a^*_{ij}\neq 0 \text{ and } \kappa^*_{ij}=q\bigr\}.
	\end{equation}	
\end{definition}
We obtain the following result, which characterizes the estimation error of the type matrix $\bkappa$.
\begin{proposition}

Suppose $\|\bZ^* - \widehat \bZ\|_{2, \infty} < \varepsilon_0$ with $\varepsilon_0$ is defined in \eqref{eq:eps_0}. Then Algorithm \ref{alg:Kmeans_unknown_Q} provides $\widehat \bkappa$ and $\widehat{Q}$ such that 
	\begin{equation}\label{eq:kappa_recovery}
		\quad \widehat Q = |\calQ^*_{act}|, \quad \max_{(i, j): a_{ij}\neq 0}|\kappa_{ij}^* - \widehat \kappa_{ij}| = 0.
	\end{equation}
\end{proposition}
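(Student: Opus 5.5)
The plan is to combine Proposition~\ref{prop:support} and Proposition~\ref{prop:separate}, using $\varepsilon=\varepsilon_0$ (or any $\varepsilon$ with $\|\bZ^*-\widehat\bZ\|_{2,\infty}<\varepsilon\le\varepsilon_0$). The definition \eqref{eq:eps_0} gives $\varepsilon<z_{min}/2$. It also gives $\pi\varepsilon/(z_{min}-\varepsilon)\le \pi\varepsilon/(z_{min}/2)=2\pi\varepsilon/z_{min}$. This last quantity is at most $\theta_{min}$, which is not quite $\theta_{min}/2$. So the first step is to reconcile the constants: either shrink $\varepsilon_0$ by a factor $2$, or note that one only needs the threshold intervals in \eqref{eq:bd_Theta0_theta0} to be nonempty. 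I would check this carefully and, if necessary, state the proof under \eqref{eq:eps_theta_0} directly, with thresholds $z_0,\Theta_0,\theta_0$ chosen in the admissible ranges of the two propositions.

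\textbf{Step 1: support.} By Proposition~\ref{prop:support}, the threshold $z_0$ separates the pairs exactly. The set $I_0$ equals $\{(i,j):a^*_{ij}=0\}$, and the rows passed to K-means are exactly those with $a^*_{ij}\neq0$. Their true normalized directions $\overline{\bZ}^*_{i,j}=\bc^{*(\kappa^*_{ij})}/\|\bc^{*(\kappa^*_{ij})}\|$ take exactly $|\calQ^*_{\rm act}|$ distinct values. These values are pairwise separated by at least $\theta_{min}$, by the definition \eqref{eq:theta0} and \eqref{eq:Q_act}.

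\textbf{Step 2: clustering output.} Assume Algorithm~\ref{alg:Kmeans_unknown_Q} returns successfully, so \eqref{eq:theta_Theta_proof} holds. Then Proposition~\ref{prop:separate}(i) and (ii) show that the map $q\mapsto$ (estimated cluster containing the pairs of true type $q$) is well defined on $\calQ^*_{\rm act}$ and injective. Every estimated cluster is nonempty, since K-means clusters are nonempty, or else one discards empty ones. Hence every estimated cluster contains some active pair, so the map is also surjective. This gives $\widehat Q=|\calQ^*_{\rm act}|$. The type labels then agree up to a permutation. Using the ordering convention of Remark~\ref{rmk:cluster_order}, and indexing the true types by first occurrence in the same lexicographic order, the permutation is the identity. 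Therefore $\widehat\kappa_{ij}=\kappa^*_{ij}$ for all $a^*_{ij}\neq0$.

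\textbf{Main obstacle.} The main difficulty is that the statement asserts the algorithm \emph{provides} such output, which requires the algorithm to succeed, not merely to be correct when it succeeds. I would argue success as follows. The true partition into $|\calQ^*_{\rm act}|$ groups satisfies $\Theta\le\pi\varepsilon/(z_{min}-\varepsilon)<\Theta_0$, by the intra-cluster bound in the proof of Proposition~\ref{prop:separate}. It also satisfies $\theta\ge\theta_{min}-\pi\varepsilon/(z_{min}-\varepsilon)>\theta_0$, which I would check against the chosen ranges. Finally, this partition is the global K-means optimum, because the points form well-separated tight groups. The iterative update of $\widehat Q$ then moves toward $|\calQ^*_{\rm act}|$: an overestimate forces two centers within a single tight group, so $\theta<\theta_0$, and an underestimate merges two groups, so $\Theta>\Theta_0$. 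I would state the result conditionally on K-means returning its optimal partition, consistent with the remark preceding the statement, and otherwise interpret the result as conditional on a successful output.
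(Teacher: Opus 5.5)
Your Steps 1--2 follow the paper's proof. You use support recovery from Proposition~\ref{prop:support} and the inter- and intra-cluster consistency from Proposition~\ref{prop:separate}. You then show the map from $\calQ^*_{\rm act}$ to the nonempty estimated clusters is well defined, injective and surjective, and fix labels by the convention of Remark~\ref{rmk:cluster_order}. Your explicit requirement that the true active types also be indexed by first occurrence is a small but real tightening of the paper's last step.

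The paper, too, proves only the conditional statement. It simply takes the output to be a successful one satisfying \eqref{eq:theta_Theta_proof}, which is the hypothesis of Proposition~\ref{prop:separate}.

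Your remark on constants is correct. Since $\theta_{min}\le\pi$, one has $\varepsilon_0=z_{min}\theta_{min}/(2\pi)$, and $\varepsilon<\varepsilon_0$ gives only $\pi\varepsilon/(z_{min}-\varepsilon)<\theta_{min}$. Condition \eqref{eq:eps_theta_0} instead requires $\varepsilon\le z_{min}\theta_{min}/(2\pi+\theta_{min})$, and halving $\varepsilon_0$ suffices. Your alternative of only needing the ranges in \eqref{eq:bd_Theta0_theta0} to be nonempty is not weaker: nonemptiness is equivalent to the strict form of \eqref{eq:eps_theta_0}.

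Drop the sketch of unconditional success in your last paragraph, because two of its steps fail.

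First, the true partition need not be the global minimizer of the cosine K-means objective. Suppose one active type class is very large with angular spread near $\pi\varepsilon/(z_{min}-\varepsilon)$, and another class is a single pair. Then splitting the large class and absorbing the singleton into a third class can lower the objective: the gain grows with the class size, while the loss is $O(1)$. This is exactly the caveat in the remark preceding the statement. That optimum then fails the test $\theta>\theta_0$, since the two halves have centers within angle $\pi\varepsilon/(z_{min}-\varepsilon)<\theta_0$, and the update of $\widehat Q$ can cycle. So even conditioning on K-means returning its global optimum does not give success.

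Second, the true partition passes the test $\theta>\theta_0$ only if $\theta_0<\theta_{min}-\pi\varepsilon/(z_{min}-\varepsilon)$. The range \eqref{eq:bd_Theta0_theta0} allows $\theta_0$ up to $\theta_{min}$.

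Your final fallback, stating the result conditionally on a successful output, is the correct reading. It is exactly what the paper proves.
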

\begin{proof}
Note that $\|\bZ^* - \widehat \bZ\|_{2, \infty} < \varepsilon_0$ implies the correct support recovery by Proposition \ref{prop:support}, also the inter-cluster consistency \eqref{eq:inter_cluster_consistency}  and intra-cluster consistency \eqref{eq:intra_cluster_consistency} by Proposition \ref{prop:separate}.

Let $E^*_{\mathrm{act}}:=\{(i,j): a^*_{ij}\neq 0\}$ denote the set of active index pairs.
For $q\in[Q]$, define the true type classes
\[
E_q^* \;:=\; \{(i,j)\in E^*_{\mathrm{act}}:\ \kappa^*_{ij}=q\}.
\]
Let $\{I_1,\dots,I_{\widehat Q}\}$ be the (nonempty) clusters produced by the Algorithm \ref{alg:Kmeans_unknown_Q}, so that
$\bigsqcup_{r=1}^{\widehat Q} I_r = E^*_{\mathrm{act}}$.

Fix any $q$ with $E_q^*\neq\emptyset$ (i.e., $q\in\mathcal Q^*_{\mathrm{act}}$). 
By intra-cluster consistency \eqref{eq:intra_cluster_consistency}, all pairs in $E_q^*$ belong to a \emph{single} estimated cluster; denote it by $I_{r(q)}$ for some $r(q)\in\{1,\dots,\widehat Q\}$. 
By inter-cluster consistency \eqref{eq:inter_cluster_consistency}, no pair with a different true type can belong to $I_{r(q)}$, hence
\[
I_{r(q)} \;=\; E_q^* .
\]
Therefore the map $r:\mathcal Q_{\mathrm{act}}\to\{1,\dots,\widehat Q\}$ is well-defined.

\emph{(Injectivity).}
If $q_1\neq q_2$ are both active, then $E_{q_1}^*$ and $E_{q_2}^*$ are disjoint and, by the inter-cluster condition, must occupy distinct estimated clusters; hence $r(q_1)\neq r(q_2)$. Thus $r$ is injective.

\emph{(Surjectivity).}
Take any estimated cluster $I_r$. By construction, $I_r\subset E^*_{\mathrm{act}}$ and is nonempty, so its elements all have some true type $q\in[Q]$. The intra-cluster consistency forces \emph{all} elements of $I_r$ to have that same true type $q$, so $I_r=E_q^*$ and $q\in\mathcal Q^*_{\mathrm{act}}$. Hence $r$ is surjective onto $\{1,\dots,\widehat Q\}$.

Combining injectivity and surjectivity, $r$ is a bijection between $\mathcal Q^*_{\mathrm{act}}$ and $\{1,\dots,\widehat Q\}$, so
\[
\widehat Q \;=\; |\mathcal Q^*_{\mathrm{act}}|.
\]

Finally, define the estimated type on active pairs by $\widehat\kappa_{ij}=r(\kappa^*_{ij})$ for $(i,j)\in E^*_{\mathrm{act}}$.
By the above, $(i,j)\in E_q^*$ iff $(i,j)\in I_{r(q)}$, so $\widehat\kappa_{ij}$ equals $\kappa^*_{ij}$ \emph{up to the permutation} $r$ of the labels. Using the ordering convention introduced in Remark \ref{rmk:cluster_order}, we obtain
\[
\widehat\kappa_{ij}=\kappa^*_{ij}\quad\text{for all }(i,j)\in E^*_{\mathrm{act}},
\]
which yields
$
\max_{(i,j):\,a_{ij}\neq 0}|\kappa_{ij}^* - \widehat \kappa_{ij}| = 0.
$
\end{proof}

\subsection{Factorization}\label{sec:theory_factorization}
We now characterize the error in the factorization step of Algorithm \ref{alg:Z_kappa_z_c} based on the estimation error $\| \bZ^* - \widehat\bZ\|_{2, \infty}$. We will need the following geometry fact that relates to the cluster mean of cosine distance, and its proof is postponed to Appendix \ref{apdx:geom}.

\begin{lemma}[Spherical mean under cosine $k$-means]\label{lem:geom_avg}
Let $\mathbf{v}^*, \mathbf{z}_1,\dots,\mathbf{z}_n \in \mathbb{R}^d$ satisfy $\|\mathbf{v}^*\| = \|\mathbf{z}_i\| = 1$ and $
\|\mathbf{v}^* - \mathbf{z}_i\| \le \varepsilon$ for all $i=1,\dots,n$ with $\varepsilon < \sqrt{2}$. Define the (Euclidean) mean
$
\mathbf{u} := \frac{1}{n} \sum_{i=1}^n \mathbf{z}_i
$
and the cosine-$k$-means centroid
$
\widehat\bv := \frac{\mathbf{u}}{\|\mathbf{u}\|}\,.
$
Then
\[
\|\mathbf{v}^* - \widehat\bv\| \;\le\; \varepsilon.
\]
\end{lemma}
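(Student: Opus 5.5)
The plan is to convert the chordal-distance hypothesis into an inner-product bound and exploit the fact that the inner product with $\bv^*$ is linear. For unit vectors, $\|\bv^*-\mathbf{z}_i\|^2 = 2 - 2\innerp{\bv^*,\mathbf{z}_i}$. The hypothesis $\|\bv^*-\mathbf{z}_i\|\le\varepsilon$ is therefore equivalent to
\[
\innerp{\bv^*,\mathbf{z}_i}\ \ge\ 1-\tfrac{\varepsilon^2}{2} \;=:\; \gamma ,
\]
and $\gamma>0$ precisely because $\varepsilon<\sqrt2$. In the same way, the desired conclusion $\|\bv^*-\widehat\bv\|\le\varepsilon$ is equivalent to $\innerp{\bv^*,\widehat\bv}\ge\gamma$, since $\widehat\bv$ is also a unit vector.

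First, average the inequalities over $i$. By linearity this gives $\innerp{\bv^*,\mathbf{u}}\ge\gamma>0$. In particular $\mathbf{u}\neq 0$, so $\widehat\bv$ is well defined; this is the only place where $\varepsilon<\sqrt2$ is really needed.

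Second, use the triangle inequality $\|\mathbf{u}\|\le \frac1n\sum_i\|\mathbf{z}_i\| = 1$. Then
\[
\innerp{\bv^*,\widehat\bv} \;=\; \frac{\innerp{\bv^*,\mathbf{u}}}{\|\mathbf{u}\|} \;\ge\; \innerp{\bv^*,\mathbf{u}} \;\ge\; \gamma ,
\]
where the first inequality uses $\innerp{\bv^*,\mathbf{u}}>0$ and $\|\mathbf{u}\|\le 1$. Translating back through $\|\bv^*-\widehat\bv\|^2 = 2-2\innerp{\bv^*,\widehat\bv} \le 2-2\gamma = \varepsilon^2$ gives the claim.

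There is no real obstacle. The one step needing care is the sign: dividing by $\|\mathbf{u}\|\le1$ increases $\innerp{\bv^*,\mathbf{u}}$ only because it is positive, and positivity is exactly what $\varepsilon<\sqrt2$ guarantees. Geometrically, the statement says the spherical cap $\{\mathbf{x}:\innerp{\bv^*,\mathbf{x}}\ge\gamma\}$ is convex, and radially projecting a point of it outward onto the sphere keeps it in the cap. The argument is dimension-free and uses no information about $n$.
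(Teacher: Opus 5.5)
Your proposal is correct and matches the paper's proof step for step. Both convert the chordal bound to $\innerp{\bv^*,\mathbf{z}_i}\ge 1-\varepsilon^2/2$, average to get $\innerp{\bv^*,\mathbf{u}}\ge 1-\varepsilon^2/2>0$, use $\|\mathbf{u}\|\le 1$ to pass this bound to $\widehat\bv$, and then convert back via $\|\bv^*-\widehat\bv\|^2=2-2\innerp{\bv^*,\widehat\bv}$.
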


The following theorem shows that the factorization from $\mathbf{Z}$ to $(\mathbf{a},\mathbf{c})$ is stable. 
\begin{theorem}[Stability of factorization from $\mathbf{Z}$ to $(\mathbf{a},\mathbf{c})$]\label{thm:postproc-stability}
Suppose $\|\bZ^* - \widehat \bZ\|_{2, \infty}\ < \varepsilon$ with $\varepsilon < \sqrt{2}$. Recall that $z_{i, j}^*$ and $\widehat z_{i, j}$ denote the row-norms of $\bZ^*$ and $\widehat\bZ$ in \eqref{eq:z_norm}. For each kernel type $q\in[Q]$, we let $v^*_q:=\|\mathbf{c}^{*(q)}\|>0$ and $\eta^*_q:=1/v^{*2}_q$. Let $v_{\min}:=\min_q \{v^*_q, \widehat v_q\}$, $v_{\max}:=\max_q \{v^*_q, \widehat v_q\}$. 

Assume correct assignments, i.e. \eqref{eq:kappa_recovery} holds, and the matrix $\mathbf{A}^*\in\mathbb{R}^{N\times Q}$ defined by \eqref{eq:A_eta} with the true parameters satisfies $\sigma_{\min}(\mathbf{A}^*)>0$.
Let $\widehat{\mathbf{A}}$ be constructed from $\widehat{\mathbf{Z}}$ analogously to $\mathbf{A}^*$, and let $\widehat{\boldsymbol{\eta}}$ be the least-squares solution of 
$
\widehat{\mathbf{A}}\,\widehat{\boldsymbol{\eta}}=\mathbf{1}_N.
$
Then, whenever
\begin{equation}\label{eq:eps_factorization}
	\varepsilon < \min\left(\frac{\sigma_{min}(\bA^*)}{3 N v_{max}}, \ v_{max}, \ (2-\sqrt 2)z_{min}\right),
\end{equation}
the following bounds hold:
\begin{align}
\|\boldsymbol{\eta}^* - \widehat{\boldsymbol{\eta}}\|
\;\;\;\;\;&\le\;
\frac{\sqrt{Q} N(2v_{max} + \varepsilon)\varepsilon}
{v_{min}^2[\sigma_{\min}(\mathbf{A}^*) - N(2v_{max} + \varepsilon)\varepsilon]}\,,
\label{eq:eta-bound}
\\
\max_{q = 1, \dots, Q}\|\Phi_q^* - \widehat\Phi_q\|_{L^2(\rho)}
\;&\le\; {\frac{v_{\max}^3}{2 v_{min}^2}\,\frac{\sqrt{Q} N(2v_{max} + \varepsilon)\varepsilon}
{\sigma_{\min}(\mathbf{A}^*) - N(2v_{max} + \varepsilon)\varepsilon}+ \frac{ v_{max} \,\varepsilon}{z_{min} - \varepsilon}\,}\label{eq:c-bound}
\\
\|\mathbf{a}^* - \widehat \ba\|_F
\;\;\;\;\;&\le\;{\frac{\varepsilon}{v_{min}} + \frac{v_{max}^4}{2v_{min}^2}\frac{\sqrt{Q} N^{\frac 32}(2v_{max} + \varepsilon)\varepsilon}{\sigma_{\min}(\mathbf{A}^*) - N(2v_{max} + \varepsilon)\varepsilon}.}
\label{eq:a-bound}
\end{align}
\end{theorem}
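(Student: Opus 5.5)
The argument is a chain of perturbation estimates: first $\widehat\bA$, then $\widehat\bEta$, then $\widehat v_q$ and the centroids $\widehat\bv_q$, and finally $\widehat\bc$ and $\widehat\ba$. Throughout we use the correct assignments from \eqref{eq:kappa_recovery}, so $\widehat\bKa_i=\bKa^*_i$ on the active pairs. The zero rows contribute at most $\varepsilon^2$ each; these are either absorbed into the $\varepsilon^2$ term or removed by the support recovery of Proposition~\ref{prop:support}.

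\textbf{Step 1 (perturbation of $\bA$).} For each row,
\[
(\widehat\bA-\bA^*)_{iq}=\sum_{j:\kappa_{ij}=q}\bigl(\widehat z_{i,j}^2-z_{i,j}^{*2}\bigr),
\qquad
|\widehat z_{i,j}^2-z_{i,j}^{*2}|\le|\widehat z_{i,j}-z^*_{i,j}|\,(\widehat z_{i,j}+z^*_{i,j})\le\varepsilon\,(2z^*_{i,j}+\varepsilon).
\]
Since $z^*_{i,j}=a^*_{ij}v^*_{\kappa_{ij}}\le v_{\max}$, each row of $\widehat\bA-\bA^*$ has $\ell^1$ norm at most $N(2v_{\max}+\varepsilon)\varepsilon$. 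Bounding the spectral norm by $\sqrt{\|\cdot\|_1\|\cdot\|_\infty}$, or simply by a crude Frobenius bound, gives
\[
\|\widehat\bA-\bA^*\|_2\le N(2v_{\max}+\varepsilon)\varepsilon=:\Delta .
\]
The assumption $\varepsilon<\sigma_{\min}(\bA^*)/(3Nv_{\max})$ together with $\varepsilon<v_{\max}$ ensures $\Delta<\sigma_{\min}(\bA^*)$. Weyl's inequality then gives $\sigma_{\min}(\widehat\bA)\ge\sigma_{\min}(\bA^*)-\Delta>0$, so the least-squares solution $\widehat\bEta=\widehat\bA^\dagger\bf1_N$ is well defined.

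\textbf{Step 2 (bound on $\bEta$).} Since $\bA^*\bEta^*=\bf1_N$ exactly, we can write
\[
\widehat\bEta-\bEta^*=\widehat\bA^\dagger\bigl(\bA^*-\widehat\bA\bigr)\bEta^* ,
\]
using $\widehat\bA^\dagger\widehat\bA=I$ (full column rank). Hence
\[
\|\widehat\bEta-\bEta^*\|\le \frac{\Delta\,\|\bEta^*\|}{\sigma_{\min}(\bA^*)-\Delta},
\qquad
\|\bEta^*\|\le \sqrt Q/v_{\min}^2 ,
\]
which is \eqref{eq:eta-bound}. The main obstacle is making this step match the stated constants exactly, specifically getting $\|\widehat\bA-\bA^*\|_2\le\Delta$ without an extra $\sqrt N$. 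If that fails, I would accept the row-sum/column-sum bound as the intended one.

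\textbf{Step 3 (norms, then $\bc$).} From $v=\eta^{-1/2}$, the mean value theorem gives
\[
|\widehat v_q-v^*_q|\le \tfrac12\max\eta^{-3/2}\,|\widehat\eta_q-\eta^*_q|\le\tfrac12 v_{\max}^3|\widehat\eta_q-\eta^*_q| ,
\]
using the definitions of $v_{\min},v_{\max}$, which include the estimates. For the directions, the normalized rows in cluster $q$ satisfy $\|\overline{\widehat\bZ}_{i,j}-\overline\bZ^*_{i,j}\|\le\varepsilon/(z_{\min}-\varepsilon)$, as in \eqref{eq:Z_i2j2_bar}, and $\overline\bZ^*_{i,j}=\bv^*_q$. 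The condition $\varepsilon<(2-\sqrt2)z_{\min}$ makes this quantity less than $\sqrt2$, so Lemma~\ref{lem:geom_avg} gives $\|\widehat\bv_q-\bv^*_q\|\le\varepsilon/(z_{\min}-\varepsilon)$. Then
\[
\|\widehat\bc^{(q)}-\bc^{*(q)}\|\le|\widehat v_q-v^*_q|+v_{\max}\|\widehat\bv_q-\bv^*_q\| .
\]
Orthonormality of $\{\psi_k\}$ in $L^2(\rho_L)$ (Assumption~\ref{assum:basis}) converts this coefficient error into the kernel error $\|\Phi^*_q-\widehat\Phi_q\|_{L^2(\rho)}$, which yields \eqref{eq:c-bound}.

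\textbf{Step 4 (bound on $\ba$).} Write
\[
\widehat a_{ij}-a^*_{ij}=\frac{\widehat z_{i,j}-z^*_{i,j}}{\widehat v_q}+z^*_{i,j}\Bigl(\frac1{\widehat v_q}-\frac1{v^*_q}\Bigr).
\]
For the first term, sum the squares over each row and use $\sum_j|\widehat z_{i,j}-z^*_{i,j}|^2\le\|\widehat\bZ_i-\bZ^*_i\|_F^2$. I would route this through the $(2,\infty)$ bound, giving a contribution of $\varepsilon/v_{\min}$. For the second term, use $|1/\widehat v-1/v^*|\le|\widehat v-v^*|/v_{\min}^2$ together with $z^*\le v_{\max}$ and the Step 3 bound. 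Summing over the $N$ rows introduces $\sqrt N$, giving the $N^{3/2}$ factor in \eqref{eq:a-bound}.
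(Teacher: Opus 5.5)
Your skeleton is the paper's: the identity $\widehat\bEta-\bEta^*=\widehat\bA^\dagger(\bA^*-\widehat\bA)\bEta^*$ with Weyl's inequality, the mean-value step for $v=\eta^{-1/2}$, Lemma~\ref{lem:geom_avg} under $\varepsilon<(2-\sqrt2)z_{\min}$, and the same splitting of $\widehat a_{ij}-a^*_{ij}$. The gap you flagged in Step~1 is genuine, though. With only the entrywise bound $z^*_{i,j}\le v_{\max}$, both $\sqrt{\|\cdot\|_1\|\cdot\|_\infty}$ and the Frobenius norm give $N^{3/2}(2v_{\max}+\varepsilon)\varepsilon$, not $\Delta$. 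The paper's proof uses the same entrywise bound and reaches its row estimate $\sqrt{N-1}(2v_{\max}+\varepsilon)\varepsilon$ by dropping a factor: $\|\bKa_i^*\|_2\le\sqrt{N-1}$ and $(\sum_j\Delta z_{i,j}^2)^{1/2}\le\sqrt{N-1}\,\varepsilon$ multiply to $N-1$.

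The missing ingredient is the row normalization applied to whole rows: $\sum_j z^{*2}_{i,j}=\sum_j a^{*2}_{ij}v^{*2}_{\kappa_{ij}}\le v_{\max}^2$, hence $\sum_j z^*_{i,j}\le\sqrt{N-1}\,v_{\max}$. Write $w_j:=\widehat z_{i,j}^2-z^{*2}_{i,j}=\Delta z_{i,j}(2z^*_{i,j}+\Delta z_{i,j})$ with $\Delta z_{i,j}:=\widehat z_{i,j}-z^*_{i,j}$. Then $\|\widehat\bA_i-\bA^*_i\|\le\|w\|_1\le\sqrt{N-1}\,(2v_{\max}+\sqrt{N-1}\,\varepsilon)\varepsilon$, so $\|\widehat\bA-\bA^*\|_2\le\Delta':=\sqrt{N(N-1)}\,(2v_{\max}+\sqrt{N-1}\,\varepsilon)\varepsilon$. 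This matches $\Delta$ at first order, but its quadratic part $\sqrt N(N-1)\varepsilon^2$ exceeds $N\varepsilon^2$, and neither argument removes that.

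What rescues the proof is that the hypothesis is stronger than it looks. The rows of $\bA^*$ are nonnegative with sums $\sum_j z^{*2}_{i,j}\le v_{\max}^2$, so $\sigma_{\min}(\bA^*)\le\|\bA^*\|_F/\sqrt Q\le\sqrt{N/Q}\,v_{\max}^2$. Hence $\varepsilon<\sigma_{\min}(\bA^*)/(3Nv_{\max})$ forces $\sqrt{N-1}\,\varepsilon<v_{\max}/3$ and $\Delta'<\tfrac79\,\sigma_{\min}(\bA^*)$. Your Steps~2 and~3 then run verbatim with $\Delta'$ in place of $\Delta$. They give \eqref{eq:eta-bound} and \eqref{eq:c-bound} with the stated constants to first order in $\varepsilon$, which is all \eqref{eq:a_c_error_approx} uses.

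Step~4 needs the same ingredient and has further problems. To get only a factor $\sqrt N$ from summing over rows you need $\|\bZ^*\|_F\le\sqrt N\,v_{\max}$, which is the row normalization again; $z^*_{i,j}\le v_{\max}$ alone gives $\sqrt{N(N-1)}\,v_{\max}$. The first term is $\frac{1}{v_{\min}}(\sum_{i\ne j}\Delta z_{i,j}^2)^{1/2}$, and the $(2,\infty)$ hypothesis bounds it only by $\sqrt{N(N-1)}\,\varepsilon/v_{\min}$, not by $\varepsilon/v_{\min}$. The paper makes the same replacement right after \eqref{eq:a-a2}.

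This is not slack. Take $Q=1$ and $a^*_{ij}=(N-1)^{-1/2}$ with $N-1$ even. In each block, rescale half the rows of $\bZ^*_i$ to norm $z^*+\alpha$ and half to norm $z^*-\beta$, where $0<\alpha<\beta<\varepsilon$ and $2z^*(\beta-\alpha)=\alpha^2+\beta^2$. All hypotheses hold for small $\varepsilon$, and $\widehat\bA=\bA^*$, so $\widehat\bEta=\bEta^*$ and $\widehat\bc=\bc^*$. Yet $\|\widehat\ba-\ba^*\|_F^2=N(N-1)(\alpha^2+\beta^2)/(2v^{*2})$, which is about $N(N-1)\varepsilon^2/v^{*2}$.

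Your chain also gives $v_{\min}^4$, not $v_{\min}^2$, in the second term of \eqref{eq:a-bound}, as in \eqref{eq:a_c_error_approx}. One factor $v_{\min}^{-2}$ comes from $|1/\widehat v_q-1/v^*_q|\le|\widehat v_q-v^*_q|/v_{\min}^2$ and another from $\|\bEta^*\|\le\sqrt Q/v_{\min}^2$. The printed $v_{\min}^2$ cannot be right: under $\bc\mapsto\lambda\bc$ the left side is invariant while that term scales like $\lambda^2$. In the example with $N=3$, $v^*=\tfrac12$ and small $\varepsilon$, the printed right-hand side is about $3.5\,\varepsilon$ while the actual error is about $4.9\,\varepsilon$.

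With $v_{\min}^4$, the bound on $\sigma_{\min}(\bA^*)$ above shows the second term is always at least $\sqrt{N(N-1)}\,\varepsilon/v_{\min}$. It therefore absorbs the true first term, and the corrected \eqref{eq:a-bound} (with $\Delta'$) holds up to a factor $2$. That corrected statement, not the literal printed inequality, is what your Step~4 can deliver.
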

In particular, when $\varepsilon$ is small (so that $\varepsilon^2$ is negligible), Theorem \ref{thm:postproc-stability} implies that
\begin{equation}\label{eq:a_c_error_approx}
\max_{q = 1, \dots, Q}\|\Phi_q^* - \widehat\Phi_q\|_{L^2(\rho)} \lesssim \left(
\frac{N\sqrt{Q} v_{max}^4}{\sigma_{min}(\bA^*)v^2_{min}} + \frac{v_{max}}{z_{min}}\right)\,\varepsilon,
\qquad
\|\mathbf{a}^* - \widehat{\mathbf{a}}\|_F
\;\lesssim\;
\Big(\frac{N\sqrt{NQ} v_{\max}^5}{\sigma_{\min}(\mathbf{A^*}) v_{\min}^4}+ \frac{1}{v_{\min}}\Big)\,\varepsilon.
\end{equation}

\begin{proof}
(i). We first control the error in estimating $\bEta$. Recall that $\bA^* \bEta^* = \widehat \bA \widehat \bEta =  \mathbf{1}_N$. Let $\bA^{*\dagger}$ and $\widehat \bA ^\dagger$ denote the pseudo-inverse of $\bA^*$ and $\widehat \bA$. By the assumption that $\sigma_{min}(\bA^*) >0$, if $\|\bA^* - \widehat \bA\|_2 < \sigma_{min}(\bA^*)$, it holds that $\bA^{*\dagger}\bA^* = \widehat \bA^\dagger \widehat \bA = \Id_Q$, where $\Id_Q$ is the identity matrix in $\R^{Q\times Q}$. Therefore 
\begin{equation*}
	\widehat \bEta - \bEta^* = (\widehat \bA^\dagger \bA^*- \widehat \bA^\dagger \widehat \bA)\bEta^* = \widehat \bA^\dagger(\bA^* - \widehat \bA)\bEta^*.
\end{equation*}
Hence we have 
\begin{equation}\label{eq:eta_error}
	\|\bEta^* - \widehat \bEta\| \leq \|\widehat \bA^\dagger\|_2\|\bA^* - \widehat \bA\|_{2}\|\bEta^*\| \leq \frac{\|\mathbf{A}^* - \widehat{\mathbf{A}}\|_2\,\|\boldsymbol{\eta}^*\|}{\sigma_{\min}(\mathbf{A}^*) - \|\mathbf{A}^* - \widehat{\mathbf{A}}\|_2},
	\end{equation}
where the second inequality follows from the fact that $\|\widehat \bA^\dagger\|_2 = 1/{\sigma_{min}(\widehat \bA)}$ and $\sigma_{min}(\widehat \bA) \geq \sigma_{min}(\bA^*) - \|\bA^* - \widehat \bA\|_2$ since singular values are 1-Lipschitz continuous with respect to the operator norm. 
 It is clear that $\|\bEta^*\| \leq \frac{\sqrt{Q}}{v_{min}^2}$ by contruction, we are left to control $\|\mathbf{A}^* - \widehat{\mathbf{A}}\|_2$.

Since $\widehat \bkappa = \bkappa^*$, it holds that $\widehat{\bKa} = \bKa^*$ and therefore the $i$-th row holds that 
\begin{equation}
	 \bA^*_i - \widehat \bA_i = [z_{i, 1}^{* \,2} - \widehat z_{i, 1}^2, \dots, z_{i, N}^{* \,2} - \widehat z_{i, N}^2]\bKa_i^*.
\end{equation}
Take Euclidean norm $\|\cdot \|$ in $\R^{1 \times Q}$, for a fixed $i$, we have
\begin{equation}
	\|\bA^*_i - \widehat \bA_i\| \leq \|\bKa_i^*\|_2(\max_j{z_{i, j}^*} + \max_j{\widehat z_{i, j}})\sqrt{\sum_{j \neq i, j = 1}^{N}\Delta z_{i, j}^2},
\end{equation}
where $\|\bKa_i^*\|_2$ represents the operator 2-norm. Now for each $(i,j)$, 
\[
|\Delta z_{i,j}|
=
\big|\|\mathbf{Z}_{i, j}^*\| - \|\widehat{\mathbf{Z}}_{i, j}\|\big|
\le \|\mathbf{Z}_{i, j}^* - \widehat{\mathbf{Z}}_{i, j}\| \leq \|\mathbf{Z}_{i, j}^* - \widehat{\mathbf{Z}}_{i, j}\|_{2, \infty} < \varepsilon,
\]
and because of the row-normalization constraints on $\ba^*$,  
$$\max_{i, j}{z^*_{i, j}} = \max_{i, j} a^*_{ij} v^*_{\kappa^*_{ij}} \leq v_{max}.$$
Therefore
$
	\max_{i, j}{\widehat z_{i, j}} + \max_{i, j}{z_{i, j}^*}< 2v_{max} + \varepsilon. $
At last, by the construction of $\bKa_i^*\in \{0, 1\}^{(N-1)\times Q}$ in \eqref{Def:Assign_bKa}, it has exactly one 1 for each row, hence 
\begin{equation}
	\|\bKa_i^*\|_2 \leq \|\bKa_i^*\|_F = \sqrt{N-1}.
\end{equation}
We then conclude that 
\begin{equation}
	\|\bA^*_i - \widehat \bA_i\| \leq \sqrt{N-1}(2v_{max} + \varepsilon)\varepsilon,
\end{equation}
Therefore, it holds
\begin{equation}
	\|\bA^* - \widehat \bA\|_2 \leq N(2v_{max} + \varepsilon)\varepsilon, 
\end{equation}
substituting the estimate into \eqref{eq:eta_error} yields \eqref{eq:eta-bound}. The requirement $\|\bA^* - \widehat \bA\|_2 < \sigma_{min}(\bA^*)$ is satisfies by the upper bound of $\varepsilon$ in \eqref{eq:eps_factorization}.

(ii). We then consider the estimation error of the kernels. Since we choose an orthonormal basis $\{\psi_k\}$ in $L^2(\rho)$, it suffices to consider the bound for coefficients. 
Suppose $(i, j) \in I_q$, we have $\overline \bZ^*_{i, j} = \bv^*_q$, where $\bv^*_q$ is the normalization of $\bc^{*(q)}$ so that $\bc^{*(q)} = v_q^* \bv^*_q,$ with $v^*_q$ represents the 2-norm of $\bc^{*(q)}$. 
We first estimate the error in $\bv_q$. According to \eqref{eq:Z_i2j2_bar}, we have for any $(i, j) \notin I_0$, 
$\|\overline \bZ^*_{i, j} -  \overline {\widehat \bZ}_{i, j}\| < \frac{\varepsilon}{z_{min} - \varepsilon}$. Moreover, by \eqref{cos_dist_center}, the estimated cluster centers derived from the K-means algorithm satisfy
\begin{equation}
	\widehat\bu_q = \frac{1}{I_q}\sum_{(i, j) \in I_q} \overline{\widehat{\bZ}}_{i, j}, \quad \widehat\bv_q = \frac{\widehat\bu_q}{\|\widehat\bu_q\|}, \quad q = 1, \dots, Q.
\end{equation}
Then by Lemma \ref{lem:geom_avg}, we have 
\begin{equation}
	\|\bv_q^* - \widehat \bv_q\| < \frac{\varepsilon}{z_{min} - \varepsilon},
\end{equation}
provided $\frac{\varepsilon}{z_{min} - \varepsilon} < \sqrt{2}$, which was achieved if $\varepsilon$ satisfies \eqref{eq:eps_factorization}.
We then estimate the error in $v_q$. The scalar map $g(t)=1/t^2$ satisfies $|g'(t)|=2 t^{-3}$. Hence, by the Mean-Value Theorem, 
$
|\eta_q^* - \widehat \eta_q| = |1/v^{*2}_q - 1/\widehat{v}_q^2| = 2 t^{-3} |v_q^* - \widehat v_q|,
$
for some $t$ between $\widehat{v_q}$ and $v_q^*$. Hence, for each $q$, 
\begin{equation}\label{eq:v-v}
	|v_q^* - \widehat v_q| = \frac{t^3}{2} |\eta_q^* - \widehat \eta_q| \leq \frac{v_{max}^3}{2}|\eta_q^* - \widehat \eta_q| \leq \frac{v_{max}^3}{2} \|\bEta^* - \widehat\bEta\|.
\end{equation}
At last, since $\|\bv^*\| = 1$, it holds that, 
\[
\|\mathbf{c}^{*(q)} - \widehat{\mathbf{c}}^{(q)}\|
=
|v^*_q -  \widehat v_q| \|\bv^*\|  + |\widehat v_q|\|\bv^*_q - \widehat \bv_q\|
\leq \frac{v_{max}^3}{2}\|\bEta^* - \widehat\bEta\| + \frac{ v_{max} \,\varepsilon}{z_{min} - \varepsilon}\,. 
\]
Indeed, \eqref{eq:c-bound} follows by substituting \eqref{eq:eta-bound} into the equation above. 

(iii). Finally, we estimate the error in inferring $\ba$. If $a^*_{ij} = 0$, we can recover $\widehat{a}_{ij} =0$ because of Proposition \ref{prop:support}. Then if $a^*_{ij} \neq 0$, recall that $\ba$ is recovered entrywise through \eqref{eq:azv}, it holds 
\begin{align}
	\| \ba^* - \widehat \ba\|_F 
	&= 
	\sqrt{\sum_{i \neq j} \abs{\frac{z^*_{i, j}}{v^*_{\kappa^*_{ij}}} - \frac{\widehat z_{i, j}}{\widehat v_{\widehat \kappa_{ij}}}}^2} 
	\leq 
	\left( \max_{i \neq  j}\frac{1}{\widehat v_{\widehat \kappa_{ij}}}\right)
	\sqrt{\sum_{i \neq j} \abs{z^*_{i, j} - \widehat z_{i, j}}^2} 
	+
	\left( \max_{ i \neq j}\abs{\frac{1}{v^*_{\kappa^*_{ij}}} - \frac{1}{\widehat v_{\widehat \kappa_{ij}}}}\right)\sqrt{\sum_{i \neq j} \abs{ z^*_{i, j}}^2 }\notag\\
	&\leq
	\frac{1}{v_{min}}\|\bZ^* - \widehat \bZ\|_F + \frac{1}{v_{min}^2}\left(\max_{q}\abs{v^*_{q} - \widehat v_{q}}\right)\| \bZ^*\|_F, \label{eq:a-a2}
\end{align}
where the first step is the triangle inequality and the second follows from the definitions and a similar Mean-Value Theorem for $f(t) = 1/t^2$. Moreover, 
\begin{equation}
	\|\bZ^*\|_F  = \sqrt{\sum_{i \neq j}\abs{a^*_{ij} v^*_{\kappa^*_{ij}}}^2} \leq v_{max}\sqrt{\sum_{i \neq j}\abs{a^*_{ij}}^2} = v_{max}\sqrt{\sum_{i = 1}^N 1} = v_{max} \sqrt{N},
\end{equation}
where the last step follows from the row-normalization condition of $\ba^*$. Hence together with \eqref{eq:a-a2} and \eqref{eq:v-v}, we have 
\begin{equation*}
	\|\ba^* - \widehat \ba\|_F \leq \frac{\varepsilon}{v_{min}} + \frac{\sqrt{N} v_{max}^4}{2v_{min}^2}\|\bEta^* - \widehat \bEta\|
\end{equation*}
and \eqref{eq:a-bound} follows from \eqref{eq:eta-bound} and above expression. 
\end{proof}

The errors in the recovered kernels and graph matrix scale linearly with $\varepsilon$ and $Q^{1/2}$, with constants critically determined by the disparity in kernel coefficients (i.e., relative kernel scales). The kernel error grows linearly with $N$, while the graph error grows with a higher rate of $N^{3/2}$, since the Frobenius norm of the graph matrix $\ba$ itself scales with $N^{1/2}$.

We now discuss the well-posedness of the linear system $\bA \bEta = \bf1_N$. As mentioned in Section \ref{sec:post_processing}, $\bA$ might not be invertible. In fact, we shall see that $\rank(\bA)$ is closely related to the number of active kernels. 

\begin{proposition}[Rank of $\mathbf{A}$ and number of active kernels]
Let $\bA$ be defined in \eqref{eq:A_eta} and recall $\calQ_{act}$ defined in \eqref{eq:Q_act}. Assume that $N \geq |\calQ_{act}|$, then 
\begin{equation}
	\rank(\bA) \leq |\calQ_{act}|. 
\end{equation}
\end{proposition}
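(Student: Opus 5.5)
The plan is to locate the zero columns of $\bA$: every column indexed by an inactive type vanishes, so $\bA$ has at most $|\calQ_{act}|$ nonzero columns. Recall from \eqref{eq:A_eta} that the $i$-th row is $\bA_i = [z_{i,1}^2,\dots,z_{i,N}^2]\bKa_i$. Writing this out entrywise with the assignment matrix \eqref{Def:Assign_bKa}, the $(i,q)$ entry of $\bA$ is
\begin{equation*}
\bA_{iq} = \sum_{j\neq i} z_{i,j}^2\,\mathbbm{1}_{\{\kappa_{ij}=q\}} = \sum_{j\neq i} a_{ij}^2\, v_{\kappa_{ij}}^2\,\mathbbm{1}_{\{\kappa_{ij}=q\}} = v_q^2 \sum_{j\neq i,\ \kappa_{ij}=q} a_{ij}^2,
\end{equation*}
where the second equality uses \eqref{eq:z-a-v}, i.e.\ $z_{i,j}=a_{ij}v_{\kappa_{ij}}$.

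Next, fix $q\notin\calQ_{act}$. By the definition \eqref{eq:Q_act}, every pair $(i,j)$ with $\kappa_{ij}=q$ has $a_{ij}=0$. Hence every summand in the expression for $\bA_{iq}$ vanishes, for all $i\in[N]$. So the $q$-th column of $\bA$ is identically zero.

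Therefore the column space of $\bA$ is spanned by the at most $|\calQ_{act}|$ columns indexed by $\calQ_{act}$. This gives $\rank(\bA)\le|\calQ_{act}|$. The hypothesis $N\ge|\calQ_{act}|$ only ensures this bound is not already implied by the trivial bound $\rank(\bA)\le\min(N,Q)$; it plays no role in the argument.

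There is no real obstacle. The only point requiring care is the bookkeeping convention that the diagonal index $j=i$ is omitted in $\bKa_i$ and in the row-norm vector. Since $a_{ii}=0$, including or excluding that index does not affect the entries.

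Optionally, one can also remark that a column indexed by an active $q$ need not be nonzero in every row but is nonzero in at least one row. The bound is attained when these active columns are linearly independent, which is the condition $\sigma_{\min}(\bA^*)>0$ used in Theorem \ref{thm:postproc-stability} after restricting to active types.
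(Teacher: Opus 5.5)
Your proof is correct and follows the paper's argument. The paper likewise computes $A_{iq}=v_q^2\sum_{j\neq i,\,\kappa_{ij}=q}a_{ij}^2$, writes this as $\bA=\bS\,\Diag(v_1^2,\dots,v_Q^2)$ with $S_{iq}=\sum_{j\neq i,\,\kappa_{ij}=q}a_{ij}^2$, and observes that the columns of inactive types vanish. Your direct zero-column argument skips the step $\rank(\bA)=\rank(\bS)$ and so does not need $v_q>0$, and you are right that the hypothesis $N\ge|\calQ_{act}|$ is never used.
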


\begin{proof}
Let $v_q:=\|\mathbf{c}^{(q)}\|>0$ for $q\in[Q]$. For each $i\in[N]$ and $q\in[Q]$ set 
\[
S_{iq}:=\sum_{j\neq i:\,\kappa^*_{ij}=q} a_{ij}^{2}
\quad\text{and}\quad 
\mathbf{S}:=\bigl[S_{iq}\bigr]\in\mathbb{R}^{N\times Q}.
\]
By definition of $\bKa_i$ in \eqref{Def:Assign_bKa}, 
\[
A_{iq}=\sum_{j\neq i} z_{ij}^2 \, (\bKa_i)_{jq}
=\sum_{j\neq i:\,\kappa_{ij}=q} z_{ij}^2.
\]
Using $z_{ij}=a_{ij}v_{\kappa_{ij}}$, we get
\[
A_{iq}=v_q^2 \sum_{j\neq i:\,\kappa_{ij}=q} a_{ij}^2
= v_q^2\, s_{iq},
\]
i.e. $\mathbf{A}=\mathbf{S}\,\mathrm{Diag}(v_1^2,\dots,v_Q^2)$. Since $v_q>0$, right-multiplication by the diagonal matrix does not change column dependencies on the active columns, so
$\rank(\mathbf{A})=\rank(\mathbf{S})$. Now suppose there exists $q\in[Q]$ such that $q \notin \calQ_{act}$, then it holds $S_{iq} = 0, i \in [N]$, i.e. the $i$-th row of $\bS$ are zeros, hence $\rank(\bA)  = \rank(\bS) \le |\calQ_{act}|.$
\end{proof}

\begin{remark}[Rank deficiency and kernel norm identifiability]\label{rmk:rank_A}
If $\operatorname{rank}(\mathbf{S}) = |\mathcal{Q}_{\mathrm{act}}| < Q$, then some kernels are \emph{inactive} and do not appear in the system; these can be ignored, and the estimation is performed only on the active kernels.

If $\operatorname{rank}(\mathbf{S}) < |\mathcal{Q}_{\mathrm{act}}|$, the active kernel columns of $\mathbf{S}$ are linearly dependent, preventing unique identification of kernel norms. This occurs, for instance, when two kernel types always appear in fixed proportions across all rows. Let $N=3$ and $Q = |\mathcal{Q}_{\mathrm{act}}|=2$. Set
\[
\mathbf{a} =
\begin{bmatrix}
0 & \sqrt{1/3} & \sqrt{2/3} \\
\sqrt{1/3} & 0 & \sqrt{2/3} \\
\sqrt{1/3} & \sqrt{2/3} & 0
\end{bmatrix},
\quad
\boldsymbol{\kappa} =
\begin{bmatrix}
* & 1 & 2 \\
1 & * & 2 \\
1 & 2 & *
\end{bmatrix}
\implies
\mathbf{S} =
\begin{bmatrix}
1/3 & 2/3 \\
1/3 & 2/3 \\
1/3 & 2/3
\end{bmatrix},
\]
whose columns are proportional, so $\mathrm{rank}(\mathbf{S}) = 1 < 2$. Since $\mathbf{A} = \mathbf{S}\,\mathrm{Diag}(v_1^2,v_2^2)$, this dependency carries over to $\mathbf{A}$, and the system \eqref{eq:LS_eta} admits infinitely many solutions for $\boldsymbol{\eta}$.

While a full column rank of $\mathbf{S}$ on $\mathcal{Q}_{\mathrm{act}}$ is necessary for unique kernel norm recovery, non-uniqueness here is not critical: the normalization constraint on $\mathbf{A}$ is somewhat artificial, and the essential interaction structure is encoded in $\mathbf{Z}$. One can still recover kernels accurately \emph{up to a constant scaling factor}, which suffices for structural identification.
\end{remark}

\section{Numerical Experiments}\label{Sec:Numerics}

In this section, we present numerical examples illustrating the performance of our algorithms, using the following error metrics.   For the matrix sensing step, we measure  
\begin{equation}
    \textbf{$\bZ$ error: } \|\bZ^* - \widehat{\bZ}\|_{2, \infty},
\end{equation}
where $\bZ^*$ and $\widehat{\bZ}$ denote the true and estimated kernel-graph embedding matrices, respectively. 
We use the absolute error so that this metric can be directly compared with the threshold $\varepsilon_0$ in~\eqref{eq:eps_0}. In the clustering step, we quantify misclassification using
\begin{equation}
	    \textbf{Type error: } 
    \frac{\#\{(i,j) : a^*_{ij}\neq 0, \kappa^*_{ij} \neq \widehat{\kappa}_{ij}\}}{N(N-1)},
\end{equation}
where $\kappa^*_{ij}$ is the true kernel type for pair $(i,j)$. Index pairs $(i,j)$ with $a^*_{ij} = 0$ are excluded since the corresponding kernel is inactive (see Remark \ref{rmk:act_inact_kernel}). To resolve label-permutation ambiguity, kernels are relabeled in the order they first appear in the system (see Remark \ref{rmk:cluster_order}). We evaluate the factorization step by measuring the recovery of the graph structure and interaction kernels. The relative graph error is 
\begin{equation}
    \textbf{Graph error: } 
    \frac{\|\ba^* - \widehat{\ba}\|_F}{\|\ba^*\|_F}, 
\end{equation}
where $\ba^*$ is the true graph matrix and $\|\ba^*\|_F = \sqrt{N}$ due to row-normalization. For the kernels, the relative error for type $q$ is 
\begin{equation}
    \textbf{Kernel error: } 
    \frac{\|\Phi^*_q - \widehat{\Phi}_q\|_{L^2(\rho)}}{\|\Phi^*_q\|_{L^2(\rho)}}, \quad q = 1, \dots, Q,
\end{equation}
where $\Phi_q^*$ is the $q$-th true kernel. When presenting convergence results, we report the average kernel error, computed as the mean of the $Q$ individual errors. 
Finally, we evaluate prediction accuracy by comparing trajectories generated from the estimated parameters (denoted by $\widehat{\bX}$) with those from the true parameters, using identical stochastic forcing and initial conditions. To mitigate the effect of potential type misclassification, the trajectories are generated directly from the estimated embedding matrix $\widehat{\bZ}$ rather than $(\widehat \ba, \{\widehat \Phi_q\}, \widehat\bkappa)$. The trajectory prediction error is defined as
\begin{equation}
    \textbf{Trajectory error: }
    \frac{1}{M'T} \sum_{m=1}^{M'} \|\bX_t^{(m)} - \widehat{\bX}_t^{(m)}\|_{L^2([0,T]; \,\R^{N \times d})}
    \ \approx\ 
    \frac{1}{M'L} \sum_{m=1}^{M'} \sum_{l=1}^{L} 
    \|\bX_{t_l}^{(m)} - \widehat{\bX}_{t_l}^{(m)}\|_{\R^{N\times d}}.
\end{equation}

In our numerical experiments, unless otherwise specified, we assume radial interaction kernels:
\begin{equation}\label{Def:radial_ker}
    \Phi_{\kappa_{ij}}(X^j_t - X^i_t) 
    = \phi_{\kappa_{ij}}\big(|X^j_t - X^i_t|\big) 
      \frac{X^j_t - X^i_t}{|X^j_t - X^i_t|},
    \quad X^i_t, X^j_t \in \R^d.
\end{equation}
This assumption simplifies the computations but is not essential; our method applies equally to general, non-radial kernels.

We start with a typical example of joint inference of multitype interaction kernels on the graph. Then we study the convergence of the estimation error with respect to the sample size $M$. 

\subsection{Typical example: predator-prey}

\begin{figure}[htbp]
\center
\includegraphics[width=1\textwidth]{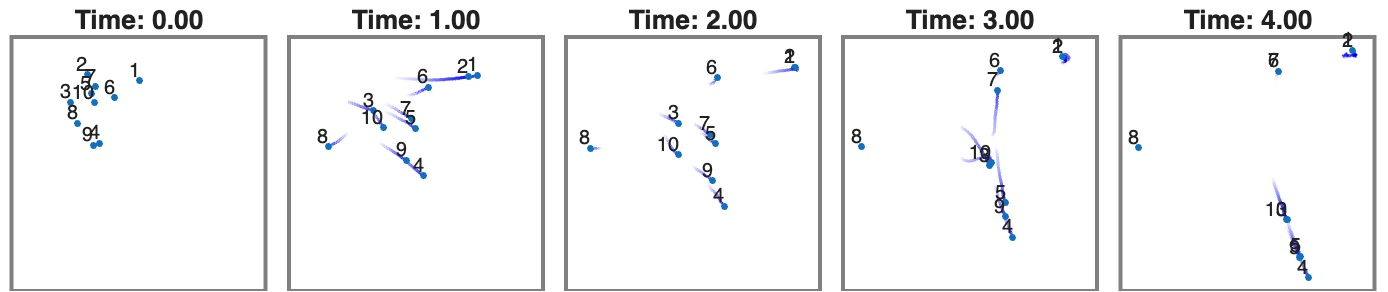}
\includegraphics[width=1\textwidth]{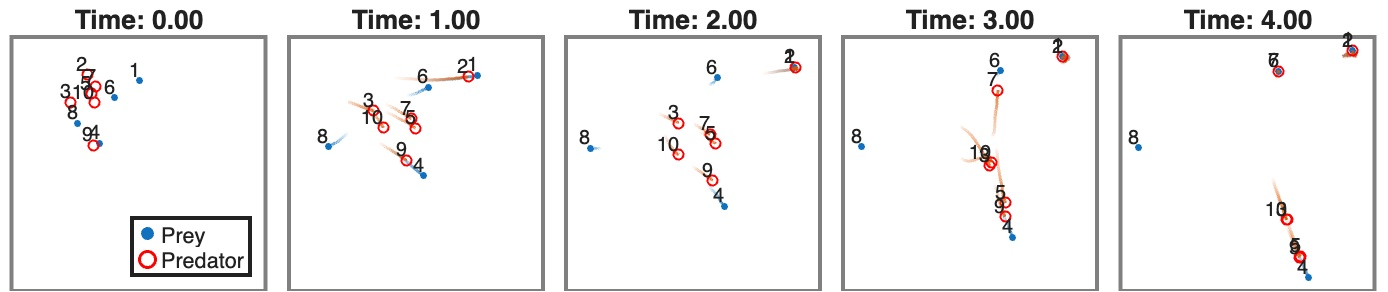}
\caption{Typical trajectory of a predator-prey system. \textbf{Top:} Observed trajectory data of 6 predators and 4 prey, with roles of individual particles unknown. \textbf{Bottom:} Ground-truth classification of predators and prey.}
\label{fig:traj}
\end{figure}

We first consider a predator-prey system with $N = 10$ particles, consisting of 6 predators and 4 prey. Four interaction types are present: predator-prey ($\phi_1$), prey-prey ($\phi_2$), prey-predator ($\phi_3$), and predator-predator ($\phi_4$). The kernels are represented using cubic splines on [0,5] with 10 knots, giving a hypothesis space with dimension $K = 13$. Positive kernel values indicate attraction, while negative values indicate repulsion. Predators exhibit long-range attraction toward prey, modeling sharp vision, whereas prey display only short-range repulsion from predators, reflecting lower alertness. Predator-predator interactions show mild attraction to facilitate cooperation, and prey-prey interactions exhibit mild repulsion to promote dispersion. See true kernels in Figure \ref{fig:typical_example_graph_kernel}, and a typical trajectory in Figure \ref{fig:traj}.

We generate $M = 20$ independent trajectories in $\mathbb{R}^2$, each with $L = 20$ time steps ($\Delta t = 0.1$), starting from uniformly sampled initial positions in $[0,4]^2$. 
The dynamics are integrated using the Euler method, with derivatives approximated by finite differences. 
A stochastic forcing with viscosity $\sigma = 10^{-3}$ and observation noise $\sigma_o = 10^{-3}$ are applied.  

The joint estimation errors are presented in Table \ref{table:typical_example}, and the results for the graph and kernels are shown in Figure~\ref{fig:typical_example_graph_kernel}. 
The type matrix is exactly recovered. In this example, $z_{min} = 0.5043$, $\theta_{min} = 0.5967$ giving $\varepsilon_0 = 0.0437$. Recall that condition~\eqref{eq:eps_0} is sufficient but not necessary for successful clustering. 
The graph error is small, and the kernel estimates closely follow the true curves, with only minor discrepancies for small exploration measure $\rho$, where data are scarce. 
As reported in the table, the post-processing ALS step further reduces all error metrics.

\begin{table}[htbp]
\center
\begin{tabular}{|c|c|c|c|cccc|c|}
\hline
\multirow{2}{*}{\textbf{Type of errors}} & \multirow{2}{*}{Z} & \multirow{2}{*}{Type} & \multirow{2}{*}{Graph} & \multicolumn{4}{c|}{Kernel}                                                                              & \multirow{2}{*}{Trajectory} \\ \cline{5-8}
                                &                    &                       &                        & \multicolumn{1}{c|}{$\phi_1$} & \multicolumn{1}{c|}{$\phi_2$} & \multicolumn{1}{c|}{$\phi_3$} & $\phi_4$ &                             \\ \hline
\textbf{Factorization}                      & 0.173           & 0                     & 0.036               & \multicolumn{1}{c|}{0.046} & \multicolumn{1}{c|}{0.298} & \multicolumn{1}{c|}{0.207} & 0.060 & 1.378                    \\ \hline
\textbf{Post-processing}             & 0.115           & 0                     & 0.033               & \multicolumn{1}{c|}{0.023} & \multicolumn{1}{c|}{0.106} & \multicolumn{1}{c|}{0.050} & 0.022 & 0.549                    \\ \hline
\end{tabular}
\caption{Error metrics for the predator-prey system.}
\label{table:typical_example}
\end{table}

\begin{figure}[ht]
\center
\includegraphics[width=0.51\textwidth]{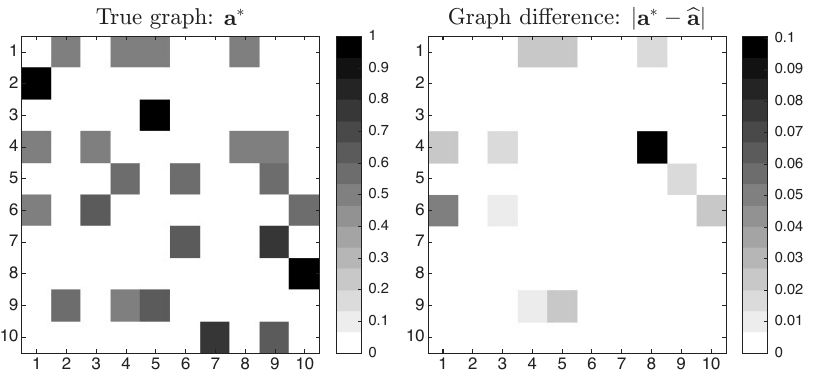}
\includegraphics[width=0.48\textwidth]{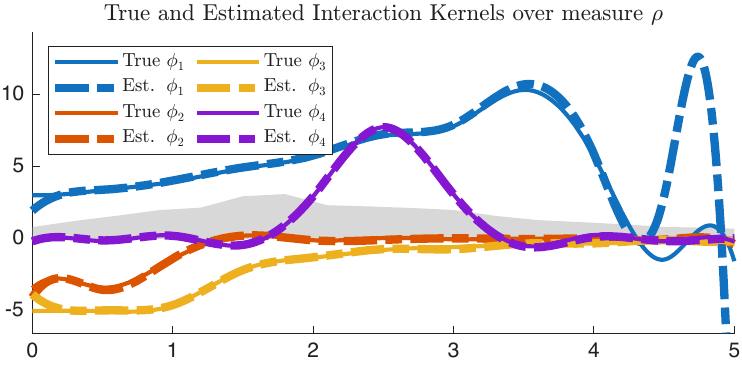}
\caption{Graph, kernel, and their estimations in the predator-prey system: predator-prey ($\phi_1$), prey-prey ($\phi_2$), prey-predator ($\phi_3$), and predator-predator ($\phi_4$). }
\label{fig:typical_example_graph_kernel}
\end{figure}

\subsection{Convergence with sample size $M$}\label{sec:conv_M}

We study the convergence of the estimation errors as the sample size $M$ increases.
We considered a system with $N = 12$ agents and interaction kernels generated from a Fourier basis with random coefficients, with hypothesis space dimension $K=10$, and the number of kernel types fixed at $Q=3$. 
The type matrix $\kappa$ is chosen randomly, and trajectories of length $L=1$ are simulated in dimension $d=2$ with time step $\Delta t = 10^{-1}$ and viscosity $\sigma = 10^{-4}$. The sample size $M$ varies log-uniformly between $2^4$ and $2^{10}$.
For each $M$, the experiment is repeated 10 times, and we report the mean error together with the 90\% quantile.
Errors in $\bZ$, the graph, kernels, and trajectory prediction are shown on a logarithmic scale.
The type classification error is reported as a ratio between 0 and 1, and whenever the number of clusters is not correctly recovered, we assign the kernel error a default value of $10^3$. We consider two graph structures: one relatively sparse and the other denser. 
The difference is governed by the minimum nonzero entry $a_0$ of the adjacency matrix $\ba$, 
set to $0.05$ for the dense graph and $0.25$ for the sparser one. 
Owing to the row-normalization constraint, a larger $a_0$ corresponds to increased sparsity. 
The chosen kernel settings yields $\theta_{\min} = 0.8272$ 
and a minimum kernel coefficient norm of $v_{\min} = 0.9273$. 
Since the threshold for successful recovery depends on $z_{\min}$, 
which itself is determined by $a_0$ and $v_{\min}$ via~\eqref{eq:z0_a0_c0}, 
this setup enables a direct comparison of how graph sparsity influences estimation performance. 
The corresponding results are reported in Figure~\ref{fig:conv_M}.

In the matrix sensing stage, Proposition~\ref{prop:consequence_RIP} establishes that, under the RIP assumption, the estimation error of $\bZ$ decays at rate $O(M^{-1/2})$. Empirically, for the sparse graph, we fit a decay rate of $0.58$, consistent with the theoretical prediction. In the clustering stage, Propositions~\ref{prop:support} and~\ref{prop:separate} guarantee that once $\|\bZ^* - \widehat{\bZ}\|_{2,\infty} < \varepsilon_0$, our algorithms ensure recovery of the type matrix $\bkappa$ in the sense of~\eqref{eq:kappa_recovery}. This behavior is confirmed by the close correspondence between the $\bZ$ error and the type error in our experiments. In our setting of two graphs, the corresponding separation thresholds are $\varepsilon_0 = 0.0147$ and $\varepsilon_0 = 0.0835$, so type recovery in the dense graph requires larger sample sizes. Finally, in the factorization stage, Theorem~\ref{thm:postproc-stability} implies that, once $\bkappa$ is exactly recovered, both the graph and kernel errors decay at the same rate as the $\bZ$ error, which is also observed numerically. Overall, the experiments corroborate the theoretical guarantees developed in Section~\ref{Sec:Theory}.

We emphasize that the trajectory error exhibits decay even without successful recovery of the type matrix, since the estimated trajectories are generated directly from the embedding matrix $\bZ$. This highlights that, in practice, accurate recovery of the graph and kernels may be less critical, particularly for real-world data where their interpretation can be ambiguous; whereas the low-rank embedding $\bZ$ fully characterizes the system dynamics. 
Furthermore, we observe that the decay behavior of the $\bZ$ error is identical in both sparse and dense graph settings. This is expected, as only one time step is applied, making the sensing matrices in the matrix sensing stage identical. This observation further agrees with our theoretical guarantee for the first stage of matrix sensing.

\begin{figure}[t]
\center
\includegraphics[width=0.48\textwidth]{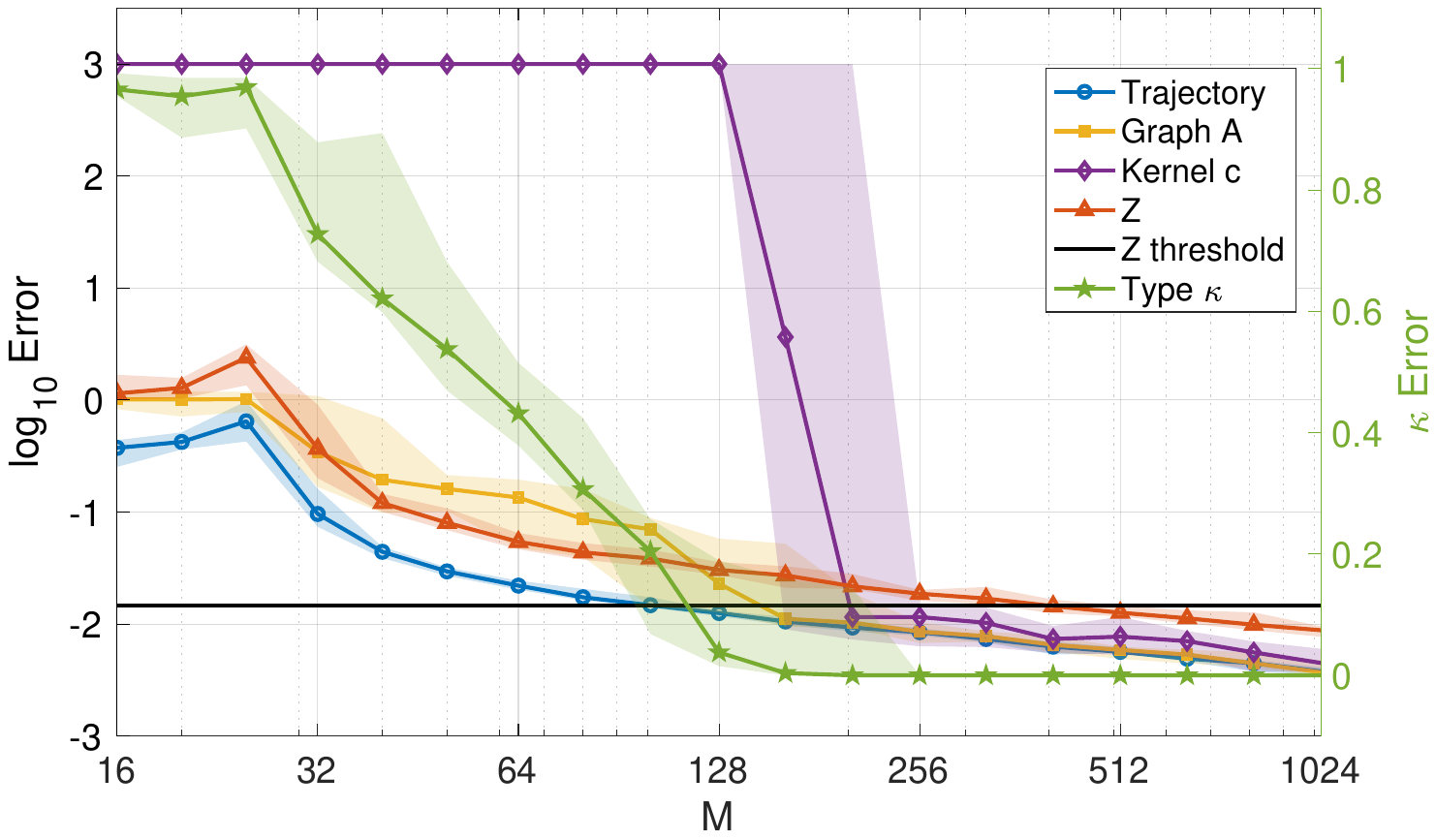}
\includegraphics[width=0.48\textwidth]{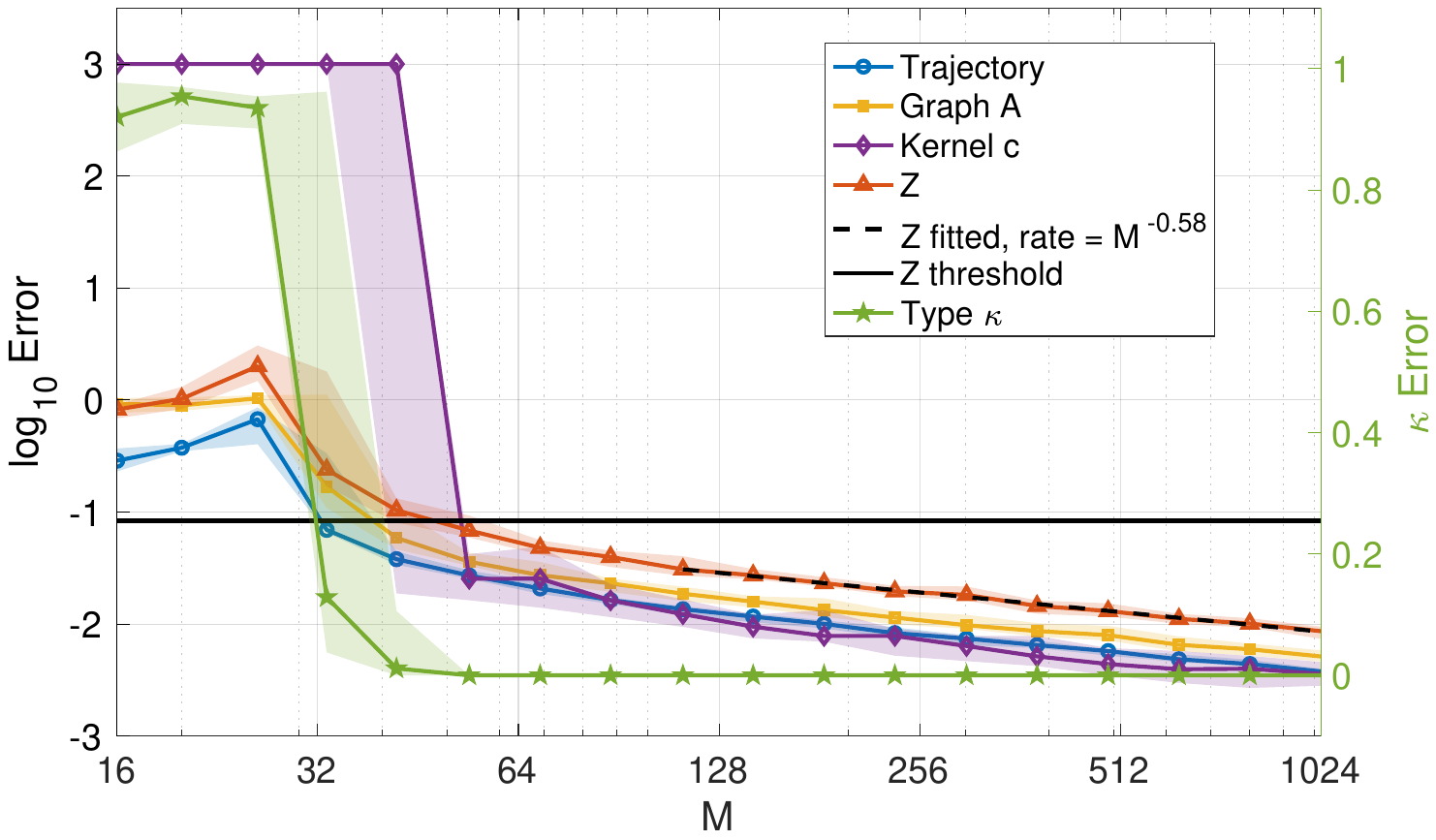}
\caption{{Convergence of estimation errors with sample size $M$.} \textbf{Left:} dense graph. \textbf{Right:} sparse graph. 
Errors in $\bZ$ and trajectory prediction decrease once $M > 32$, with the $\bZ$ error decaying at the rate $M^{-1/2}$. Successful recovery of the type matrix $\kappa$ occurs when the $\bZ$ error falls below the separation thresholds (around $M=300$ and $M = 54$), after which kernel errors also begin to decay. 
Sparse graphs require fewer samples for recovery than dense graphs.
}
\label{fig:conv_M}
\end{figure}

\subsection{Convergence with number of agents $N$}\label{sec:conv_N}
We study the convergence of the estimation error as $N$ increases from 4 and 64, using the same system settings as in the as in the previous example. That is, we consider $Q = 3$ kernels from a random Fourier basis with $K = 10$, and generate trajectories in $d = 2$ with time step $\Delta t = 0.1$ and a stochastic forcing with viscosity $\sigma = 10^{-4}$. We take $L = 10$, $M = 100$, which are sufficiently large so that the clustering estimation is exact and the type error vanishes. 

In particular, we use a network connectivity parameter $a_0 = 0.25$, which makes the graph sparser as $N$ increases, since it admits at most 16 non-zero entries per row due to row normalization.  This setting yields the same $\varepsilon_0$ as in the previous example, $\varepsilon_0 = 0.0835$. To highlight the effect of graph sparsity, we also consider a known complete graph setting, where all agents interact with each other with equal strength.

\begin{figure}[t]
\center
\includegraphics[width=\textwidth]{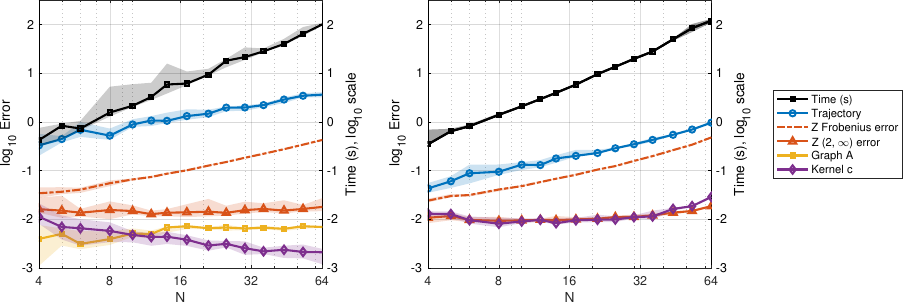}
\caption{Scaling of errors with respect to $N$. \textbf{Left:} Unknown sparse graph.  The trajectory error increases approximately linearly in $N$, whereas the computation time increases quadratically. The Frobenius $\bZ$ error increases with a rate close to the theory, while the $(2,\infty)$ error of $\bZ$ grows much slower. The graph error remains small for the same reason, and the kernel error decreases as the matrix $\bA$ becomes better conditioned. \textbf{Right:} Known complete graph. The trajectory error, computation time, and $Z$ Frobenius error behave similarly to the case of an unknown graph. However, the kernel error increases as $N$ increases, since the graph with increasing size impedes the estimation of $Z$. 
} 
\label{fig:conv_N}
\end{figure}

The left plot in Figure \ref{fig:conv_N} presents the scaling of errors and computation time with respect to $N$ when the graph is unknown and sparse. The computation time scales quadratically with $N$, consistent with the analysis in Section~\ref{subsec:K-means_known}. The trajectory error scales as $N^{0.77}$, reflecting the increase in the number of agents; the rate is slightly sublinear due to graph sparsity. For the matrix errors, the Frobenius error of $\bZ$ scales linearly in agreement with \eqref{eq:Z_error}, while the $(2,\infty)$-error scales only as $N^{0.08}$. The latter reflects the sparsity of the interaction graph: as $N$ increases with a fixed lower bound $a_0$, the graph becomes progressively sparser, leading to slower growth. Notably, the dependence on $M$ in \eqref{eq:Z_error} and \eqref{eq:Z_error_2_infty} is the same, but the dependence on $N$ differs, which is precisely borne out by this example.  As a consequence of the small $(2,\infty)$-error, the normalized graph error scales as $N^{0.20}$, which is smaller compared to the linear order predicted by \eqref{eq:a_c_error_approx}, again due to sparsity. The kernel error actually decays as $N^{-0.59}$. This decay is explained by the improved conditioning of $\bA$ as $N$ grows: the smallest eigenvalue increases, reducing the error. Intuitively, as noted in Remark~\ref{rmk:rank_A}, rank deficiency, and therefore ill-posedness, occurs only under strong symmetries in $\bA$, which are unlikely in randomly generated graphs, especially when $N$ is large. Nevertheless, the growing rate is controlled by the theoretical rate provided in Theorem \ref{thm:postproc-stability}.

The right plot of Figure \ref{fig:conv_N} presents results when the graph is known and \emph{complete}, using the algorithm in Appendix \ref{apdx:complete_networks}. The trajectory error, computation time, and Frobenius error of $\bZ$ behave similarly to the unknown graph case. However, the kernel error increases as $N$ increases. This is because, even though the graph is known, its increasing size impedes the estimation of $\bZ$, leading to larger kernel errors. Thus, the benefit of knowing the graph is limited when it is dense. 

\subsection{Necessary sample size}\label{sec:sample-size}
We now demonstrate the necessary sample size $M$ to successfully recover the type assignment $\kappa$ when the number of particles $N$ and the dimension of the hypothesis space $K$ grow.

We take $N$ to range from 4 to 16 and $K$ from 2 to 10, and set $Q=2$. The true kernels are randomly generated Fourier functions and are approximated by $K$ cubic spline basis functions. This ensures the structural prior information for $z_{min}$ and $\theta_{min}$ does not change much. We fix the graph to be complete and use the algorithm in Appendix \ref{apdx:complete_networks}, as this setting also allows for stable structural priors. For each pair of $N$ and $K$, we generate $M$ trajectories in $d = 2$ with $L = 1$ time steps. We range $M$ from 2 to 22, and for each fixed $M$, we run the experiment for $B = 3$ times. We record the minimal $M$ such that we have a successful recovery, in the sense that $\kappa$ is correctly recovered. We present the growth rate of the minimal necessary sample size $M$ with the growth of $N + K$ in Figure \ref{fig:M_NK}. It is clear that $M$ depends roughly linearly on $N+K$, which demonstrates the validity of Assumption \ref{conj_RIP}.

\begin{figure}[h!]
\center
\includegraphics[width=0.6\textwidth]{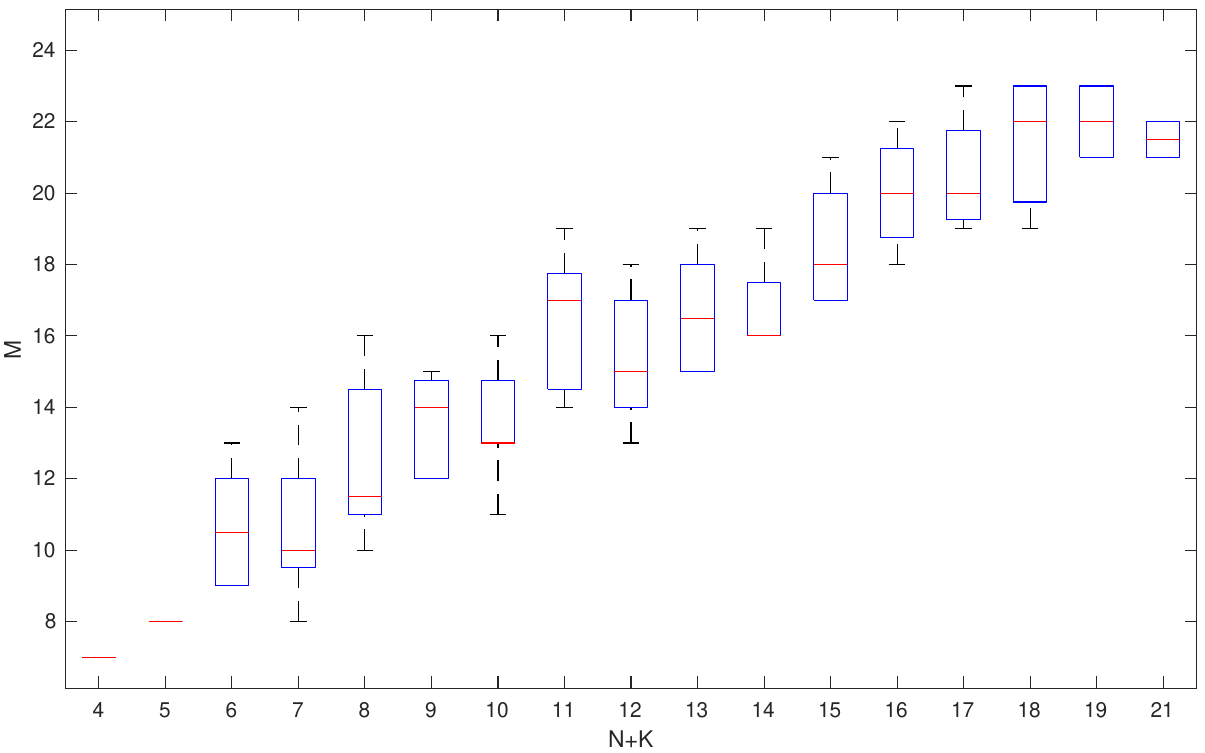}
\caption{Scaling of necessary sample size $M$ with $N+K$ to recover the type function $\kappa$, using the estimated $\bZ$ in the matrix sensing stage. }
\label{fig:M_NK}
\end{figure}

\section{Conclusion}\label{Sec:Conclusion}

We developed a unified framework for learning heterogeneous interacting particle systems (IPS) on networks, where both the network topology and the interaction kernels are unknown, and agents interact through latent types. By reformulating the joint estimation problem into a three-stage procedure, low-rank matrix sensing followed by geometric clustering and matrix factorization, we decoupled the continuous and discrete aspects of the optimization. This reformulation yields an efficient and robust algorithm with strong theoretical guarantees: RIP-based recovery bounds for the matrix sensing stage and exact type recovery under separation conditions for the clustering stage.

Our analysis demonstrates that the framework recovers the network structure, interaction kernels, and type assignments with provable accuracy, even when the number of types is unknown. Numerical experiments confirm the sharp recovery thresholds predicted by theory and demonstrate the approach's scalability.

Beyond the specific examples studied here, the proposed methodology applies broadly to heterogeneous IPS across biology, physics, and social sciences. Extensions to more complex models, including higher-order dynamics, incomplete observations, or time-varying networks, are natural directions for future work. Overall, this work provides a principled and scalable pathway toward data-driven discovery of structured interactions in multi-agent systems.

\appendix

\section{Complete Networks}\label{apdx:complete_networks}

\subsection{Algorithms}
We present the details of the procedure for learning the type matrix $\bkappa$ and the coefficient matrix $\bc$ in the case of a complete interaction graph, as discussed in Remark~\ref{rmk:complete_network_2} and Section \ref{sec:complete_network}. The first step involves estimating the embedding matrix $\bZ$ via Algorithm~\ref{alg:ALS}. In the subsequent stage, if the number of kernels $Q$ is known, we apply positional clustering to the rows of $\bZ$, as described in Algorithm~\ref{alg:Kmeans_known_Q_complete_network}. When $Q$ is unknown, we determine it by comparing the quantities $\theta$ and $\Theta$ against appropriate thresholds, following the procedure in Algorithm~\ref{alg:Kmeans_unknown_Q_complete_network}. In both scenarios, post-processing is not required.

\begin{algorithm}[htbp]
\caption{Clustering $\widehat{\bZ}$ with known $Q$ for complete networks}
\label{alg:Kmeans_known_Q_complete_network}
\begin{algorithmic}[1] 
\Procedure{Positional K-means, $Q$ known}{estimated $\widehat{\bZ}$, $z_{0}$, $Q$}
	\State Use K-means to classify the rows of $\widehat\bZ$ with into $Q$ clusters with Eucledian distance, get index sets $\{I_q\}_{q = 1}^Q$ and cluster centers $\{\mathbf{v}_q\}_{q = 1}^Q$.  
	\State Construct $\widehat{\bkappa} \in [Q]^{N\times N}$ by setting $\widehat{\kappa}_{ij} = q$ if $(i, j) \in I_q$ for $q = 1, \dots, Q$. 
	\State Construct $\widehat \bc = [\bv_1, \dots, \bv_Q]$. 
	\State \textbf{return} $\widehat{\bkappa}, \widehat{\bc}$. 
    \EndProcedure
\end{algorithmic}
\end{algorithm}

\begin{algorithm}[htbp]
\caption{Clustering $\widehat{\bZ}$ with unknown $\widehat{Q}$ for complete networks}
\label{alg:Kmeans_unknown_Q_complete_network}
\begin{algorithmic}[1]
\Procedure{Positional K-means, $Q$ unknown}{$\widehat{Z}$, $\theta_0$, $\Theta_{max}$,  ${Q}_0$, $n_{\max}$ }
    \State Initialize $\widehat{Q} = {Q}_0$, \texttt{success} = \texttt{false}
    
    \For{$\tau = 0, \dots, n_{\max}$}
        \State Run K-means with $\widehat{Q}$ clusters on rows of ${\widehat \bZ}$ using Eucledian distance, get index sets $\{I_q\}_{q = 1}^{\widehat{Q}}$ and cluster centers $\{\mathbf{v}_q\}_{q = 1}^{\widehat{Q}}$.  
        \State Compute min inter-cluster angle $\theta$ and max intra-cluster angle $\Theta$
        
        \If{$\theta > \theta_0$ and $\Theta < \Theta_{\max}$}
            \State \texttt{success} = \texttt{true}, \textbf{break}
        \ElsIf{$\theta < \theta_0$}
            \State $\widehat{Q} \gets \widehat{Q} - 1$
        \ElsIf{$\Theta > \Theta_{\max}$}
            \State $\widehat{Q} \gets \widehat{Q} + 1$
        \EndIf
    \EndFor

    \If{\texttt{success} = \texttt{false}}
        \State \textbf{return} Classification fails
    \Else
	\State Construct $\widehat{\bkappa} \in [Q]^{N\times N}$ by setting $\widehat{\kappa}_{ij} = q$ if $(i, j) \in I_q$ for $q = 1, \dots, Q$. 
	\State Construct $\widehat \bc = [\bv_1, \dots, \bv_Q]$. 
	\State \textbf{return} $\widehat{\bkappa}, \widehat{\bc}$. 
    \EndIf
\EndProcedure
\end{algorithmic}
\end{algorithm}

\subsection{Numerical Experiments}
We present the convergence results for the case of a complete graph. All experimental settings are the same as in Section~\ref{sec:conv_M}, except that the network is taken to be complete and estimation is performed using Algorithm~\ref{alg:Kmeans_known_Q_complete_network}. The corresponding errors are shown in Figure~\ref{fig:conv_M_known_A}.
\begin{figure}[t]
\center
\includegraphics[width=0.7\textwidth]{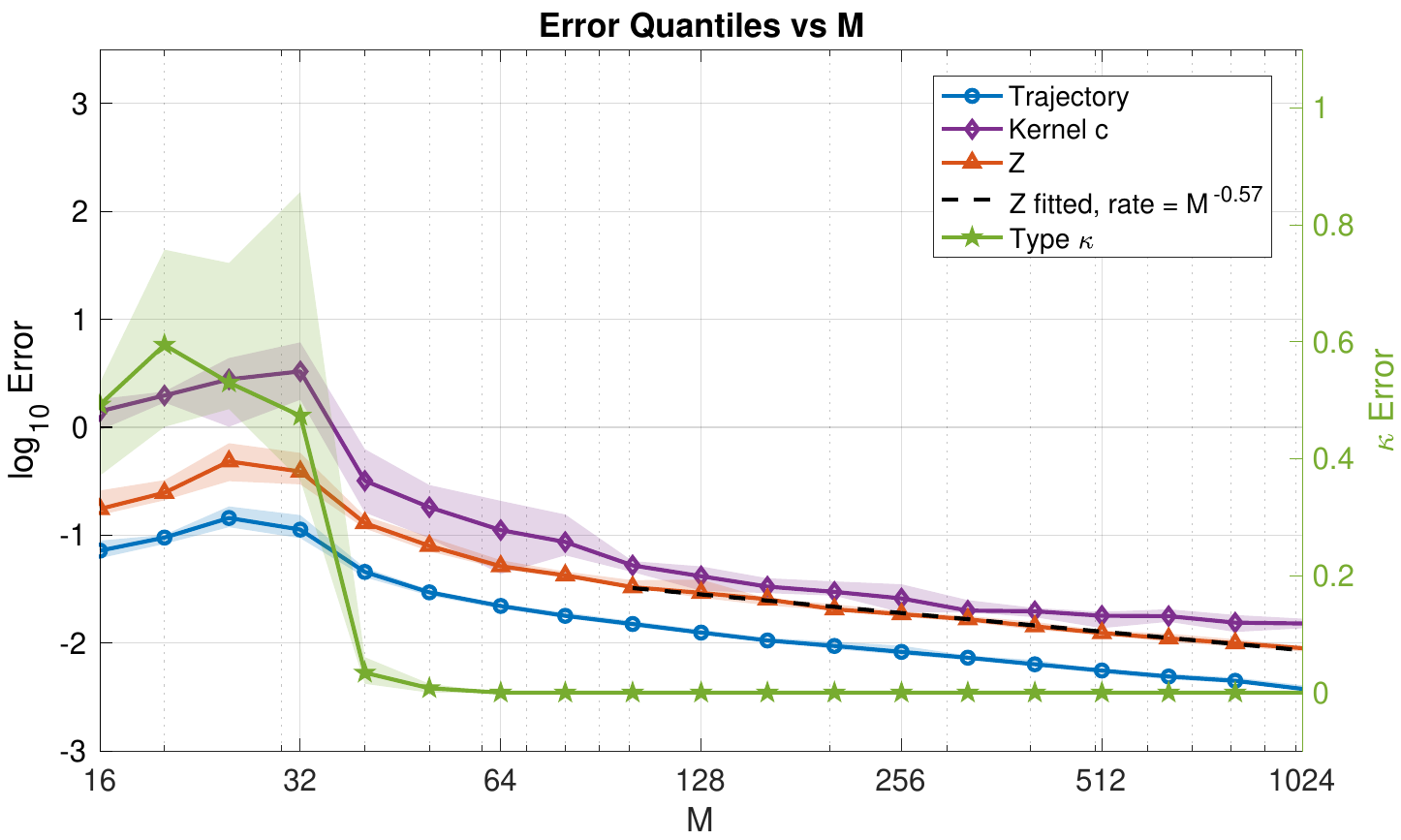}
\caption{Convergence of estimation errors with sample size $M$ when the graph is known. In this setting, graph recovery is not required, so we report only the kernel error and type error. Because all kernels are active and the number of kernel types $Q$ is known, kernel estimation can be evaluated even with very few trajectories. The results show that the kernel error decreases with $M$ at essentially the same rate as the $\bZ$ error. A fitted decay rate of $-0.57$ confirms this behavior, consistent with the theoretical matrix sensing rate. Compared to the general unknown-graph case, substantially fewer samples are needed.}
\label{fig:conv_M_known_A}
\end{figure}

\section{Proofs}
\subsection{Preliminary on matrix sensing and RIP}\label{sec:preRIP}
The matrix sensing problem aims to find a low-rank matrix $Z^*\in \R^{n_1\times n_2}$ from data $b_m=\langle A_m,Z^*\rangle_{F}$, where $A_1,\cdots,A_M\in \R^{n_1\times n_2}$ are sensing matrices$\footnote{We set $A_m := \bB_i(\bX^m_0) \in \R^{(N-1)\times K}$ in Section \ref{sec:RIP}. }$. 
To find $Z^*$ with rank $r\ll n_1 \wedge n_2$, one solves the following non-convex optimization problem
\begin{equation}\label{eq:matrix_sensing}
	\min_{Z\in\R^{n_1\times n_2},\text{rank}(Z)=r} F(Z) = \frac{1}{M}\sum_{m=1}^M |\langle A_m,Z\rangle_{F}-b_m|^2=\frac{1}{M}\sum_{m=1}^M |\Tr(A_m^\top Z)-b_m|^2\,.
\end{equation}
It is well-known that the constrained optimization problem \eqref{eq:matrix_sensing} is NP-hard. 
A common method of factorization is introduced by Burer and Monteiro \cite{BM2003,BM2005}. Namely, we express $Z=UV^\top$ where $U\in\R^{n_1\times r}$ and $V\in\R^{n_2\times r}$.
 Then \eqref{eq:matrix_sensing} can be transformed to an unconstraint problem
\begin{align}\label{eq:matrix_factor}
	\min_{U\in\R^{n_1\times r},V\in\R^{n_2\times r}} F(U,V) &= \frac{1}{M}\sum_{m=1}^M |\langle A_m,UV^\top\rangle_{F}-b_m|^2.
\end{align}
To simplify the notations, let us define a linear sensing operator $\sA :\R^{n_1\times n_2}\to \R^M$ by 
\begin{equation}\label{Def:SensOp}
	\sA(Z)= \bigg(\frac{1}{\sqrt{M}}\langle A_1,Z\rangle_{F},\cdots,\frac{1}{\sqrt{M}}\langle A_M,Z\rangle_{F}\bigg)\,.
\end{equation}
 
\begin{definition}[Restricted isometry property (RIP)]\label{Def:RIP} 
The linear map $\sA$  satisfy the $(r,\delta_{r})$-RIP condition with the RIP constant $\delta_{r}=\delta_r(\sA)\in[0,1)$ if there is a positive constant $C$ such that
\begin{equation}\label{eq:RIP}
	(1-\delta_{r}) \norm{Z}_F^2 \leq \frac{1}{C} \norm{\sA (Z)}_{\R^m}^2=\frac{1}{CM}\sum_{m=1}^M \langle A_m,Z\rangle_{F}^2 \leq (1+\delta_{r}) \norm{Z}_F^2
\end{equation}
holds for all $Z$ with rank at most $r$, and $\delta_r$ is smallest constant for the above inequalities. 
\end{definition}

The normalizing constant $C$ in Condition \eqref{eq:RIP}, introduced in \cite{RohdeTsybakov2011}, enables the application of RIP to a large class of sensing matrices that can be scaled to near isometry. In particular, in our setting, where the sensing matrices are generally far from an isometry, such scaling is particularly important.

The restricted isometry property plays a critical role in the theory of matrix sensing \cite{RFP2010}, a generalization of compressed sensing \cite{CandesTao2005}. It ensures the identifiability of matrix sensing problems when the data $\{b_m\}$ are noiseless. 
\begin{theorem}[Theorem 3.2 in \cite{RFP2010}]\label{Thm:RFP2010}
	Suppose that $\sA$ satisfies the $(2r,\delta_{2r})$-RIP condition with $\delta_{2r}<1$ and an integer $r\geq 1$. Then $Z^*$ is the only matrix of rank at most $r$ satisfying $\sA(Z)=b=[b_1,\cdots,b_M]^\top$ with $b_m=\langle A_m,Z^*\rangle_{F} $.
\end{theorem}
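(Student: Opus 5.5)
The claim is that, under $(2r,\delta_{2r})$-RIP with $\delta_{2r}<1$, the rank-$r$ matrix $Z^*$ is the only matrix of rank at most $r$ consistent with the data. I would argue by contradiction, reducing everything to a single application of the lower RIP bound in Definition~\ref{Def:RIP} to a difference of two candidate solutions.

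First, I will suppose that $Z$ is any matrix of rank at most $r$ with $\sA(Z)=b$. By construction $\sA(Z^*)=b$ as well. Set $D:=Z-Z^*$. Linearity of $\sA$, which is immediate from \eqref{Def:SensOp} since each coordinate is a Frobenius inner product, gives $\sA(D)=0$.

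Next, I need the rank of $D$ in order to invoke RIP. Subadditivity of rank gives
\[
\rank(D)\le \rank(Z)+\rank(Z^*)\le 2r,
\]
so $D$ lies in the class of matrices on which the $(2r,\delta_{2r})$-RIP inequality \eqref{eq:RIP} holds. Applying the lower bound to $D$ yields
\[
(1-\delta_{2r})\norm{D}_F^2\le \frac{1}{C}\norm{\sA(D)}^2=0 .
\]
Since $\delta_{2r}<1$, the factor $1-\delta_{2r}$ is strictly positive. Together with $C>0$, this forces $\norm{D}_F=0$, hence $Z=Z^*$, which is the desired uniqueness.

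No step here is a real obstacle. The only point needing care is that RIP must be applied at rank level $2r$, not $r$, because a difference of two rank-$r$ matrices can have rank up to $2r$. This is exactly why the hypothesis is stated for $\delta_{2r}$. The normalizing constant $C$ from \cite{RohdeTsybakov2011} plays no role beyond being positive.
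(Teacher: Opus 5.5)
Your argument is correct and is the standard proof of Theorem 3.2 in \cite{RFP2010}. The paper only cites that result and does not reproduce a proof. As you show, $Z-Z^*$ has rank at most $2r$ and lies in the kernel of $\sA$, so the lower $(2r,\delta_{2r})$-RIP bound, with $1-\delta_{2r}>0$ and $C>0$, forces it to vanish.
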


When the sensing data $\{b_m\}$ are noisy, a widely used regularizer for asymmetric problems $n_1 \neq n_2$ (see, e.g., \cite{zheng2016convergence, GJZ2017}) is  
\begin{equation}\label{eq:matrix_factor_reg}
	\min_{U\in\R^{n_1\times r},V\in\R^{n_2\times r}} F(U,V) = \frac{1}{M}\sum_{m=1}^M |\langle A_m,UV^\top\rangle_{F}-b_m|^2 + \frac{1}{4} \norm{U^\top U - V^\top V}_F^2, 
\end{equation} 
The next theorem provides an error bound for the regularized estimator. 

\begin{theorem}[\protect{Theorem 32 in \cite{GJZ2017}}]\label{thm:noisy_matrix_sensing}
	Suppose the sensing operator $\sA$ satisfies the $(2r, 1/20)$-RIP condition, and the observations are given by $b_m = \innerp{A_m, Z^*}_F + \varepsilon_m, m = 1, \dots M$, where the noise term $\{\varepsilon_m\}$ are i.i.d. Gaussian random vartiables with distribution $\mN(0, \sigma_\epsilon^2)$. Then, with high probability, every local minimum $(U, V)$ of the nonconvex objective \eqref{eq:matrix_factor_reg} satisfies 
	\begin{equation}
		\norm{UV^\top - Z^*}_F \lesssim  \sigma_\epsilon \sqrt{\frac{(n_1+ n_2)r\log M}{M}}.
	\end{equation}
	Here, a point $x=(U, V)$ is said to be a local minimum of $F$ if $\nabla F(x) = 0$ and $\nabla ^2 F(x) \succeq 0$. 
\end{theorem}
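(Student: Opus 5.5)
The plan follows the unified landscape analysis of \cite{GJZ2017}: rather than tracking an algorithm, I would show that every point satisfying $\nabla F = 0$ and $\nabla^2 F\succeq 0$ is close to $Z^*$. There are three steps.

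\emph{Step 1 (symmetric lifting).} Write $Z^*=U^*V^{*\top}$ with balanced factors, $U^{*\top}U^*=V^{*\top}V^*$, obtained from the SVD. Stack the factors as $W=[U;V]\in\R^{(n_1+n_2)\times r}$ and $W^*=[U^*;V^*]$. Then the objective \eqref{eq:matrix_factor_reg} becomes a function of $WW^\top$. The data term only sees the off-diagonal blocks $UV^\top$. The regularizer $\frac14\|U^\top U-V^\top V\|_F^2$ supplies the diagonal blocks through the identity
\[
\|WW^\top-W^*W^{*\top}\|_F^2 \approx 4\|UV^\top-Z^*\|_F^2 + \|U^\top U-V^\top V\|_F^2 - (\text{cross terms}).
\]
I would then check that the lifted sensing operator inherits a $(2r,\delta)$-RIP on these block matrices with $\delta$ comparable to $1/20$. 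Together with the regularizer, this makes the lifted objective behave like a RIP-regular symmetric problem.

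\emph{Step 2 (descent direction).} Let $R$ be the orthogonal matrix minimizing $\|W-W^*R\|_F$, and set $\Delta=W-W^*R$. At a local minimum $W$, combining first- and second-order optimality gives
\[
\Delta:\nabla^2F(W):\Delta \;\ge\; 0 \quad\text{and}\quad \langle\nabla F(W),\Delta\rangle=0 .
\]
In the noiseless part, expand $\Delta:\nabla^2 F:\Delta - 4\langle\nabla F,\Delta\rangle$. Use RIP to replace the sensing quadratic forms by Frobenius norms up to a factor $1\pm\delta$. Then use the standard inequality $\|\Delta\Delta^\top\|_F^2\le 2\|WW^\top-W^*W^{*\top}\|_F^2$, valid for the aligned $\Delta$. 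The goal is
\[
0\le \Delta:\nabla^2F:\Delta-4\langle\nabla F,\Delta\rangle \le -c\,\|WW^\top-W^*W^{*\top}\|_F^2 + (\text{noise terms}).
\]
Here $c>0$ is an absolute constant that survives precisely because $\delta\le 1/20$. Getting the constants to close in this step is the main obstacle. The cross terms from the RIP deviations on rank-$\le 4r$ matrices must be bounded by $O(\delta)$ times the leading negative term. I would first try bounding each deviation term separately via polarization of the RIP inequality.

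\emph{Step 3 (noise control).} The noise enters linearly through $\langle \frac1M\sum_m\varepsilon_m A_m,\, X\rangle$, where $X$ is a combination of $UV^\top-Z^*$ and $\Delta$-terms of rank $\le 4r$. Bound this by $\|\frac1M\sum_m\varepsilon_mA_m\|_2\cdot\|X\|_*$, and then $\|X\|_*\le 2\sqrt{r}\,\|X\|_F$. Next, condition on the $A_m$ and apply a Gaussian net argument over unit vectors in $\R^{n_1}\times\R^{n_2}$. Combined with the RIP upper bound on rank-one matrices, this gives, with high probability,
\[
\Big\|\tfrac1M\textstyle\sum_m \varepsilon_m A_m\Big\|_2\lesssim \sigma_\epsilon\sqrt{\tfrac{(n_1+n_2)\log M}{M}} .
\]
Inserting this into Step 2 yields an inequality of the form $c\,\|E\|_F^2\lesssim \sqrt r\,\sigma_\epsilon\sqrt{(n_1+n_2)\log M/M}\,\|E\|_F$, where $E=WW^\top-W^*W^{*\top}$. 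Dividing by $\|E\|_F$, and using $\|UV^\top-Z^*\|_F\le\|E\|_F$ since $UV^\top-Z^*$ is an off-diagonal block of $E$, gives the claimed bound.
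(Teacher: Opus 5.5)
The paper gives no proof of this statement; it imports Theorem 32 of \cite{GJZ2017}. Your outline follows that source's structure: balanced lifting $W=[U;V]$, aligned $\Delta=W-W^*R$, the combination $\Delta:\nabla^2F:\Delta-4\langle\nabla F,\Delta\rangle$, the bound $\|\Delta\Delta^\top\|_F^2\le 2\|E\|_F^2$, and noise controlled through $\|\frac1M\sum_m\varepsilon_mA_m\|_2$. Step 3 is fine.

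\textbf{Step 2.} The gap is exactly where you place the ``main obstacle.'' There is nothing to bound, because the factor $4$ makes the cross terms cancel identically. Your proposed fallback would also fail. Write $E=WW^\top-W^*W^{*\top}$ and $D=\Delta\Delta^\top$, and note that $W\Delta^\top+\Delta W^\top=E+D$. Then for $f(W)=\frac12E:\mathcal G:E$, with any symmetric bilinear form $\mathcal G$,
\[
\langle\nabla f(W),\Delta\rangle=E:\mathcal G:(E+D),\qquad
\Delta:\nabla^2 f(W):\Delta=(E+D):\mathcal G:(E+D)+2\,E:\mathcal G:D,
\]
so that
\[
\Delta:\nabla^2 f(W):\Delta-4\langle\nabla f(W),\Delta\rangle=D:\mathcal G:D-3\,E:\mathcal G:E
\le\bigl(2(1+\delta')-3(1-\delta')\bigr)\|E\|_F^2=-(1-5\delta')\|E\|_F^2 .
\]
RIP is needed only on $D$ (rank $\le r$) and $E$ (rank $\le 2r$). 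Here the lifted form is $X:\mathcal G:X=\|X\|_F^2+4X_{12}:(\mathcal H_0-I):X_{12}$, where $X_{12}$ is the off-diagonal block and $X:\mathcal H_0:X=\frac1M\sum_m\langle A_m,X\rangle^2$. Since $\|X\|_F^2\ge 2\|X_{12}\|_F^2$, the lifted constant is $\delta'=2\delta=1/10$; this is where the $1/20$ comes from.

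Polarizing the $+4$ and $-4$ copies of $E:\mathcal G:D$ separately fails on two counts:
\begin{itemize}
\item It needs RIP on matrices of rank $3r$ or more, which the $(2r,1/20)$ hypothesis does not provide.
\item Even granting that, it costs up to $8\sqrt2\,\delta'\|E\|_F^2\approx 1.13\|E\|_F^2$. This exceeds the available margin $(1-5\delta')\|E\|_F^2=\frac12\|E\|_F^2$, so the constants do not close.
\end{itemize}

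\textbf{Step 1.} This step is also left loose. Your relation is in fact an exact identity, and the ``cross terms'' are $2\|U^{*\top}U-V^{*\top}V\|_F^2$. So the noiseless objective is
\[
\tfrac14E:\mathcal G:E+\tfrac12\|\widehat W^{*\top}W\|_F^2,\qquad \widehat W^*=[U^*;-V^*].
\]
This is not a pure RIP-quadratic in $E$, and the extra term is not small in general. What makes it harmless is balance. Since $\widehat W^{*\top}W^*=U^{*\top}U^*-V^{*\top}V^*=0$, we get $\widehat W^{*\top}\Delta=\widehat W^{*\top}W$. The term therefore contributes $\|\widehat W^{*\top}W\|_F^2-4\|\widehat W^{*\top}W\|_F^2\le 0$ to the key expression and can be dropped. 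This matching of weights presumes the RIP normalizing constant is $C=1$; otherwise the regularizer should be $\frac C4\|U^\top U-V^\top V\|_F^2$.

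With these two repairs, Step 2 gives $0\le-\frac14\|E\|_F^2+(\text{noise terms})$, and your Step 3 finishes the argument as described.
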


\subsection{Geometry Facts }\label{apdx:geom}

The following geometry facts are useful for providing the separability in Section \ref{sec:separation}. 
\begin{lemma}\label{lem:geom}
Let $ x, y \in \mathbb{R}^d $ be nonzero vectors , and define their normalized versions as $ \bar{x} = \frac{x}{\|x\|} $, $ \bar{y} = \frac{y}{\|y\|} $. Then, 
\[
 \|\bar{x} - \bar{y}\|  \leq \frac{1}{\min(\|x\|, \|y\|)} \|x - y\|\]
 and  
 \begin{equation}\label{eq:trig_id}
 	\arccos\innerp{\bar{x}, \bar{y}} = 2\arcsin\frac{\|\bar{x}-\bar{y}\|}{2} \leq \frac{\pi}{2}\|\bar{x} - \bar{y}\|.
 \end{equation}
 \end{lemma}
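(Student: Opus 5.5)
For the first claim, I will compare $\bar x$ and $\bar y$ by routing through an intermediate vector that has one of the two norms. Without loss of generality I assume $\|x\| \ge \|y\|$, so that $\min(\|x\|,\|y\|)=\|y\|$. Writing
\[
\bar x - \bar y = \frac{x}{\|x\|} - \frac{x}{\|y\|} + \frac{x - y}{\|y\|},
\]
I bound the first difference by $\|x\|\bigl(\frac{1}{\|y\|}-\frac{1}{\|x\|}\bigr) = \frac{\|x\|-\|y\|}{\|y\|} \le \frac{\|x-y\|}{\|y\|}$. Adding the second term gives only $2\|x-y\|/\|y\|$, so this naive triangle inequality is too lossy, and I will not use it for the final bound.

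To get the sharp constant $1$, I will instead normalize by the smaller norm. Set $y' := y\,\|x\|/\|y\|$, which has the same direction as $y$ and the same norm as $x$. Then $\bar x - \bar y = (x - y')/\|x\|$, and I will show $\|x - y'\| \le \frac{\|x\|}{\|y\|}\,\|x-y\|$. I expect this to be the main step. Squaring, with $a=\|x\|$, $b=\|y\|$ and $t=\langle x,y\rangle$, the inequality reads
\[
2a^2 - 2\tfrac{a}{b}\,t \;\le\; \tfrac{a^2}{b^2}\bigl(a^2+b^2-2t\bigr).
\]
Multiplying by $b^2/a$ and rearranging, this is equivalent to $a(a^2-b^2) \ge 2t(a-b)$, i.e.\ $(a-b)\bigl(a(a+b)-2t\bigr) \ge 0$. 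This holds because $a \ge b$ and $2t \le 2ab \le a^2+ab$. Dividing by $\|x\|$ then yields $\|\bar x-\bar y\| \le \|x-y\|/\|y\|$, as claimed.

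For the identity \eqref{eq:trig_id}, let $\phi=\arccos\langle\bar x,\bar y\rangle \in [0,\pi]$. Since $\bar x,\bar y$ are unit vectors,
\[
\|\bar x-\bar y\|^2 = 2 - 2\cos\phi = 4\sin^2(\phi/2).
\]
As $\phi/2 \in [0,\pi/2]$, taking square roots gives $\|\bar x-\bar y\| = 2\sin(\phi/2)$, hence $\phi = 2\arcsin(\|\bar x-\bar y\|/2)$. For the inequality, I use concavity of $\sin$ on $[0,\pi/2]$, which gives $\sin s \ge 2s/\pi$ there. Equivalently, $\arcsin u \le \pi u/2$ for $u\in[0,1]$, so $2\arcsin(\|\bar x-\bar y\|/2) \le \frac{\pi}{2}\|\bar x-\bar y\|$.
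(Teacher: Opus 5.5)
Your proof is correct and is essentially the paper's argument: after squaring, your condition $(a-b)(a(a+b)-2t)\ge 0$ is the same elementary inequality the paper reaches (with the roles of $x$ and $y$ swapped), closed by Cauchy--Schwarz and the norm ordering, and you derive $\arccos\langle\bar x,\bar y\rangle = 2\arcsin(\|\bar x-\bar y\|/2)$ exactly as the paper does. Your concavity argument for $\arcsin u\le \pi u/2$ usefully supplies a step the paper only asserts.
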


\begin{proof}[Proof of Lemma \ref{lem:geom}] Without loss of generality, assume $ \|x\| \leq \|y\| $. We aim to show $
\|\bar{x} - \bar{y}\| \leq \frac{1}{\|x\|} \|x - y\|$. 
Squaring both sides and subtracting the left-hand side from the right-hand side, it suffices to prove
\[
 \frac{1}{\|x\|^2} \|x - y\|^2 - \|\bar{x} - \bar{y}\|^2 \geq 0. 
\]
Since $\|\bar{x} - \bar{y}\|^2 = 2 - 2\frac{\langle x, y \rangle}{\|x\| \|y\|}$ and $
\frac{1}{\|x\|^2} \|x - y\|^2 = 1 + \frac{\|y\|^2}{\|x\|^2} - 2 \frac{\langle x, y \rangle}{\|x\|^2}$, the above inequality is equivalent to that
\begin{equation} \label{eq:geom_diff}
\frac{\|y\|^2}{\|x\|^2} - 1 + 2 \frac{\langle x, y \rangle}{\|x\|} \left( \frac{1}{\|y\|} - \frac{1}{\|x\|} \right)\geq 0. 
\end{equation}

 By the Cauchy--Schwarz inequality $ |\langle x, y \rangle| \leq \|x\| \|y\| $, we have
\[
\left| 2 \frac{\langle x, y \rangle}{\|x\|} \left( \frac{1}{\|y\|} - \frac{1}{\|x\|} \right) \right| \leq 2 \|y\| \left( \frac{1}{\|x\|} - \frac{1}{\|y\|} \right) = \frac{2(\|y\| - \|x\|)}{\|x\|}.
\]
Meanwhile, since $ \|y\| \geq \|x\| $, we have 
\[
\frac{\|y\|^2}{\|x\|^2} - 1 = \frac{(\|y\| + \|x\|) (\|y\| - \|x\|)}{\|x\|^2} \geq \frac{2(\|y\| - \|x\|)}{\|x\|}.
\]
Hence, \eqref{eq:geom_diff} holds, with equality only if $ x $ and $ y $ are colinear and $ \|x\| = \|y\| $. 

To prove \eqref{eq:trig_id}, let $\theta=\arccos\langle \bar{x},\bar{y}\rangle\in[0,\pi]$.
Since $\bar{x}$ and $\bar{y}$ are unit vectors, we have 
$$
\|\bar{x}-\bar{y}\|^2=\|\bar{x}\|^2+\|\bar{y}\|^2-2 \langle \bar{x},\bar{y} \rangle
=2-2\cos\theta = 4\sin^2 \frac{\theta}{2},
$$
i.e., $\|\bar{x}-\bar{y}\|=2\sin(\theta/2)$. Thus, 
$$
2\arcsin \left(\frac{\|\bar{x}-\bar{y}\|}{2}\right)
=2\arcsin(\sin(\theta/2)) =\theta
=\arccos \langle \bar{x},\bar{y} \rangle,
$$
where we used that $\theta/2\in[0,\pi/2]$ so $\arcsin(\sin(\theta/2))=\theta/2$. The last inequality follows from that $\arcsin(x)\leq \frac{\pi x}{2}$ for all $x \in [0, 1]$. 
\end{proof}

\begin{proof}[Proof of Lemma \ref{lem:geom_avg}]
Since $\|\mathbf{z}_i\| = \|\mathbf{v}^\star\| = 1$, we have $\|\mathbf{z}_i - \mathbf{v}^\star\|^2 = 2 - 2\, \mathbf{z}_i^\top \mathbf{v}^\star$, hence
\[
\mathbf{z}_i^\top \mathbf{v}^\star = 1 - \tfrac{1}{2} \|\mathbf{z}_i - \mathbf{v}^\star\|^2 \;\ge\; 1 - \frac{\varepsilon^2}{2}.
\]
Averaging over $i$ gives
\[
\mathbf{v}^{\star\!\top} \mathbf{u} = \frac{1}{n} \sum_{i=1}^n \mathbf{z}_i^\top \mathbf{v}^\star \;\ge\; 1 - \frac{\varepsilon^2}{2}.
\]
If $\varepsilon < \sqrt{2}$ then $\mathbf{v}^{\star\!\top} \mathbf{u} > 0$, so $\mathbf{u} \neq \mathbf{0}$ and $\mathbf{v}$ is well-defined. Using $\|\mathbf{u}\| \le 1$,
\[
\mathbf{v}^{\star\!\top} \mathbf{v} = \frac{\mathbf{v}^{\star\!\top} \mathbf{u}}{\|\mathbf{u}\|} \;\ge\; \mathbf{v}^{\star\!\top} \mathbf{u} \;\ge\; 1 - \frac{\varepsilon^2}{2}.
\]
Finally, 
\[
\|\mathbf{v} - \mathbf{v}^\star\|^2 = 2\bigl(1 - \mathbf{v}^{\star\!\top} \mathbf{v}\bigr)
\;\le\; 2\left(1 - \Bigl(1 - \frac{\varepsilon^2}{2}\Bigr)\right) = \varepsilon^2,
\]
which yields $\|\mathbf{v} - \mathbf{v}^\star\| \le \varepsilon$.
\end{proof}

\section*{Acknowledgments}
This work was supported by National Science Foundation Grants DMS-2238486 and DMS-2511283; Air Force Office of Scientific Research Grant AFOSR-FA9550-20-1-0288, FA9550-21-1-0317, and FA9550-23-1-0445. 
\printbibliography

\end{document}